\documentclass[11pt]{article}
\usepackage{enumerate}
\usepackage[OT1]{fontenc}
\usepackage[usenames]{color}
\usepackage{smile}
\usepackage{amsmath}
\usepackage{amssymb}
\usepackage{amsthm}
\usepackage{bm}
\usepackage[colorlinks,
            linkcolor=red,
            anchorcolor=blue,
            citecolor=blue
            ]{hyperref}
\usepackage{fullpage}
\usepackage[protrusion=true,expansion=true]{microtype}
\usepackage{pbox}
\usepackage{float}
\usepackage{enumitem}
\usepackage{graphicx}
\usepackage{subfigure}
\usepackage{pgfplots}
\pgfplotsset{compat=1.18}
\usepackage{booktabs} % for professional tables
\usepackage{pifont} % for \ding checkmarks/crosses

\newcommand{\KL}{\text{KL}}

\allowdisplaybreaks
\usepackage{colortbl}
\definecolor{LightCyan}{rgb}{0.8, 0.9, 1}
\definecolor{LightGray}{gray}{0.9}

\newcommand{\pif}{\pi_{\mathrm{F}}}
\newcommand{\pir}{\pi_{\mathrm{R}}}
\newcommand{\piref}{\pi_{\text{ref}}}

\usepackage{todonotes}
\newcommand{\todoqg}[2][]{}

\makeatletter
\newcommand*{\rom}[1]{\expandafter\@slowromancap\romannumeral #1@}
\makeatother
\title{\huge Understanding Off- vs On-Policy Distillation: A Tale of Distinct Training Objectives}
\author{
    Qiwei Di\thanks{Department of Computer Science, University of California, Los Angeles, CA 90095, USA; e-mail: {\tt qiwei2000@cs.ucla.edu}}
    ~~
    Xuheng Li\thanks{Department of Computer Science, University of California, Los Angeles, CA 90095, USA; e-mail: {\tt xuheng.li@cs.ucla.edu}} 
    ~~
    Kaixuan Ji\thanks{Department of Computer Science, University of California, Los Angeles, CA 90095, USA; e-mail: {\tt kaixuanji@cs.ucla.edu}} 
    ~~
    Chenggong Zhang\thanks{Department of Computer Science, University of California, Los Angeles, CA 90095, USA; e-mail: {\tt chenggong61@g.ucla.edu}}
    ~~
    \\
    Heyang Zhao\thanks{Department of Computer Science, University of California, Los Angeles, CA 90095, USA; e-mail: {\tt hyzhao@cs.ucla.edu}} 
    ~~
    Quanquan Gu\thanks{Department of Computer Science, University of California, Los Angeles, CA 90095, USA; e-mail: {\tt qgu@cs.ucla.edu}}
}
\begin{document}
    \date{}
    \maketitle
\begin{abstract}
On-policy distillation (OPD) learns from teacher feedback on student-generated responses and has shown promise in reducing forgetting relative to supervised fine-tuning (SFT). However, its benefits and fragility remain incompletely understood. We study sequential distillation from multiple teachers, where the student minimizes its average divergence from the teachers. Forward Kullback--Leibler (KL) divergence yields a weighted arithmetic mixture, while reverse KL yields a normalized weighted geometric aggregate. We develop algorithms that learn these targets under off-policy and on-policy feedback, respectively, establishing logarithmic regret bounds in the tabular setting and extending the analysis to function approximation. By analyzing these aggregation targets, we identify mechanisms that help explain both the benefits and fragility of OPD. Relative to forward KL, reverse KL can better retain a confident expert's preferences under uninformative feedback, but is more sensitive to teachers that assign very low probabilities to correct responses. Its token-level conditionals also reveal a dependence on continuation distributions that can favor incorrect prefixes over long horizons.
\end{abstract}
\section{Introduction}
Knowledge distillation transfers capabilities from teacher models to a student language model. A conventional approach is supervised fine-tuning (SFT) on teacher-generated responses, where the student learns to predict tokens along trajectories supplied by the teacher. On-policy distillation (OPD) \citep{agarwal2024policy,lu2025onpolicydistillation} instead samples responses from the current student and obtains teacher logits along the resulting trajectories. This allows the student to learn from teacher feedback on its own behavior, an approach that has been adopted in the post-training of large language models \citep{yang2025qwen3,xiao2026mimo,zeng2026glm}. Related studies show that on-policy reinforcement learning can reduce forgetting relative to SFT \citep{shenfeld2026rl,chen2025retaining} and, in some settings, recover capabilities lost during SFT \citep{jin2025rl}. 
% On-policy self-distillation likewise shows promise for continual learning, improving new-task performance while reducing forgetting relative to SFT \citep{shenfeld2026self}.
% Multi-teacher OPD has also been used to combine capabilities from specialized teachers into a single student \citep{ma2026mopd}.

Despite these advantages, OPD can also be fragile. It may fail under teacher--student mismatch and suffer from training instability over long generation horizons \citep{li2026rethinking}. However, a theoretical understanding of when OPD offers advantages over off-policy methods such as SFT and what limits those advantages remains to be established. Recent work has begun to clarify the conditions under which these advantages arise. For example, \citet{sriraman2026behavior} show that in a noisy-expert model, matching the clean expert's return can require exponentially many offline samples in the horizon, whereas an on-policy algorithm achieves polynomial dependence when the corruption is known. \citet{viano2026interaction} show that on-policy value-based imitation can efficiently match the expert's return under realizability of its action-value function, without requiring policy realizability. Together, these works establish conditions under which on-policy interaction offers advantages over offline learning. However, they do not directly explain the fragility observed in practical OPD, leaving a gap in our understanding of when and why it succeeds or fails.

This naturally raises a question: \emph{What are the pros and cons caused by the distinction between off-policy and on-policy feedback in knowledge distillation?}

% These successes raise a theoretical question: how does the distinction between off-policy and on-policy feedback affect the knowledge that a student ultimately acquires? The source of the training trajectories determines where feedback is observed, but this alone does not explain how the student reconciles teachers that differ in expertise or disagree on particular responses. Nor does it imply that on-policy learning always preserves useful knowledge. Empirical studies report that OPD can fail under teacher--student mismatch and identify difficulties with long-horizon distillation \citep{li2026rethinking}. Understanding both its benefits and its failure modes calls for examining the aggregation objectives that these feedback protocols support.
As a first step toward answering this question, we consider the setting where the student distills sequential data from multiple teachers, and the goal of the student is to minimize its average divergence from all teachers. Within this framework, we connect off- and on-policy distillation to forward- and reverse-Kullback--Leibler (KL) objectives, respectively. We then derive closed-form solutions for both objectives, offering a new perspective on the differences between SFT and OPD through the aggregation targets they induce. Our main contributions are listed as follows:
\begin{itemize}[leftmargin=*]
    \item We propose a sequential distillation framework in which the student minimizes its average divergence from all teachers, and characterize the resulting aggregation targets under forward and reverse KL divergence. Specifically, forward KL yields a weighted \textbf{arithmetic} mixture of the teacher distributions, while reverse KL yields a normalized weighted \textbf{geometric} mean. We also derive the token-level conditionals of both targets for autoregressive generation.

    \item We develop algorithms that learn the forward-KL target from off-policy data collection and the reverse-KL target from teacher logits on student-generated responses. In the tabular setting, we establish $\widetilde O(\log T)$ regret bounds for both protocols, with regret defined as the cumulative divergence between the student's policy at each round and the corresponding aggregation target. We further extend the logarithmic regret to the setting with general function approximation. As a result, these guarantees connect off-policy distillation to the forward-KL aggregation objective and on-policy distillation to the reverse-KL aggregation objective.

    \item We compare the two targets to examine the benefits and fragility of reverse-KL aggregation relative to forward-KL aggregation. First, we consider a setting with one confident expert teacher and several uninformative teachers and show that reverse-KL aggregation can retain a higher probability of the correct response than forward-KL aggregation. We also show that a misleading teacher can suppress correct responses under reverse KL, whereas forward KL preserves the expert's weighted contribution. Finally, by examining the token-level conditionals, we show that over long horizons, reverse-KL aggregation can favor an incorrect prefix even while preserving the expert's ranking of complete responses. These results offer possible explanations for the benefits and fragility of OPD observed in practice.
\end{itemize}
\noindent\textbf{Notation.}
For a positive integer $n$, let $[n]:=\{1,\ldots,n\}$. For a finite set $\mathcal S$, let $\Delta(\mathcal S)$ denote the set of probability distributions on $\mathcal S$. We write $\ind(E)$ for the indicator of an event $E$, and use $\EE$, $\PP$, and $\Var$ for expectation, probability, and variance, respectively. For distributions $p,q\in\Delta(\mathcal S)$, the Kullback--Leibler divergence is $\KL(p\|q):=\sum_{z\in\mathcal S}p(z)\log[p(z)/q(z)]$, where terms with $p(z)=0$ are zero, and $\KL(p\|q)=+\infty$ if $p(z)>0=q(z)$ for some $z$. We use $O(\cdot)$ for asymptotic upper bounds and $\widetilde O(\cdot)$ to hide logarithmic factors except $\log T$. The notation $a\lesssim b$ means $a\leq Cb$ for a universal constant $C>0$.

\section{Related Work}
\label{sec:related-work}

\paragraph{Variants of on-policy distillation.}
On-policy distillation trains a language model by having it generate responses and using a teacher's next-token distributions to supervise those responses \citep{agarwal2024policy,gu2024minillm}. Following the empirical success of OPD, subsequent work has explored improvements to its training objectives, the use of teacher feedback, and the construction of teachers. To improve the training objective, existing methods introduce skew KL or contrastive losses \citep{ko2024distillm,ko2025distillm}, combine forward and reverse KL using hybrid objectives or teacher--student agreement \citep{zhu2026hybrid,jin2026entropy,xing2026trust}, or construct intermediate distributions between the teacher and student as training targets \citep{jang2026stable,xie2026trust}. To make better use of teacher feedback, other methods use response outcomes to weight or calibrate supervision \citep{zheng2026scope,hou2026uni}, reduce variance or reshape distillation rewards \citep{oh2026kl,zhao2026poweropd}, and stabilize training through policy constraints and mixed rollout sources \citep{luo2026demystifying}. Data collection is also adapted through selective teacher intervention during student generation \citep{xu2025speculative} and early termination of student rollouts to reduce generation costs and avoid unreliable teacher feedback \citep{ziheng2026less,xin2026escaping,zhang2026fast}. Beyond these changes to objectives and training, the source of supervision has expanded from a separate teacher model to self-teachers conditioned on reasoning traces, demonstrations, additional context, or environmental feedback \citep{zhao2026self,shenfeld2026self,ye2026policy,hubotter2026reinforcement,liu2026self}. Multiple teachers can also supervise a shared student, whether they are independently trained domain experts \citep{ma2026mopd} or constructed through task-specific soft prompts \citep{ma2026one}. These extensions motivate studying how distillation combines knowledge from different teachers. We refer readers to \citet{song2026survey} for a comprehensive survey of OPD methods.

\paragraph{Understanding on-policy distillation.}
Recent studies examine both the advantages and the failure modes of on-policy distillation. For capability retention during RL fine-tuning, \citet{shenfeld2026rl} relate reduced forgetting to an implicit bias toward solutions close to the initial policy in KL divergence, while \citet{chen2025retaining} emphasize on-policy data and mode-seeking behavior. Empirical studies also show that stronger teachers and denser supervision do not necessarily yield better distillation. \citet{li2026rethinking} find that OPD gains depend on compatible reasoning patterns and the teacher offering capabilities beyond those already acquired by the student. They further observe that teacher guidance becomes less effective on longer student-generated prefixes, with training instability emerging at later tokens and spreading to earlier positions. \citet{wang2026demystifying} identify settings where teacher--student mismatch causes incorrect responses to receive higher average token-level rewards than correct ones. \citet{fu2026revisiting} identify three sources of instability in sampled-token OPD: predominantly negative token-level rewards that make learning sensitive to a small set of positively rewarded tokens; unreliable teacher feedback that can reinforce repetitive or meaningless continuations on student-generated prefixes; and tokenizer or special-token mismatches that penalize semantically valid outputs. The theoretical understanding of OPD is relatively limited. \citet{sriraman2026behavior} establish an exponential separation in horizon dependence between offline learning and an on-policy algorithm with known expert corruption, and \citet{viano2026interaction} obtain efficient imitation under expert action-value realizability without policy realizability. Our contribution is to characterize their distinct optimal targets when aggregating multiple teachers and connect those targets to learning guarantees. We then analyze how teacher confidence, misleading feedback, and autoregressive continuations affect the expert's preferences at these targets, providing possible mechanisms for both capability retention and fragility. 

\paragraph{Relevant RL theory.}
To turn our perspective on distinct aggregation objectives into formal learning guarantees, we rely on the established RL theory, including supervised learning, KL-regularized bandits, and reinforcement learning with function approximation. Our analysis of the off-policy distillation protocol builds on the classical theory of behavior cloning, which aims to learn a policy from expert demonstrations. Classical analyses characterize how prediction errors compound under distribution shift \citep{syed2010reduction,ross2010efficient}. DAgger addresses this issue by collecting expert feedback on learner-visited states, reducing imitation learning to no-regret online learning \citep{ross2011reduction}. Subsequent work further characterizes the statistical limits of imitation learning and the benefits of interaction \citep{rajaraman2020toward,rajaraman2021value}. More recently, \citet{foster2024behavior} establish guarantees for behavior cloning with logarithmic loss under realizability, highlighting the importance of the loss function and policy-class complexity. Our forward-KL guarantees draw on log-loss prediction methods, combining smoothed probability estimation in the tabular setting and exponentially weighted policy aggregation under function approximation with a new one-sided bounded martingale concentration inequality.

For on-policy distillation, our analysis draws on prior work establishing logarithmic regret for KL-regularized bandits and reinforcement learning. \citet{zhao2026sharp} establish $O(1/\epsilon)$ sample complexity in a sufficiently small-error regime under reference-policy coverage, and later \citet{zhao2025logarithmic} extend the results to obtain logarithmic regret in online contextual bandits and RL. For multi-armed bandits, \citet{ji2026near} sharpen the dependence on the number of arms and regularization strength through nearly matching regret upper and lower bounds. Similar fast-rate guarantees have also been established in the offline setting for multi-armed bandits \citep{ji2026optimal}, or with sufficient coverage conditions under KL and more general divergences~\citep{zhao2026towards,zhao2026fast}. This line of work has since been extended to multiple reference models \citep{aminian2026kl}, general preference models \citep{wu2026greedy}, game-theoretic settings \citep{nayak2025achieving}, differential privacy \citep{wu2025offline}, and misspecified models \citep{hong2026online}. We adapt these techniques to sequential distillation by treating teacher log-probability ratios as rewards and accounting for dependence among rollouts from the same sampled teacher. This yields logarithmic regret guarantees for learning the reverse-KL aggregation target from on-policy teacher feedback.

While we have established the connection between off- and on-policy distillation and their respective aggregation targets in the tabular setting, we seek to extend this connection to more general settings, drawing inspiration from prior work on general function approximation. In contextual bandits, \citet{russo2013eluder} introduce eluder dimension to quantify how past observations constrain predictions at unobserved actions. Subsequent work extends eluder-based analyses to model estimation through value-targeted regression \citep{ayoub2020model} and general value-function approximation \citep{wang2020reinforcement}. We use a generalized notion of eluder dimension developed in later work \citep{agarwal2023vo,zhao2024nearly,di2024pessimistic} and adapt it to the autoregressive setting.
\section{Preliminaries}
We study sequential distillation, in which a single student learns from multiple teachers over successive training rounds with the goal of aggregating their knowledge.

Let $\cX$ be a finite context space of size $S:=|\cX|$, $\cA$ a finite token space of size $A:=|\cA|$, and $H$ the generation horizon. We write $\cY := \cA^H$ for the response space. Each response $y=(a_1,\ldots,a_H)\in\cY$ is generated autoregressively, and we denote its length-$(h-1)$ prefix by $y_{<h}:=(a_1,\ldots,a_{h-1})$. We consider a sequential distillation setting with a finite collection of teacher models indexed by $\cI$, where $|\cI|=I$. For each $i\in\cI$, let $\rho_i\in\Delta(\cX)$ be its context distribution and let $p_i$ be an autoregressive teacher policy such that for any $h \in [H]$, the token conditionals are written as
\[
p_i(\cdot\mid x,y_{<h})\in\Delta(\cA).
\]
The induced sequence-level policy is
\[
p_i(y\mid x)
=
\prod_{h=1}^{H}p_i(a_h\mid x,y_{<h}).
\]
At round $t$, the student policy $\pi_t$ is determined by the observations from previous rounds. A latent teacher index $i_t$ is then drawn uniformly from $\cI$, independently of previous rounds. Conditional on $i_t$, we draw $m$ contexts independently from $\rho_{i_t}$. The entire batch shares this teacher: teacher responses and feedback are generated according to $p_{i_t}$. We consider two distillation protocols:
\begin{itemize}[leftmargin=*]
    \item (Off-policy Distillation) We sample $m$ contexts $\{x_{t,j}\} \sim \rho_{i_t}$ and full teacher rollouts $y_{t,j} \sim p_{i_t}(\cdot|x_{t,j})$. If we further have access to the teacher logits, at each teacher-generated prefix $y_{t,j,<h}$, we additionally observe the full logit vector $Z_{t,j,h}\in\RR^{\cA}$. The corresponding token log-probabilities are obtained by applying log-softmax:
    \begin{align*}
        \log p_{i_t}(a\mid x_{t,j},y_{t,j,<h})
        &=Z_{t,j,h}(a)-\log\sum_{b\in\cA}e^{Z_{t,j,h}(b)}, \qquad a\in\cA.
    \end{align*}
    % \item 
    % \todox{Explain why we study this setting, e.g., for fair comparison}
    % (Off-policy Distillation with Full Logits) We sample $m$ contexts $\{x_{t,j}\} \sim \rho_{i_t}$ and full teacher rollouts $y_{t,j} \sim p_{i_t}(\cdot|x_{t,j})$. At each teacher-generated prefix $y_{t,j,<h}$, we additionally observe the full logit vector $Z_{t,j,h}\in\RR^{\cA}$. The corresponding token log-probabilities are obtained by applying log-softmax:
    % \begin{align*}
    %     \log p_{i_t}(a\mid x_{t,j},y_{t,j,<h})
    %     &=Z_{t,j,h}(a)-\log\sum_{b\in\cA}e^{Z_{t,j,h}(b)}, \qquad a\in\cA.
    % \end{align*}
    \item (On-policy Distillation) We sample $m$ contexts $\{x_{t,j}\} \sim \rho_{i_t}$, use the current student policy $\pi_t$ to roll out with $y_{t,j} \sim \pi_t(\cdot|x_{t,j})$ token by token, and receive the per-token teacher log-probabilities
    \begin{align*}
        l_{t,j,h} = \log p_{i_t}(a_{t,j,h}\,|\,x_{t,j},\,y_{t,j,<h}), \qquad h \in [H].
    \end{align*}
\end{itemize} 
Rather than matching any single teacher selected in the current round, our goal is to learn a single autoregressive student policy that aggregates the behavior of all teacher models. This objective is designed to avoid catastrophic forgetting: sequentially adapting the student to the current teacher may reduce its loss on the corresponding task while degrading its performance on teachers encountered previously. To formalize the desired aggregate policy, let
\[
    D:\Delta(\cY)\times\Delta(\cY)
    \rightarrow \RR_{\geq 0}\cup\{+\infty\}
\]
be a divergence between distributions. Given the collection of teacher models
$\{(\rho_i,p_i)\}_{i\in\cI}$, we define the target policy by
\begin{align}
\label{eq:average-objective}
    \pi^*
    \in
    \argmin_{\pi:\cX\rightarrow\Delta(\cY)}
    \sum_{i\in\cI}
    \EE_{x\sim\rho_i}
    \big[
        D\big(
            \pi(\cdot\mid x)
            \,\big\|\,
            p_i(\cdot\mid x)
        \big)
    \big].
\end{align}
Here, $\pi(\cdot\mid x)$ and $p_i(\cdot\mid x)$ denote the induced distributions over complete responses in $\cY=\cA^H$. 
\begin{remark}
The initial student policy can also serve as a reference during training. In reinforcement learning from human feedback (RLHF), for example, the updated policy is often regularized toward its initial policy to limit how far it departs from that reference. To capture this role in our framework, we regard the reference policy as an additional teacher and include it in the teacher set as a modeling simplification. Under this interpretation, even distillation from a single external teacher can be viewed as learning from both that teacher and the initial student.
\end{remark}
Specifically, when $D$ is the reverse KL or forward KL divergence, the objective can be decomposed into a sum of token-level divergences using the chain rule.
% \begin{proposition}[Token-level decomposition]
% \label{prop:kl-chain}
% Fix $x$ and let $\pi(\cdot|x), p(\cdot|x)$ be two distributions on $\cY$. Then
% \begin{itemize}[leftmargin=*]
%     \item If $D(p\|q) = \KL(p\|q)$, then 
%     \begin{align*}
%     D\big(\pi(\cdot|x)\,\big\|\,p(\cdot|x)\big) &= \sum_{h=1}^H \EE_{y \sim \pi(\cdot|x)}\Big[\KL\big(\pi(\cdot|x,y_{<h})\,\big\|\,p(\cdot|x,y_{<h})\big)\Big].
%     \end{align*}
%     \item If $D(p\|q) = \KL(q\|p)$, then
%     \begin{align*}
%     D\big(\pi(\cdot|x)\,\big\|\,p(\cdot|x)\big) &= \sum_{h=1}^H \EE_{y \sim p(\cdot|x)}\Big[\KL\big(p(\cdot|x,y_{<h})\,\big\|\,\pi(\cdot|x,y_{<h})\big)\Big].
%     \end{align*}
% \end{itemize}
% Each summand depends on $y$ only through the prefix $y_{<h}$.
% \end{proposition}
In this work, we will mainly focus on the cases of forward KL $(D(p\|q)=\KL(q\|p))$ and reverse KL $(D(p\|q)=\KL(p\|q))$.
\paragraph{Regret.} Since teacher indices are sampled uniformly, the marginal context distribution is the uniform mixture of the teacher context distributions:
\begin{align*}
    \bar{\rho}(x)
    :=
    \frac{1}{I}\sum_{i\in\cI}\rho_i(x).
\end{align*}
For each choice of $D$, let $\pi^*$ denote the corresponding minimizer of
\eqref{eq:average-objective}. We define regret as the cumulative divergence of the student policies $\{\pi_t\}_{t=1}^T$ from this target, averaged over the marginal context distribution:
\begin{align}
    \operatorname{Regret}(T)
    :=
    \sum_{t=1}^T
    \EE_{x\sim\bar{\rho}}
    \left[
        D\bigl(
            \pi_t(\cdot\mid x)
            \,\big\|\,
            \pi^*(\cdot\mid x)
        \bigr)
    \right].
    \label{eq:regret}
\end{align}
SFT has a well-known forward-KL interpretation, while reverse KL is commonly used in OPD. Motivated by these connections, we take $D(p\|q)=\KL(q\|p)$ for off-policy distillation and $D(p\|q)=\KL(p\|q)$ for on-policy distillation.

%%%%%%%%%%%%%%%%%%%%%%%%%%%%%%%%%%%%%%%%%%%%%%%%%%%%%%%%%%%%%%%%%%%%%%%%
% NOTE: ``the next section'' in this section refers to ``Forward KL + Tabular setting'' (single-step)
\section{Off-policy Distillation \& Forward KL}
\label{sec:ar-fkl-tabular}
In this section, we study how off-policy feedback can be used to learn the forward-KL aggregate. We first characterize this target, then construct a tabular algorithm using teacher-generated responses, with or without full teacher logits, and establish logarithmic regret bounds. As the first step, the following theorem gives the target distribution and its token conditionals.
\begin{theorem}[Forward KL] 
\label{thm:obj-forward-kl}
If $D(p\|q) = \KL(q\|p)$, a minimizer of \eqref{eq:average-objective} is given at the sequence level, for each $x$ with $\bar\rho(x)>0$, by
\begin{align}
\label{eq:seq-fkl}
\pif^*(y|x) = \sum_{i \in \cI} w_i(x) p_i(y|x),
\end{align}
where $w_i(x) = \rho_i(x)/[\sum_{j \in \cI} \rho_j(x)]$. Moreover, for any $h \in [H]$, supposing $\sum_{j\in\cI}\rho_j(x)p_j(y_{<h}\mid x)>0$, the token-level distributions are given by
    \begin{align}
    \label{eq:ar-fkl}
        \pif^*(a|x,y_{<h}) = \sum_{i \in \cI} w_i(x,y_{<h})\, p_i(a|x,y_{<h}), \qquad w_i(x,y_{<h}) := \frac{\rho_i(x)\, p_i(y_{<h}|x)}{\sum_{j \in \cI} \rho_j(x)\, p_j(y_{<h}|x)}.
    \end{align}
\end{theorem}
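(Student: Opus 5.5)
The plan is to minimize \eqref{eq:average-objective} pointwise in $x$. With $D(p\|q)=\KL(q\|p)$, the objective becomes
\[
\sum_{i}\EE_{x\sim\rho_i}\bigl[\KL(p_i(\cdot|x)\,\|\,\pi(\cdot|x))\bigr]
=\sum_{x}\sum_i \rho_i(x)\,\KL(p_i(\cdot|x)\,\|\,\pi(\cdot|x)),
\]
and $\pi(\cdot|x)$ can be chosen independently for each $x$. Contexts with $\bar\rho(x)=0$ contribute nothing, so fix $x$ with $\sum_j\rho_j(x)>0$. Expanding the KL, the $x$-term equals
\[
\sum_i\rho_i(x)\sum_y p_i(y|x)\log p_i(y|x)\;-\;\Big(\sum_j\rho_j(x)\Big)\sum_y \pif^*(y|x)\log\pi(y|x),
\]
with $\pif^*$ as in \eqref{eq:seq-fkl}. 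The first term does not depend on $\pi$. The second term is a cross-entropy, which by Gibbs' inequality equals $\KL(\pif^*\|\pi)+H(\pif^*)$. It is therefore minimized exactly at $\pi(\cdot|x)=\pif^*(\cdot|x)$, and this gives the sequence-level claim.

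The one point that needs care is the handling of infinite values. I would note that the objective at $\pif^*$ is finite, because $p_i\ll\pif^*$ whenever $\rho_i(x)>0$. Any $\pi$ that assigns zero mass where some $p_i$ with $\rho_i(x)>0$ is positive gives $+\infty$. So the Gibbs argument only needs to be applied on the set where everything is finite.

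For the token-level formula, I would use the fact that the conditionals of a distribution on $\cA^H$ are ratios of prefix marginals:
\[
\pif^*(a|x,y_{<h})=\frac{\pif^*(y_{<h},a|x)}{\pif^*(y_{<h}|x)}.
\]
Summing \eqref{eq:seq-fkl} over the continuations gives the marginal $\pif^*(y_{<h}|x)=\sum_i w_i(x)p_i(y_{<h}|x)$. Here $p_i(y_{<h}|x)=\prod_{k<h}p_i(a_k|x,y_{<k})$ follows from the autoregressive factorization, and the same identity holds for the prefix of length $h$. The denominator is positive by the stated assumption. Writing $p_i(y_{<h},a|x)=p_i(y_{<h}|x)\,p_i(a|x,y_{<h})$ and cancelling the common factor $\sum_j\rho_j(x)$ in $w_i(x)$ then gives \eqref{eq:ar-fkl}, with weights proportional to $\rho_i(x)p_i(y_{<h}|x)$.

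I expect no step to be a real obstacle. The only delicate parts are the measure-zero and infinite-KL bookkeeping and making sure the minimizer is stated only where $\bar\rho(x)>0$ and the prefix has positive mass.
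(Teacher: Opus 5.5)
Your proposal is correct and follows essentially the same route as the paper. Both arguments reduce to a per-context problem and show that the $\pi$-dependent part of the objective is $\KL(\pif^*(\cdot|x)\,\|\,\pi(\cdot|x))$ up to a constant. The paper writes that constant as $\sum_i w_i(x)\KL(p_i\|\pif^*)$ rather than splitting off entropy terms, which is equivalent on a finite $\cY$. Both then obtain the token conditionals from the same prefix-marginal ratio, cancelling $\sum_j\rho_j(x)$.
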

\begin{remark}
The prefix-dependent weights admit a Bayesian interpretation. Under the
off-policy sampling protocol, the teacher index is first sampled uniformly,
the context is drawn from $\rho_{i_t}$, and the response is generated by
$p_{i_t}$. Therefore, for every prefix satisfying
$\sum_{j\in\cI}\rho_j(x)p_j(y_{<h}\mid x)>0$, Bayes' rule gives
\begin{align}
\notag
    w_i(x,y_{<h})
    =
    \PP\big(
        i_t=i
        \big|
        x_{t,j}=x,
        y_{t,j,<h}=y_{<h}
    \big).
\end{align}
Thus, $w_i(x,y_{<h})$ is the posterior probability that teacher $i$
generated the observed context and prefix. Consequently, teachers that better explain the observed prefix receive greater weight in the target policy's next-token distribution.
\end{remark}
\subsection{Algorithm Design}
The posterior weights above also determine the next-token distribution in teacher-generated data. For a teacher-generated rollout $(x_{t,j},y_{t,j})$, let $u\in\cA^{h-1}$ denote a prefix. Conditioning on $(x_{t,j},y_{t,j,<h})=(x,u)$ gives teacher weights $w_i(x,u)$. Hence, the next-token distribution is
\begin{align*}
    \PP\bigl(
        a_{t,j,h}=a
        \mid
        x_{t,j}=x,\,
        y_{t,j,<h}=u
    \bigr)
    &=
    \sum_{i\in\cI}
    w_i(x,u)p_i(a|x,u)=
    \pif^*(a|x,u).
\end{align*}
Thus, at every visited prefix, the observed next token is a sample from the corresponding token conditional of the forward-KL target, which suggests estimating each target conditional from the token frequencies observed at the corresponding context--prefix pair. When full logits are available, each sampled token can instead be replaced by the teacher's conditional probability vector. To express both updates in a common form, define
\begin{align}
\notag
    q_{t,j,h}(a):=
    \begin{cases}
        \ind(a_{t,j,h}=a)
        & \mathrm{w/o \ logit},\\
        \frac{\exp(Z_{t,j,h}(a))}
             {\sum_{b\in\cA}\exp(Z_{t,j,h}(b))},
        & \mathrm{w/ \ logit}.
    \end{cases}
\end{align}
We then define the per-round normalized counts as
\begin{align}
\label{eq:fkl-count}
    c_t(x,u,a)
    :=
    \frac{1}{m}\sum_{j=1}^m
    \ind(x_{t,j}=x,\,y_{t,j,<h}=u)\,
    q_{t,j,h}(a).
\end{align}
Here, the token position $h=|u|+1$ is determined by the prefix $u$ and is hidden from the notation $c_t(x,u,a)$. The algorithm accumulates these counts after each round according to
\begin{align*}
    W_t(x,u,a)
    &:=
    \sum_{s=1}^t c_s(x,u,a), \
    W_t(x,u)
    :=
    \sum_{a\in\cA}W_t(x,u,a),
\end{align*}
and outputs the conditional policy
\begin{align}
    \hat\pi_{t+1}(a|x,u)
    :=
    \frac{W_t(x,u,a)+1/2}
         {W_t(x,u)+A/2}.
    \label{eq:off-policy-smoothed-estimator}
\end{align}
Here we add a $(1/2)$-smoothing to each accumulated weight, following the spirit of the classical Krichevsky--Trofimov estimator \citep{krichevsky1981performance}, which prevents zero predicted probabilities and assigns the uniform distribution to unvisited states.
\begin{algorithm}[H]
\caption{Per-Prefix Forward KL}
    \begin{algorithmic}[1]\label{algo:ar-fkl}
    \STATE \textbf{Input:} Context set $\cX$, token set $\cA$, horizon $H$.
    \STATE Initialize $W_0(x,u,a)=0$; equivalently, $\hat\pi_1(a|x,u)=1/A$ at every active prefix.
    \FOR{$t=1,\ldots, T$}
    \STATE Observe the batch of teacher rollouts $\{(x_{t,j},y_{t,j})\}_{j=1}^{m}$ and, when available, the teacher logits $\{Z_{t,j,h}\}_{j,h}$.
    \STATE For every level-$h$ state $(x,u)$ and token $a$, set $c_t(x,u,a)$ as in \eqref{eq:fkl-count}.
    \STATE Update the weight $W_t(x,u,a) = W_{t-1}(x,u,a)+c_t(x,u,a)$ and $W_t(x,u) = \sum_{a} W_t(x,u,a)$.
    \STATE Let $\hat\pi_{t+1}(a|x,u)$ be as defined in \eqref{eq:off-policy-smoothed-estimator}.
    \ENDFOR
    \STATE Output $\{\hat\pi_t\}_{t=1}^T$.
    \end{algorithmic}
\end{algorithm}
\subsection{Theoretical Guarantees}
We now bound the cumulative forward-KL divergence from the target $\pif^*$ to the policies produced by Algorithm~\ref{algo:ar-fkl}. The following theorem gives logarithmic regret bounds under both feedback protocols, with expectation and probability taken over the teachers and rollouts sampled during training.
\begin{theorem}
\label{thm:ar-fkl-regret}
Let $T\geq2$, and define the regret in \eqref{eq:regret} with $D(p\|q)=\KL(q\|p)$. Under the off-policy distillation protocol, both with and without access to full teacher logits, Algorithm~\ref{algo:ar-fkl} satisfies
\begin{align}
\notag
    \EE\big[\operatorname{Regret}(T)\big]
    &
    % =\sum_{t=1}^T
    % \EE\Big[
    %     \EE_{x\sim\bar\rho} \
    %     \KL\big[
    %         \pif^*(\cdot|x)
    %         \big|
    %         \hat\pi_t(\cdot|x)
    %     \big]
    % \Big]
    \notag
    \le 4SA^H \log T.
\end{align}
Moreover, for any $\delta\in(0,1)$, with probability at least $1-\delta$, Algorithm~\ref{algo:ar-fkl} satisfies
\begin{align}
    \operatorname{Regret}(T)
    \leq
    8S A^H\log T
    +
    4H\bigl(1+\log(2T+A)\bigr)
    \log\frac1\delta.
\notag
\end{align}
\end{theorem}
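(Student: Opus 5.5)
Our plan is to reduce the sequence-level forward-KL regret to a sum of per-state online log-loss problems and then bound each one with the Krichevsky--Trofimov (KT) analysis.

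\textbf{Step 1: Chain rule.} For each $x$ with $\bar\rho(x)>0$, we decompose
\[
\KL\big(\pif^*(\cdot|x)\,\|\,\hat\pi_t(\cdot|x)\big)=\sum_{h=1}^H\sum_{u\in\cA^{h-1}}\pif^*(u|x)\,\KL\big(\pif^*(\cdot|x,u)\,\|\,\hat\pi_t(\cdot|x,u)\big).
\]
Here $\pif^*(u|x)$ is the prefix marginal, and Theorem~\ref{thm:obj-forward-kl} identifies the conditionals. Weighting by $\bar\rho(x)$, the coefficient $\bar\rho(x)\pif^*(u|x)=\frac1I\sum_i\rho_i(x)p_i(u|x)=:\nu(x,u)$ is exactly the probability that a single teacher rollout visits state $(x,u)$. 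Hence
\[
\operatorname{Regret}(T)=\sum_{(x,u)}\nu(x,u)\sum_{t=1}^T\KL\big(\pif^*(\cdot|x,u)\,\|\,\hat\pi_t(\cdot|x,u)\big).
\]
By the remark after Theorem~\ref{thm:obj-forward-kl}, given a visit to $(x,u)$, the observed $q_{t,j,h}$ has conditional mean $\pif^*(\cdot|x,u)$ in both protocols. In the logit case, $\EE[p_{i_t}(\cdot|x,u)\mid \text{visit}]=\sum_i w_i(x,u)p_i(\cdot|x,u)$.

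\textbf{Step 2: Per-state bound in expectation.} Fix $(x,u)$, write $p^*=\pif^*(\cdot|x,u)$, and let $c_t=c_t(x,u,\cdot)$. Then $\EE[c_t\mid\mathcal F_{t-1}]=\nu(x,u)\,p^*$ entrywise. This holds even though teacher draws are shared within a round, because $i_t$ is independent of the past. Since $\hat\pi_t$ is $\mathcal F_{t-1}$-measurable,
\[
\nu\,\KL(p^*\|\hat\pi_t)=\EE\Big[\sum_a c_t(a)\log\frac{p^*(a)}{\hat\pi_t(a)}\,\Big|\,\mathcal F_{t-1}\Big].
\]
So the expected per-state regret equals the expected weighted log-loss regret of the KT-type estimator against the fixed comparator $p^*$, on fractional count vectors $c_t$ with total mass $|c_t|\le1$.

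The key lemma to prove is a KT bound for fractional masses. For any sequence of nonnegative vectors with $\sum_a c_t(a)\le 1$,
\[
\sum_t\sum_a c_t(a)\log\frac{p^*(a)}{\hat\pi_t(a)}\le \frac{A-1}{2}\log(1+W_T)+O(A),
\]
with a clean form such as $\le 2A\log T$ once $W_T\le T$. We would prove it by comparing to the Dirichlet$(1/2)$ mixture. The mixture's cumulative log-loss is $\log\frac{\Gamma(W_T+A/2)\Gamma(1/2)^A}{\Gamma(A/2)\prod_a\Gamma(W_T(a)+1/2)}$. The sequential plug-in estimator with fractional increments is dominated by this mixture loss, by the log-convexity/Gamma-ratio inequality $\Gamma(z+c)/\Gamma(z)\ge z^c$ for $c\in[0,1]$, applied per round. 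The mixture is then compared to the best $p^*$ via the standard Stirling bound.

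Summing over states, $\sum_{(x,u)}\nu\cdot(\text{regret})$ is at most the number of states times $2A\log T$. The number of states is $\sum_h SA^{h-1}\le 2SA^{H-1}$, which gives $\le 4SA^H\log T$.

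\textbf{Step 3: High probability.} Write $\operatorname{Regret}(T)=\sum_t \EE[L_t\mid\mathcal F_{t-1}]$, where $L_t=\sum_{(x,u)}\sum_a c_t(x,u,a)\log\frac{p^*(a)}{\hat\pi_t(a)}$. Then $\sum_t L_t\le 4SA^H\log T$ pathwise by Step 2, so it remains to control the martingale $\sum_t(\EE[L_t|\mathcal F_{t-1}]-L_t)$.

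The main obstacle is that $L_t$ is unbounded below in neither direction a priori. However, the negative part of the loss, $-\log\hat\pi_t(a)$, is bounded by $\log(2T+A)+\log 2$ thanks to smoothing, since $\hat\pi_t(a)\ge \frac{1/2}{t+A/2}$. Also, each rollout contributes one visited state per level, so the relevant quantity $-L_t$ is bounded above by $H(1+\log(2T+A))$. We would apply a one-sided Freedman/Bernstein-type inequality (the exponential-supermartingale bound $\EE[e^{\lambda(\EE L-L)}]$ with a variance term controlled by a multiple of the conditional mean of a KL-like quantity). This gives
\[
\sum_t\EE[L_t|\mathcal F_{t-1}]\le 2\sum_t L_t+4H(1+\log(2T+A))\log(1/\delta).
\]
Combined with the pathwise bound, this yields the second claim. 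The delicate point, which we would handle using $-\log$-ratio bounds such as $\EE[e^{-\frac12 \log(p^*/\hat\pi)}]\le 1$ (a Hellinger-type argument), is the variance control for the possibly negative increments.
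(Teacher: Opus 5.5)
Your architecture matches the paper's. You reduce to per-state weighted log-loss regret of the $\tfrac12$-smoothed estimator on fractional counts with $\EE[c_t(x,u,\cdot)\mid\mathcal F_{t-1}]=\nu(x,u)\pif^*(\cdot|x,u)$. You then bound each state by comparing to the Dirichlet$(\tfrac12)$ mixture and applying Stirling, and finish with a one-sided Bernstein/self-bounding step for high probability.

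\textbf{The gap is in your key lemma, exactly where fractional mass matters.} Write $z_a=W_{t-1}(a)+\tfrac12$, $Z=\sum_a z_a$, $c_a=c_t(a)$, and $\alpha_t=\sum_a c_a\le1$. The per-round domination you need is
\[
\frac{\mathrm B\bigl(W_t(\cdot)+\tfrac12\mathbf 1\bigr)}{\mathrm B\bigl(W_{t-1}(\cdot)+\tfrac12\mathbf 1\bigr)}
=\frac{\prod_a\Gamma(z_a+c_a)/\Gamma(z_a)}{\Gamma(Z+\alpha_t)/\Gamma(Z)}
\le\prod_a\Bigl(\frac{z_a}{Z}\Bigr)^{c_a}=\prod_a\hat\pi_t(a|x,u)^{c_a}.
\]
The inequality you invoke, $\Gamma(z+c)/\Gamma(z)\ge z^c$ for $c\in[0,1]$, is false. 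Log-convexity of $\Gamma$ gives the reverse, $\Gamma(z+c)\le z^c\Gamma(z)$; for instance, $z=1$, $c=\tfrac12$ gives $\Gamma(3/2)\approx0.886<1$. The correct direction controls the numerator. The denominator, however, would then need $\Gamma(Z+\alpha_t)\ge Z^{\alpha_t}\Gamma(Z)$, which fails for every $\alpha_t\in(0,1)$. That is the generic case, since $\alpha_t$ is the fraction of the $m$ rollouts that visit $(x,u)$ in round $t$. So pointwise Gamma-ratio bounds applied per round close only when $\alpha_t\in\{0,1\}$.

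The paper avoids this by recognizing the left-hand side as the Dirichlet moment $\EE_{q\sim\mathrm{Dir}(W_{t-1}(\cdot)+\frac12\mathbf 1)}\bigl[\prod_a q(a)^{c_t(a)}\bigr]$. It then applies generalized H\"older with exponents $1/c_t(a)$ and one extra exponent $1/(1-\alpha_t)$ on the constant function $1$. This uses exactly $\alpha_t\le1$ and gives the bound with no loss. Alternatively, Wendel's lower bound $\Gamma(Z+\alpha)\ge Z^{\alpha}\bigl(Z/(Z+\alpha)\bigr)^{1-\alpha}\Gamma(Z)$ would also work. It costs an extra $O(\log(1+W_T))$ per state, which keeps the rate but not your constants.

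\textbf{In Step 3 you have the bounded side reversed.} Smoothing gives $\hat\pi_t(a|x,u)\ge1/(2T+A)$. Since $\log\pif^*\le0$ and the $c_t$ sum to $H$ over all levels, what holds is $L_t\le H\log(2T+A)$. By contrast, $-L_t$ has no instance-independent upper bound: it is large whenever an observed token has tiny $\pif^*$-probability. Those unbounded excursions are exactly the large deviations of $\EE[L_t\mid\mathcal F_{t-1}]-L_t$ you must control. They are handled by the central condition $\EE[e^{-L_t/H}\mid\mathcal F_{t-1}]\le1$, which follows from weighted AM--GM with weights $c_t(x,u,a)/H$ summing to one across levels, together with $\sum_a\hat\pi_t(a|x,u)\le1$. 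The bounded side $-L_t/H\ge-\log(2T+A)$ then turns this into a variance bound of order $(1+\log(2T+A))$ times the conditional mean, as in Lemma~\ref{lem:central-to-mgf}. Your Hellinger exponent $\tfrac12$ also works, but it yields $4H(2+\log(2T+A))\log(1/\delta)$; exponent one is what gives the stated constant.
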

For fixed $S$, $A$, and $H$, Theorem~\ref{thm:ar-fkl-regret} implies that the forward-KL divergence from $\pif^*$ to $\hat\pi_t$, averaged over contexts and training rounds, decays as $O(\log T/T)$ in expectation and with high probability. Thus, off-policy feedback allows the student to learn the forward-KL aggregate in this average-divergence sense. The factor $SA^H$ reflects the size of the effective state--action space: at step $h$, each context--prefix pair $(x,u)\in\cX\times\cA^{h-1}$ acts as a distinct state, yielding $SA^h$ state--action pairs. The exponential dependence on $H$ comes from the growth of the effective state space itself and the bound scales linearly with the total number of state--action pairs across all steps. %For the detailed proof, see Appendix~\ref{sec:proof-fkl-thm}.
%%%%%%%%%%%%%%%%%%%%%%%%%%%%%%%%%%%%%%%%%%%%%%%%%%%%%%%%%%%%%%%%%%%%%%%%
\section{On-policy Distillation \& Reverse KL}
\label{sec:ar-rkl-tabular}
In this section, we study how on-policy feedback can be used to learn the reverse-KL aggregate. With the same structure as the last section, we first characterize this target, then construct a tabular algorithm using teacher log-probabilities evaluated along student-generated responses. The following theorem gives the target distribution and its token conditionals.
\begin{theorem}[Reverse KL] 
\label{thm:obj-reverse-kl}
Let $w_i(x) = \rho_i(x)/[\sum_{j \in \cI} \rho_j(x)]$. Define the unnormalized token-level geometric mean
    \begin{align*}
        g_h(a|x,y_{<h}) &:= \prod_{i \in \cI} p_i(a|x,y_{<h})^{w_i(x)}.
    \end{align*}
We define the value function iteratively backward as follows: for any $x \in \cX$, $y \in \cA^H$,
\begin{align*}
    V_{H+1}(x,y) := 1, \qquad V_h(x,y_{<h}) &:= \sum_{a \in \cA} g_h(a|x,y_{<h})\, V_{h+1}\big(x,(y_{<h},a)\big), \qquad h = H,\ldots,1.
\end{align*}
Then the sequence-level reverse-KL minimizer is unique and is given by
\begin{align}
    \label{eq:seq-rkl}
    \pir^*(y|x)
    =
        \prod_{i\in\cI}  p_i(y|x)^{w_i(x)}/
        V_1(x)
    .
\end{align}
Moreover, we assume that at every prefix $y_{<h}$, $V_h(x,y_{<h})>0$. Then, the token-level distribution is unique, i.e., 
\begin{align}
\label{eq:ar-rkl}
    \pir^*(a|x,y_{<h}) = g_h(a|x,y_{<h})\cdot V_{h+1}\big(x,(y_{<h},a)\big)/V_h(x,y_{<h}).
\end{align}
\end{theorem}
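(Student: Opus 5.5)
The plan is to decouple the objective across contexts and reduce it, for each fixed $x$, to a single reverse-KL (Gibbs variational) problem against the normalized geometric mean. Then I will read off the token conditionals from the backward recursion defining $V_h$.

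\textbf{Step 1: per-context objective.} Write $\rho(x):=\sum_j\rho_j(x)$. Exchanging sums, the objective in \eqref{eq:average-objective} with $D(p\|q)=\KL(p\|q)$ becomes
\[
\sum_x\sum_i\rho_i(x)\,\KL\big(\pi(\cdot|x)\|p_i(\cdot|x)\big).
\]
Contexts with $\rho(x)=0$ do not contribute. For the remaining contexts the objective separates, so it suffices to minimize, for each $x$ with $\rho(x)>0$,
\[
F_x(\pi):=\sum_i w_i(x)\,\KL(\pi\|p_i)
=\sum_y\pi(y)\Big[\log\pi(y)-\sum_i w_i(x)\log p_i(y|x)\Big].
\]
Here I use $\sum_i w_i(x)=1$.

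\textbf{Step 2: Gibbs identity.} Set $G(y):=\prod_i p_i(y|x)^{w_i(x)}$ and $Z:=\sum_y G(y)$. Because $\log\prod_i p_i^{w_i}=\sum_i w_i\log p_i$, we get
\[
F_x(\pi)=\KL(\pi\|G/Z)-\log Z,
\]
with the convention $0^{0}=1$ for teachers with $w_i(x)=0$. This requires $Z>0$. If $Z=0$, every $\pi$ puts mass on some $y$ with $p_i(y|x)=0$ for some $i$ with $w_i>0$, so the objective is $+\infty$ for all $\pi$; I would treat this as implicitly excluded, or note it separately. When $Z>0$, $F_x$ is minimized uniquely at $\pi=G/Z$ by nonnegativity of KL and the fact that $\KL(\pi\|q)=0$ iff $\pi=q$.

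\textbf{Step 3: identify $Z$ with $V_1$.} Expanding $p_i(y|x)=\prod_h p_i(a_h|x,y_{<h})$ gives $G(y)=\prod_h g_h(a_h|x,y_{<h})$. A backward induction on $h$ then shows
\[
V_h(x,y_{<h})=\sum_{a_h,\dots,a_H}\prod_{k\ge h}g_k(a_k|x,y_{<k}).
\]
Hence $V_1(x)=Z$, which yields \eqref{eq:seq-rkl}. Uniqueness holds at the sequence level for every $x$ with $\bar\rho(x)>0$.

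\textbf{Step 4: token conditionals.} Under the assumption $V_h>0$, the prefix marginal is
\[
\pir^*(y_{<h}|x)=\prod_{k<h}g_k(a_k|x,y_{<k})\,V_h(x,y_{<h})/V_1(x),
\]
obtained by summing the suffix using Step 3. Dividing the marginal of $(y_{<h},a)$ by that of $y_{<h}$, the common prefix product cancels and we obtain \eqref{eq:ar-rkl}. The positivity assumption ensures every prefix has positive mass, so the conditionals are uniquely determined. The telescoping product $\prod_h g_h V_{h+1}/V_h=G/V_1$ can serve as a consistency check.

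\textbf{Main obstacle.} The only delicate point is the zero-probability bookkeeping: the $0^0$ convention, the infinite values of KL, and justifying uniqueness. I would handle this by restricting attention to $\pi\ll G$, since otherwise the objective is infinite, and then applying the Gibbs identity on the support of $G$.
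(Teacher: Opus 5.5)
Your proof follows the paper's argument essentially step for step: the per-context reduction, the decomposition $F_x(\pi)=\KL(\pi\|G/Z)-\log Z$ restricted to $\pi\ll G$, backward induction identifying $V_h$ with the continuation sum so that $Z=V_1(x)$, and the ratio of prefix marginals for the token conditionals.

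One claim in Step 4 is false, however. The assumption $V_h>0$ at every prefix does not imply that every prefix has positive mass, because the factor $\prod_{k<h}g_k(a_k\mid x,y_{<k})$ can still vanish. For example, take $H=2$ and $\cA=\{a,b\}$. Let a positive-weight teacher set $p_1(b\mid x)=0$, and let both teachers be uniform at the second step. Then $V_2(x,a)=V_2(x,b)=1$ and $V_1(x)>0$, yet $\pir^*(b\mid x)=0$.

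At such a null prefix the conditional is not determined by the sequence law. Uniqueness can therefore only be asserted at positive-probability prefixes, which is how the paper qualifies the statement. What you need at null prefixes with $V_h>0$ is different. First, the right-hand side of \eqref{eq:ar-rkl} is still a valid distribution, since its numerator sums to $V_h$ by the recursion. Second, your telescoping identity $\prod_h g_hV_{h+1}/V_h=G/V_1$ shows that these chosen conditionals generate $\pir^*$. So what you called a \emph{consistency check} is in fact the step that closes the argument there.
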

\begin{remark}
In contrast with the last section, the teacher weights $w_i(x)$ depend only on the context. However, the token conditional is not obtained by normalizing $g_h$ alone: each token score is also multiplied by $V_{h+1}$, which sums the products of subsequent scores over all possible continuations. To summarize, the target accounts for both the current token and the continuations that follow it.
\end{remark}

\subsection{Algorithm Design}
To control the estimation error of teacher feedback, we assume that teacher token log-probabilities differ from those of a fixed reference policy by a bounded amount.
\begin{assumption}
\label{assump:token-bound}
Let $B>0$. There exists a fixed full-support reference policy $\piref$ such that, for every
$i\in\cI$, $h\in[H]$, and
$(x,u,a)\in\cX\times\cA^{h-1}\times\cA$,
\begin{align*}
    \big|
        \log
        [{p_i(a|x,u)}/
             {\piref(a|x,u)}]
    \big|
    \leq B.
\end{align*}
\end{assumption}
No such assumption is required for our forward-KL analysis, since the feedback consists of token indicators or teacher probabilities, both bounded in $[0,1]$. Here, we instead estimate teacher log-probabilities, which can be unbounded below. Assumption~\ref{assump:token-bound} imposes a two-sided bound on the token-level probability ratios between each teacher and the reference policy.

Let $\cF_t$ denote the observations available before round $t$, so that $\pi_t$ is $\cF_t$-measurable. For a student-generated rollout $(x_{t,j},y_{t,j})$, let $u\in\cA^{h-1}$ denote a prefix. At any visited context--prefix--token triple $(x,u,a)$, Bayes' rule gives
\begin{align*}
    &\PP\bigl(i_t=i\mid\cF_t,\,x_{t,j}=x,\,y_{t,j,<h}=u,\,a_{t,j,h}=a\bigr) = \frac{\rho_i(x)\pi_t(u|x)\pi_t(a|x,u)}
    {\sum_{k\in\cI}\rho_k(x)\pi_t(u|x)\pi_t(a|x,u)}
    = w_i(x).
\end{align*}
Given the context and past observations, the prefix and token are generated by the student policy, which is shared across teacher indices. Their probabilities therefore cancel, leaving the teacher weights unchanged.
Recall that the observed feedback is
$l_{t,j,h}=\log p_{i_t}(a|x,u)$. Its conditional mean satisfies
\begin{align}
    \EE\big[
        l_{t,j,h}
        \,\big|\,
        \cF_t,\,
        x_{t,j}=x,\,
        y_{t,j,<h}=u,\,
        a_{t,j,h}=a
    \big]
    &=
    \sum_{i\in\cI}
    w_i(x)\log p_i(a|x,u)=
    \log g_h(a|x,u).\notag
\end{align}
Visits to $(x,u,a)$ thus provide unbiased observations of the log-geometric score $\log g_h(a|x,u)$. Motivated by this, we estimate this score by averaging the feedback collected at the same triple. Define the cumulative visit count $N_t$, the cumulative feedback $S_t$, and the empirical mean $\bar l_t$ as
\begin{align}
\notag
    N_t(x,u,a)
    &:=
    \sum_{s=1}^t\sum_{j=1}^m
    \ind\bigl(
        x_{s,j}=x,\,
        y_{s,j,<h}=u,\,
        a_{s,j,h}=a
    \bigr),\\\notag
    S_t(x,u,a)
    &:=
    \sum_{s=1}^t\sum_{j=1}^m
    l_{s,j,h}
    \ind\bigl(
        x_{s,j}=x,\,
        y_{s,j,<h}=u,\,
        a_{s,j,h}=a
    \bigr),\\\label{eq:rkl-NSL}
    \bar l_t(x,u,a)
    &:=
    {S_t(x,u,a)}/\big(
         {N_t(x,u,a)\vee1}\big).       
\end{align}
To account for uncertainty in these empirical means, we construct optimistic estimates of the log-geometric scores. The following lemma bounds the estimation error at visited triples.
\begin{lemma}
\label{lem:ar-token-confidence}
Under Assumption~\ref{assump:token-bound}, fix any $\delta\in(0,1)$. With probability at least
$1-2\delta$, the following inequality holds for every $t\in[T]$ and every
$(x,u,a)$ with $N_t(x,u,a)\geq1$ simultaneously,
\begin{align}
    \big|
        \bar l_t(x,u,a)
        -
        \log g_h(a|x,u)
    \big|
    &\leq
    \beta_t(x,u,a), \notag
\end{align}
where $\beta_t(x,u,a):= \tilde O\big(B \sqrt{mH/(N_t(x,u,a) \vee 1)} + BmH/(N_t(x,u,a) \vee 1)\big)$.
\end{lemma}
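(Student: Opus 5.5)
Fix a triple $(x,u,a)$ with $h=|u|+1$ and center the feedback. Write $\tilde l_{s,j,h}:=l_{s,j,h}-\log\piref(a|x,u)$. By Assumption~\ref{assump:token-bound}, $|\tilde l_{s,j,h}|\le B$ on the event that $(x_{s,j},y_{s,j,<h},a_{s,j,h})=(x,u,a)$. The target becomes $\log g_h(a|x,u)-\log\piref(a|x,u)=\sum_i w_i(x)\log[p_i(a|x,u)/\piref(a|x,u)]$, so nothing changes except that the noise is now bounded.

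The difficulty is that within a round all $m$ rollouts share the same latent teacher $i_t$, so the $m$ observations are not conditionally independent. The plan is therefore to aggregate per round and build a martingale indexed by $t$. Define the round-$s$ increment
\[
X_s:=\sum_{j=1}^m \ind\bigl(x_{s,j}=x,\,y_{s,j,<h}=u,\,a_{s,j,h}=a\bigr)\bigl(\tilde l_{s,j,h}-\log g_h(a|x,u)+\log\piref(a|x,u)\bigr).
\]
The Bayes computation preceding the lemma gives $\EE[X_s\mid\cF_s]=0$, because conditioning on each visit leaves the posterior of $i_s$ equal to $w_i(x)$. Then $\sum_{s\le t}X_s=N_t(\bar l_t-\log g_h)$. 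Each $|X_s|\le 2Bn_s$, where $n_s\le m$ is the number of visits in round $s$. The conditional variance satisfies $\EE[X_s^2\mid\cF_s]\le 4B^2 m\,\EE[n_s\mid\cF_s]$, since $X_s^2\le (2B n_s)^2\le 4B^2 m n_s$.

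Next apply Freedman's inequality with a peeling or union argument over the random variance level. This gives, with probability $1-\delta'$, for all $t$,
\[
\Bigl|\sum_{s\le t}X_s\Bigr|\lesssim B\sqrt{m\textstyle\sum_{s\le t}\EE[n_s|\cF_s]\log(\cdot/\delta')}+Bm\log(\cdot/\delta').
\]
To turn the predictable sum $\sum\EE[n_s|\cF_s]$ into the realized count $N_t$, use a second one-sided Freedman bound on $n_s-\EE[n_s|\cF_s]$, whose increments lie in $[-m,m]$. This gives $\sum_{s\le t}\EE[n_s|\cF_s]\le 2N_t+O(m\log(\cdot/\delta'))$, holding with probability $1-\delta'$. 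These two events account for the $1-2\delta$ in the statement. Dividing by $N_t$ then yields $\beta_t\asymp B\sqrt{m\iota/N_t}+Bm\iota/N_t$.

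Finally, take a union bound over all triples and all $t\in[T]$. The number of triples is $\sum_h S A^h\le 2SA^H$, so $\log(\cdot/\delta')$ is about $H\log A+\log(ST/\delta)$. This is where the factor $H$ inside the confidence width comes from: the $\tilde O$ absorbs $\log(ST/\delta)$, and $\iota\lesssim H\cdot\mathrm{polylog}$ produces $\sqrt{mH/N}$ and $mH/N$. The main obstacle I expect is the conversion from conditional variance to the realized visit count. The batch correlation inflates the variance by a factor of $m$ rather than giving $N_t$ independent samples, and I will handle the uniformity over random $t$ with the standard peeling over dyadic variance levels.
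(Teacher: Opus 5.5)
\textbf{Overall.} Your route is essentially the paper's. You aggregate each round into one martingale increment; the paper uses $Y_t=X_t/m$ with $|Y_t|\le 2B$. You get a zero conditional mean from the posterior $w_i(x)$ at each visit, bound the conditional variance by $4B^2m\,\EE[n_s\mid\cF_s]$ (the paper does this via Jensen), and apply a time-uniform Bernstein/Freedman bound. You then convert the predictable visit count to $2N_t+O(m\log(\cdot))$ with a one-sided bound for $[0,m]$-valued increments, which is what Lemma~\ref{lem:foster} provides. Finally, you union over the $K=S\sum_h A^h$ triples, which produces the $H\log A$ inside the logarithm. The per-round aggregation, the Bayes argument and the $m$-inflated variance are all correct.

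\textbf{The step to change: the union bound over $t\in[T]$.} It puts $\log(ST/\delta)$ into $\iota$, and you then let $\widetilde O$ absorb it. In this paper, however, $\widetilde O$ hides logarithmic factors \emph{except} $\log T$. As written, your width is therefore $\beta_t=\widetilde O\bigl(B\sqrt{mH\log T/N_t}+BmH\log T/N_t\bigr)$, which is weaker than the statement. This is not cosmetic. In the proof of Theorem~\ref{thm:ar-rkl-regret}, $\beta_{t-1}^2$ is multiplied against the harmonic sum $\log(emT)$, so your width would give $\log^2 T$ regret rather than $\log T$.

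\textbf{The fix.} The union over $t$ is unnecessary, since your Freedman step already holds for all $t$. Both concentration steps are uniform in time through Ville's inequality applied to the corresponding exponential supermartingales. Dyadic peeling over the variance proxy needs only $O(\log T)$ levels, so it costs $\log\log T$, not $\log T$. This is how the paper proceeds: it uses the stitched sub-gamma bound of \citet{bakhtiari2026eluder} with $H_t=e+\log(1+T)$, and Lemma~\ref{lem:foster} in its time-uniform form. Drop the union over $t$ and the argument matches the statement.
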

As a result, we can define the optimistic estimate
% \begin{align*}
%     \hat l_t(x,u,a)
%     :=
%     \begin{cases}
%         \log\piref(a|x,u)
%         +
%         \big[
%             \bar l_t(x,u,a)
%             -\log\piref(a|x,u)
%             +\beta_t(x,u,a)
%         \big]_{[-B,B]},
%         & N_t(x,u,a)\geq1,\\[1mm]
%         \log\piref(a|x,u)+B,
%         & N_t(x,u,a)=0.
%     \end{cases}
% \end{align*}
\begin{align}
    \hat l_t(x,u,a)
    := \begin{cases}
        \bar l_t(x,u,a) + \min \big\{\beta_t(x,u,a), 2B\big\}& N_t(x,u,a) \ge 1,\\
         \log \piref(a|x,u) + B & N_t(x,u,a) = 0.
        \end{cases}
    \label{eq:optimistic-token-score}
\end{align}
At visited triples, Lemma~\ref{lem:ar-token-confidence} ensures that $\hat l_t(x,u,a)\geq\log g_h(a|x,u)$ on the confidence event. At unvisited triples, this inequality follows directly from Assumption \ref{assump:token-bound}. We then substitute these optimistic scores into the backward recursion defining the reverse-KL target in \eqref{eq:ar-rkl}:
\begin{align}
    \hat g_{t,h}(a|x,u)
    &:=
    \exp\bigl(\hat l_t(x,u,a)\bigr), \
    \hat V_{t,H+1}(x,y)
    :=
    1,\notag\\
    \hat V_{t,h}(x,u)
    &:=
    \sum_{a\in\cA}
    \hat g_{t,h}(a|x,u)
    \hat V_{t,h+1}\bigl(x,(u,a)\bigr),
    \qquad h=H,\ldots,1.
\notag
\end{align}
Therefore, for any $h$, any context $x$ and prefix $y_{<h} = u$, the output policy for the next token is defined as
\begin{align}
    \hat\pi_{t+1}(a|x,u)
    :=
    {
        \hat g_{t,h}(a|x,u)
        \hat V_{t,h+1}\bigl(x,(u,a)\bigr)
    }/{
        \hat V_{t,h}(x,u)
    }.
    \label{eq:plugin-policy}
\end{align}
The algorithm is outlined in Algorithm~\ref{algo:ar-rkl}.
\begin{algorithm}[H]
\caption{Per-Prefix Reverse KL}
\label{algo:ar-rkl}
\begin{algorithmic}[1]
    \STATE \textbf{Input:} Context set $\cX$, token set $\cA$, $H$, $B$, $\delta$, and $T$.
    \STATE Initialize 
    $\hat\pi_1(a|x,u)=1/A$ for any $(x,u,a)$.
    \FOR{$t=1,\ldots,T$}
        \STATE Observe the contexts
        $\{x_{t,j}\}_{j=1}^m$ generated from $\rho_{i_t}$.
        \STATE For each $j$, generate $y_{t,j}\sim\hat\pi_t(\cdot|x_{t,j})$
        and observe $\{l_{t,j,h}\}_{h=1}^H$.
        \STATE For each triple $(x,u,a)$, update $N_t(x,u,a)$, $S_t(x,u,a)$, and $\bar l_t(x,u,a)$ as in \eqref{eq:rkl-NSL}.
        \STATE Construct the optimistic estimate $\hat l_t$ according to
        \eqref{eq:optimistic-token-score}.
        \STATE Define $\hat\pi_{t+1}$ according to
        \eqref{eq:plugin-policy}.
    \ENDFOR
    \STATE Output $\{\hat\pi_t\}_{t=1}^T$.
\end{algorithmic}
\end{algorithm}
\subsection{Theoretical Guarantees}
We now bound the regret induced by Algorithm~\ref{algo:ar-rkl}. 
\begin{theorem}
\label{thm:ar-rkl-regret}
Let $T\geq2$, and define the regret in \eqref{eq:regret} with $D(p\|q)=\KL(p\|q)$. Under the on-policy distillation protocol and Assumption~\ref{assump:token-bound}, for any $\delta\in(0,1/3)$, with probability at least $1-3\delta$, Algorithm~\ref{algo:ar-rkl} satisfies
\begin{align*}
    \operatorname{Regret}(T)
    &\le \tilde O\big(B^2H^2SA^H \log T\big).
\end{align*}
\end{theorem}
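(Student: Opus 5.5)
We bound the regret by reducing it to a KL-regularized bandit analysis in which the teacher log-probabilities play the role of rewards, and then apply the logarithmic-regret techniques of \citet{zhao2025logarithmic} at the token level. Fix $x$. By Theorem~\ref{thm:obj-reverse-kl}, $\pir^*(y|x)=\exp(r^*(x,y))/V_1(x)$ with the sequence-level score $r^*(x,y):=\sum_h \log g_h(a_h|x,y_{<h})$. By construction, the plug-in policy \eqref{eq:plugin-policy} has the same form, $\hat\pi_t(y|x)=\exp(\hat r_{t-1}(x,y))/\hat V_{t-1,1}(x)$, with $\hat r_{t-1}(x,y):=\sum_h \hat l_{t-1}(x,y_{<h},a_h)$. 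Both are Gibbs distributions over $\cY$, so a direct computation gives
\begin{align*}
\KL\big(\hat\pi_t(\cdot|x)\,\|\,\pir^*(\cdot|x)\big)=\EE_{y\sim\hat\pi_t}\big[\hat r_{t-1}-r^*\big]-\log\frac{\hat V_{t-1,1}(x)}{V_1(x)}.
\end{align*}
Write $\Delta_t:=\hat r_{t-1}-r^*$. The function $\phi(s):=\log\sum_y e^{r^*+s\Delta_t}$ is convex, with $\phi'(1)=\EE_{\hat\pi_t}[\Delta_t]$ and $\phi''(s)=\Var_{\pi_s}(\Delta_t)$. Hence the KL above equals $\phi(0)-\phi(1)+\phi'(1)=\int_0^1 s\,\Var_{\pi_s}(\Delta_t)\,ds\le \sup_{s\in[0,1]}\EE_{\pi_s}[\Delta_t^2]$, where $\pi_s\propto \exp(r^*+s\Delta_t)$ interpolates between $\pir^*$ and $\hat\pi_t$. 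This quadratic (second-order) bound is what will produce a $\log T$ rate rather than $\sqrt T$.

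The key step is to use optimism to replace $\pi_s$ by $\hat\pi_t$. On the event of Lemma~\ref{lem:ar-token-confidence}, which holds with probability $1-2\delta$, the definition \eqref{eq:optimistic-token-score} gives $0\le \hat l_{t-1}-\log g_h\le \min\{2\beta_{t-1},4B\}$ at each triple. It follows that $\Delta_t\ge0$, and therefore $\pi_s(y)\le e^{s\Delta_t(y)}\pir^*(y)\cdot(\text{normalizer ratio}\le1)$. Combined with monotonicity, this lets us bound $\EE_{\pi_s}[\Delta_t^2]\le \EE_{\hat\pi_t}[\Delta_t^2]$. Following the standard argument for KL-regularized bandits, we would show that $\pi_s$-expectations of a nonnegative function that is monotone in $\Delta_t$ are dominated by the expectation under $\hat\pi_t=\pi_1$, since $\frac{d}{ds}\EE_{\pi_s}[f]=\mathrm{Cov}_{\pi_s}(f,\Delta_t)\ge0$ whenever $f$ is increasing in $\Delta_t$. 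Taking $f=\Delta_t^2$ with $\Delta_t\ge0$ gives exactly this. Cauchy--Schwarz over the $H$ tokens then yields
\begin{align*}
\KL\big(\hat\pi_t(\cdot|x)\,\|\,\pir^*(\cdot|x)\big)\le H\sum_{h}\EE_{y\sim\hat\pi_t}\big[\min\{4\beta_{t-1}^2,16B^2\}(x,y_{<h},a_h)\big].
\end{align*}
In words, the per-round regret is controlled by the squared bonuses along the student's own rollouts. This is exactly where on-policy sampling matters, because the data are collected from the very distribution that the bound is evaluated under.

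It then remains to sum over $t$ with an elliptical/pigeonhole argument. We average over $x\sim\bar\rho$ and note that round $t$ draws $m$ contexts from $\rho_{i_t}$ with $i_t$ uniform, so the marginal is $\bar\rho$. A martingale concentration argument, consuming the remaining $\delta$ of probability and using that each summand is bounded by $16B^2H$, converts $\sum_t\EE_{\bar\rho,\hat\pi_t}[\cdot]$ into $\frac1m\sum_t\sum_j(\cdot)$ evaluated at the realized visits, up to an additive $\tilde O(B^2H^2\log(1/\delta))$. With $\beta^2\lesssim B^2 mH/N$ (the $BmH/N$ term is dominated after capping), we obtain
\begin{align*}
\frac1m\sum_{t,j,h}\frac{B^2 mH}{N_{t-1}\vee1}\lesssim B^2H\sum_{(x,u,a)}\log T\le B^2H\,SA^H\log T\cdot\tilde O(1).
\end{align*}
Multiplying by the factor $H$ from Cauchy--Schwarz gives $\tilde O(B^2H^2SA^H\log T)$. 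Within a batch, $N$ is not updated between samples, which costs at most an extra additive $m$ per triple; this is absorbed using the $4B^2$ cap on those first visits.

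I expect the main obstacle to be the dependence introduced by the shared teacher $i_t$ within a batch and across tokens of a rollout. The $l_{t,j,h}$ are not independent given the visits, and this is the source of the $mH$ factor in $\beta_t$ and of the concentration step above. I would rely on Lemma~\ref{lem:ar-token-confidence} as given, and handle the regret-to-realized-count conversion with a Freedman-type inequality on the bounded per-round quantities, treating each round (rather than each sample) as one martingale increment.
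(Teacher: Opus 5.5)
Your proposal is correct and follows essentially the same route as the paper. Both proofs use the Gibbs-form KL identity, then optimism on the event of Lemma~\ref{lem:ar-token-confidence} to get $\Delta_t\ge0$ and hence $\KL(\hat\pi_t\|\pir^*)\lesssim\EE_{\hat\pi_t}[\Delta_t^2]$, then Cauchy--Schwarz over tokens, a per-round multiplicative concentration step (the paper uses Lemma~\ref{lem:foster}), and finally a capped harmonic-sum count in which the in-batch lag is absorbed by the $B^2$ cap. The only difference is how the quadratic bound is obtained: the paper derives $\KL\le\frac12\EE_{\hat\pi_t}[\Delta_t^2]$ directly from $e^{-v}\le1-v+v^2/2$ for $v\ge0$ and $\log x\le x-1$, whereas you use the integral Taylor remainder of the log-partition function along $\pi_s$ together with the monotonicity of $s\mapsto\EE_{\pi_s}[\Delta_t^2]$, which is equally valid.
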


For fixed $S$, $A$, $H$, $B$, and $\delta$, Theorem~\ref{thm:ar-rkl-regret} implies that the reverse-KL divergence from $\hat\pi_t$ to $\pir^*$, averaged over contexts and training rounds, decays as $\widetilde O(\log T/T)$ with high probability. Thus, on-policy feedback allows the student to learn the reverse-KL aggregate in this average-divergence sense. As in the off-policy setting, the factor $SA^H$ reflects the size of the effective state--action space. For the detailed proof, see Appendix~\ref{sec:proof-rkl-thm}.
\section{Comparison of Distinct Aggregation Objectives}
\label{sec:ar-target-comparison}
The preceding sections establish learning guarantees that connect off-policy distillation to forward-KL aggregation and on-policy distillation to reverse-KL aggregation. We now examine the differences between these two forms of aggregation and their implications for combining the capabilities of multiple teachers in a single policy.

% Throughout this section, fix a context $x$ with $\bar\rho(x)>0$ and write $w_i=w_i(x)$. We assume that $w_i>0$ for every $i\in\cI$ and that all teacher logits are finite, so that $p_i(y|x)>0$ for every $i\in\cI$ and $y\in\cY$.
\subsection{Uninformative Teachers}
Consider a context $x$ that falls within teacher $k$'s area of expertise but is unfamiliar to the remaining teachers. We represent this difference in coverage by assuming that $\rho_k(x)$ is large relative to $\sum_{i\neq k}\rho_i(x)$. The expert therefore receives a large aggregation weight $\alpha:=w_k(x) \in (0,1)$.
% \[
%     \alpha:=w_k(x)
%     =\frac{\rho_k(x)}{\rho_k(x)+\sum_{i\neq k}\rho_i(x)}
%     \in(0,1).
% \]

Write $p(\cdot|x):=p_k(\cdot|x)$ and $N:=|\cY|\geq2$, and let $q:=p(y^\star|x)$ denote the expert's probability of a desirable response $y^\star$. A value of $q$ close to one means that the expert already produces the correct response with high probability. We ask whether the aggregate can maintain this high probability when the other teachers are uninformative. To represent the remaining teachers' lack of information at $x$, we model their response distributions as uniform: $p_i(y|x)=1/N$ for every $i\neq k$ and $y\in\cY$. These teachers have total weight $1-\alpha$ and are equivalent, under either objective, to a single uniform teacher with this weight. The two aggregation targets are therefore
\begin{align}
% \label{eq:uniform-teacher-targets}
\notag
    \pif^*(y|x)
    &=\alpha p(y|x)+\frac{1-\alpha}{N},\quad
    \pir^*(y|x)
    =\frac{p(y|x)^\alpha}{\sum_{z\in\cY}p(z|x)^\alpha}.
\end{align}
For a concrete comparison, suppose $y^\star$ is the only correct response and $q\in(1/N,1)$, with the expert assigning uniform probability to each incorrect response. The following proposition shows that, for fixed teacher weights and response space, the comparison is determined by a confidence threshold.
\begin{proposition}
\label{prop:uniform-teacher-threshold}
Fix $N\geq2$ and $\alpha\in(0,1)$, and consider the teacher distributions above. If $N\geq3$, there exists a unique $q_{\mathrm c}=q_{\mathrm c}(N,\alpha)\in(1/N,1)$ such that, for every $q\in(1/N,1)$,
\[
    \pir^*(y^\star|x)>\pif^*(y^\star|x)
    \quad\Longleftrightarrow\quad q>q_{\mathrm c}.
\]
\end{proposition}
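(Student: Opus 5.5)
The plan is to write both target probabilities of $y^\star$ in closed form and compare them through their log-odds. The comparison then reduces to the sign of a single real function of $q$, which turns out to be first decreasing and then increasing.

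Under the stated model the expert puts mass $q$ on $y^\star$ and $(1-q)/(N-1)$ on each of the other $N-1$ responses. Substituting into the two displayed targets gives
\[
f(q):=\pif^*(y^\star|x)=\alpha q+\frac{1-\alpha}{N},
\qquad
r(q):=\pir^*(y^\star|x)=\frac{q^\alpha}{q^\alpha+(N-1)\bigl(\frac{1-q}{N-1}\bigr)^\alpha}.
\]
It is convenient to rewrite the second expression in odds form:
\[
\frac{r}{1-r}=\Bigl(\frac{q}{1-q}\Bigr)^{\alpha}(N-1)^{1-\alpha}.
\]
Both $r$ and $f$ lie in $(0,1)$, and $t\mapsto\log\frac{t}{1-t}$ is strictly increasing. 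Hence $r>f$ holds if and only if $\phi(q)>0$, where
\[
\phi(q):=\alpha\log\frac{q}{1-q}+(1-\alpha)\log(N-1)-\log\frac{f(q)}{1-f(q)} .
\]

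Next I record the boundary behaviour of $\phi$.
\begin{itemize}
\item At $q=1/N$ we have $f=1/N$, and $\phi(1/N)=\alpha\log\frac{1}{N-1}+(1-\alpha)\log(N-1)+\log(N-1)=0$. This matches the fact that the expert is then uniform, so both targets equal $1/N$.
\item As $q\to1$, the first term tends to $+\infty$ while $f\to\alpha+(1-\alpha)/N<1$ stays bounded away from $1$. Therefore $\phi(q)\to+\infty$.
\end{itemize}

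The key step is the derivative. Using $f'=\alpha$,
\[
\phi'(q)=\frac{\alpha}{q(1-q)}-\frac{\alpha}{f(1-f)},
\]
so $\phi'(q)>0$ if and only if $f(1-f)>q(1-q)$. Factoring gives $f(1-f)-q(1-q)=(f-q)(1-f-q)$. Here $f-q=(1-\alpha)(1/N-q)<0$ on $(1/N,1)$, so $\phi'(q)>0$ if and only if $f(q)+q>1$.

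The sum $f(q)+q=(1+\alpha)q+(1-\alpha)/N$ is strictly increasing in $q$.
\begin{itemize}
\item At $q=1/N$ it equals $2/N<1$ when $N\ge3$.
\item At $q=1$ it equals $1+\alpha+(1-\alpha)/N>1$.
\end{itemize}
So there is a unique $q_0\in(1/N,1)$ with $\phi'<0$ on $(1/N,q_0)$ and $\phi'>0$ on $(q_0,1)$.

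Combining these facts: $\phi$ starts at $0$ and strictly decreases on $(1/N,q_0]$, so $\phi<0$ there. It then strictly increases on $[q_0,1)$ from the negative value $\phi(q_0)$ to $+\infty$. By the intermediate value theorem and strict monotonicity there is exactly one zero $q_{\mathrm c}\in(q_0,1)$. Moreover $\phi(q)>0$ if and only if $q>q_{\mathrm c}$, which is the claimed equivalence.

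The main obstacle is finding a formulation in which the sign change is visibly unique; comparing $r$ and $f$ directly gives a transcendental inequality. The odds transformation turns the derivative into the factorable expression $(f-q)(1-f-q)$, and that is what makes the argument work. The same computation shows why $N\ge3$ is needed. When $N=2$, $f+q\ge1$ already at the left endpoint, so $\phi$ is increasing throughout and $\pir^*(y^\star|x)>\pif^*(y^\star|x)$ for every $q\in(1/N,1)$. In that case no threshold strictly inside the interval separates the two regimes.
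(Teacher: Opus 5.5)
This is essentially the paper's proof: up to a sign slip noted below, your $\phi$ is $-G$ for the paper's $G(q)=\log\frac{F}{1-F}-\log\frac{R}{1-R}$. The factorization $(f-q)(1-f-q)$, the turning point where $f+q=1$ (interior exactly when $N\geq3$), the zero at $q=1/N$, the divergence as $q\to1$, and the intermediate value argument all match the paper.

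The slip is in the odds form: $\frac{r}{1-r}=\bigl(\frac{q}{1-q}\bigr)^{\alpha}(N-1)^{-(1-\alpha)}$, so $\phi$ must carry $-(1-\alpha)\log(N-1)$. As written, your evaluation at $q=1/N$ actually equals $2(1-\alpha)\log(N-1)\neq0$. With the corrected sign, $\phi(1/N)=0$ genuinely holds, and nothing else changes, since this constant affects neither $\phi'$ nor the limit as $q\to1$.
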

The proof is given in Appendix~\ref{sec:proof-uniform-threshold}. Figure~\ref{fig:teacher-aggregation}(a) plots the two correct-response probabilities for $q>0.5$, with $N=10$ and $\alpha=0.9$ fixed.
\begin{figure}[!htbp]
    \centering
    \includegraphics[width=0.88\linewidth]{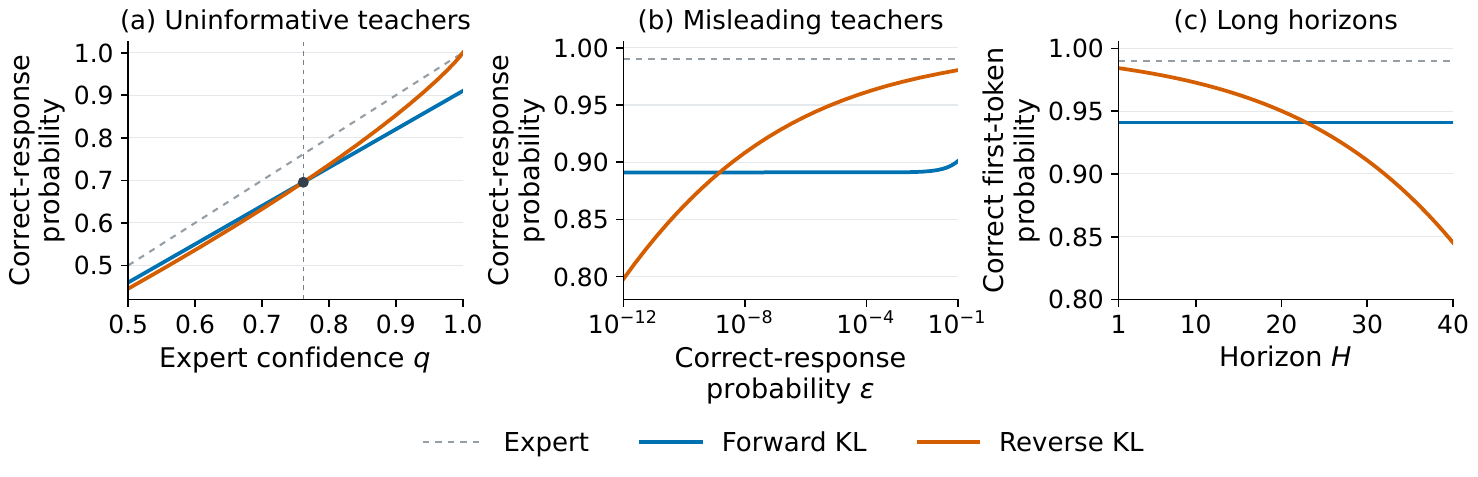}
    \caption{Forward- and reverse-KL aggregation with expert weight $\alpha=0.9$. Panels (a,b) use $N=10$ responses, with each teacher distributing its remaining probability uniformly over incorrect responses. (a) The other teachers are uniform, and expert confidence $q$ varies above $0.5$; the targets cross at $q_{\mathrm c}\approx0.7617$. (b) The expert's probability is fixed at $q=0.99$, while the other teacher's correct-response probability $\varepsilon$ varies on a logarithmic axis. (c) For binary responses, the horizon $H$ varies while all token probabilities remain fixed. The expert chooses the correct first token $a$ with probability $0.99$; subsequent tokens have probabilities $(0.99,0.01)$ after $a$ and $(1/2,1/2)$ after $b$. The other teacher is uniform at every prefix.}
    \label{fig:teacher-aggregation}
    \vspace{-5mm}
\end{figure}
% Both targets reduce the expert's correct-response probability. Below the threshold, forward aggregation retains more of this probability; above it, reverse aggregation does. Thus, a large expert weight alone does not ensure an advantage for reverse aggregation: the expert's confidence also matters.
\begin{remark}
This setting illustrates how forgetting can arise when a student, after acquiring an expert teacher's capability, continues learning from other teachers that are uninformative on the same context. Proposition~\ref{prop:uniform-teacher-threshold} identifies when this reduction is smaller under reverse aggregation than under forward aggregation, offering a possible explanation for empirical findings that OPD is less susceptible to forgetting than SFT. For example, \citet{shenfeld2026self} show in both single-task and sequential continual-learning experiments that OPD preserves general and previously acquired capabilities while learning new tasks, whereas SFT leads to substantial performance regression. Similar observations are reported by \citet{hubotter2026reinforcement} and \citet{ma2026one}, and have also been made in the field of vision language action model \citep{zhong2026vla}.
\end{remark}
\subsection{Misleading Teachers}
The preceding examples model teachers that are unfamiliar with a context as uniform. In this section, we consider another scenario where teachers may be misleading. As the small aggregation weights of some teachers reflect their limited coverage of the context, their predictions on such unfamiliar contexts may be highly inaccurate. Thus, they may assign extremely low probability to the correct response. We show that forward aggregation preserves a high probability of the correct response when the expert is highly confident and receives most of the weight. Under reverse aggregation, however, even a teacher with a small weight can drive this probability arbitrarily close to zero by assigning sufficiently low probability to the correct response.

Figure~\ref{fig:teacher-aggregation}(b) illustrates this effect by varying only the misleading teacher's correct-response probability $\varepsilon$, while fixing the expert's probability at $q=0.99$ and the teacher weights at $0.9$ and $0.1$. As $\varepsilon$ decreases toward zero, the reverse target approaches zero, whereas the forward target remains bounded below by the expert's weighted contribution. The following proposition extends this comparison to a set of correct responses.
\begin{proposition}
\label{prop:reverse-extreme-value}
Let $\varnothing\neq E\subsetneq\cY$ be a set of correct responses. For any teacher $k$, the forward target satisfies
\[
    \pif^*(E|x)\geq w_kp_k(E|x).
\]
In contrast, for any $\alpha,q,\eta\in(0,1)$, there exist two teachers with weights $w_1=\alpha$ and $w_2=1-\alpha$, each assigning positive probability to every response, such that
\[
    p_1(E|x)=q,
    \qquad
    \pir^*(E|x)<\eta.
\]
\end{proposition}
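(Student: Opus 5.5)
The forward part follows directly from Theorem~\ref{thm:obj-forward-kl}. At the context $x$ (with $\bar\rho(x)>0$), we have $\pif^*(y|x)=\sum_i w_i(x)p_i(y|x)$. Summing over $y\in E$ gives $\pif^*(E|x)=\sum_i w_i p_i(E|x)$. Every term is nonnegative, so dropping all terms except $i=k$ yields $\pif^*(E|x)\ge w_kp_k(E|x)$.

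For the reverse part, we give an explicit two-teacher construction with $w_1=\alpha$ and $w_2=1-\alpha$; as in the uninformative-teacher example, this can be realized by choosing $\rho_1(x),\rho_2(x)$ in ratio $\alpha:(1-\alpha)$. By Theorem~\ref{thm:obj-reverse-kl},
\[
\pir^*(y|x)=\frac{p_1(y|x)^\alpha p_2(y|x)^{1-\alpha}}{\sum_z p_1(z|x)^\alpha p_2(z|x)^{1-\alpha}} .
\]
Let $n=|E|\ge1$ and $n'=|\cY\setminus E|\ge1$; both are positive because $\varnothing\neq E\subsetneq\cY$. Teacher 1 puts mass $q/n$ on each $y\in E$ and $(1-q)/n'$ on each $y\notin E$, so $p_1(E|x)=q$ and $p_1$ has full support. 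Teacher 2 puts mass $\varepsilon/n$ on each $y\in E$ and $(1-\varepsilon)/n'$ on each $y\notin E$, with $\varepsilon\in(0,1)$ a free parameter; this also has full support.

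The numerator of $\pir^*(E|x)$ is $n(q/n)^\alpha(\varepsilon/n)^{1-\alpha}=q^\alpha\varepsilon^{1-\alpha}$. The denominator is at least the mass on the complement,
\[
n'\Big(\frac{1-q}{n'}\Big)^\alpha\Big(\frac{1-\varepsilon}{n'}\Big)^{1-\alpha}=(1-q)^\alpha(1-\varepsilon)^{1-\alpha}\ge (1-q)^\alpha 2^{\alpha-1}\quad\text{for }\varepsilon\le 1/2 .
\]
Hence $\pir^*(E|x)\le q^\alpha\varepsilon^{1-\alpha}2^{1-\alpha}/(1-q)^\alpha$. Because $1-\alpha>0$, this bound goes to $0$ as $\varepsilon\to0$. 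We can therefore fix $\varepsilon\le1/2$ with $\varepsilon^{1-\alpha}<\eta(1-q)^\alpha 2^{\alpha-1}/q^\alpha$, which gives $\pir^*(E|x)<\eta$.

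There is no substantial obstacle. The points that need care are that both teachers keep full support, that $E^c$ is nonempty so the denominator is bounded away from zero, and that the weights $w_1,w_2$ can actually be realized through the context distributions $\rho_i$.
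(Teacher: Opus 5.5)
Your proof is correct and matches the paper's argument: the same nonnegativity step gives the forward bound, and the same two-teacher construction, uniform within $E$ and within its complement, handles the reverse target. The only difference is cosmetic. You pick $\varepsilon$ explicitly via $(1-\varepsilon)^{1-\alpha}\geq 2^{\alpha-1}$, while the paper lets $\varepsilon\downarrow 0$ in the exact closed-form expression for $\pir^*(E|x)$.
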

Intuitively, forward aggregation is a weighted sum of nonnegative probabilities, so other teachers cannot cancel the expert's contribution to correct responses. Reverse aggregation instead averages log-probabilities, which can be arbitrarily negative. Consequently, a teacher that assigns sufficiently low probability to correct responses relative to incorrect ones can override the expert's preference, even with a small aggregation weight.

% \begin{proof}
% The forward bound follows from the nonnegative terms in \eqref{eq:seq-fkl}. For the reverse target, assign total probabilities $q$ and $\varepsilon$ to $E$ under teachers $1$ and $2$, respectively, distributing each teacher's probability uniformly within $E$ and within its complement. The resulting $\pir^*(E|x)$ has the same expression as \eqref{eq:misleading-teacher-targets} for $\pir^*(y^\star|x)$ and tends to zero as $\varepsilon\downarrow0$. Choosing a sufficiently small positive $\varepsilon$ proves the claim. Each instance has finite logits; the limit does not impose a common bound on the teachers' log-probability ratios.
% \end{proof}
\begin{remark}
The sensitivity identified by Proposition~\ref{prop:reverse-extreme-value} offers a possible explanation for some empirical failures of OPD under teacher--student mismatch. Such failures can occur even with a stronger teacher: \citet{ma2026mopd} find that a stronger but distributionally different teacher produces predominantly negative feedback on student rollouts and degrades student performance. Related empirical remedies address either extreme feedback or the initial mismatch. \citet{wang2026demystifying} show that clipping or compressing extreme token-level log-ratios can improve training stability, while \citet{li2026rethinking} find that an SFT cold start on teacher-generated responses reduces the initial pattern mismatch and improves subsequent OPD.
\end{remark}
\subsection{Long Horizons}
The preceding comparisons focus on the probabilities of complete responses. During autoregressive generation, however, reverse aggregation can favor a prefix that the expert considers less likely, even while preserving the expert's ranking of complete responses. This can steer generation toward an incorrect trajectory.

Consider an expert teacher $p_1$ with weight $\alpha\in(0,1)$ and a uniform teacher $p_2$ with weight $\beta=1-\alpha$. Since $p_2(y|x)=A^{-H}$ is the same for every complete response, its contribution to the geometric aggregate cancels upon normalization, leaving
\begin{align}
    \pir^*(y|x)
    ={p_1(y|x)^\alpha}/\big[{\textstyle\sum_{z\in\cY}p_1(z|x)^\alpha}\big].
    \notag
    % \label{eq:reverse-sequence-softening}
\end{align}
Raising probabilities to the power $\alpha<1$ preserves the ranking of complete responses while increasing the relative weight of less likely responses. Token-level probabilities, however, also depend on a sum over possible continuations, as shown by \eqref{eq:ar-rkl}. For a prefix $u$ of length $h-1$ and a candidate token $c$,
\begin{align}
    \pir^*(c|x,u)
    \propto p_1(c|x,u)^\alpha
    \sum_{v\in\cA^{H-h}}p_1(v|x,u,c)^\alpha.
    \notag
    % \label{eq:reverse-continuation-factor}
\end{align}
Since $\alpha<1$, spreading probability mass more evenly across continuations increases this sum. The continuation factor can therefore favor tokens with more diffuse continuations over those whose continuations are concentrated on a few responses.

To further illustrate this effect, consider responses of length $H$ over $\cA=\{a,b\}$, where $a$ is the correct first token. The expert $p_1$ has weight $\alpha=0.9$ and selects $a$ at the first step with probability $r=0.99$. The other teacher $p_2$ has weight $\beta=0.1$ and assigns probability $1/2$ to each token at every prefix.

At each of the remaining $H-1$ positions, the expert's token distribution depends only on the first token. After an initial $a$, it assigns probability $0.99$ to $a$ and $0.01$ to $b$, regardless of the intervening tokens. After an initial $b$, it assigns probability $1/2$ to each token. Thus, the expert's continuations are concentrated after $a$ and uniform after $b$, while all token probabilities remain strictly positive. Under forward aggregation, the probability of choosing $a$ at the first step is the weighted average of the teachers' corresponding probabilities and is therefore independent of the continuation distributions and the horizon. Figure~\ref{fig:teacher-aggregation}(c) plots the probability of choosing the correct first token $a$ as $H$ varies from $1$ to $40$, keeping the teacher weights and token-level probabilities fixed.

More generally, the following proposition shows that this reversal can occur even when the uniform teacher receives an arbitrarily small weight.
\begin{proposition}
\label{prop:reverse-long-horizon}
Fix $\beta\in(0,1/2)$ and $r\in(1/2,1)$. For every sufficiently large horizon $H$, there exist two teacher policies $p_1$ and $p_2$ on $\cY=\{a,b\}^H$, each assigning positive probability to every response, such that $p_1(a|x)=r$ and $p_2$ is uniform at every prefix. With aggregation weights $w_1(x)=1-\beta$ and $w_2(x)=\beta$, their forward and reverse targets satisfy
\[
    \pif^*(a|x)>\pif^*(b|x),
    \qquad
    \pir^*(a|x)<\pir^*(b|x).
\]
\end{proposition}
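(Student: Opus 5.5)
We use the construction already described before the statement, generalized to weight $\alpha:=1-\beta\in(1/2,1)$. Take $p_1$ with first-token probabilities $p_1(a|x)=r$ and $p_1(b|x)=1-r$. At every later position, $p_1$'s conditional depends only on the first token. After an initial $a$, it puts mass $s$ on $a$ and $1-s$ on $b$, for a fixed $s\in(0,1)$ still to be chosen. After an initial $b$, it is uniform. Let $p_2$ be uniform at every prefix. All conditionals are strictly positive, so every response has positive probability under both teachers.

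The forward claim follows from Theorem~\ref{thm:obj-forward-kl} with $h=1$ and the empty prefix. There $w_i(x,y_{<1})=w_i(x)$, so
\[
\pif^*(a|x)=(1-\beta)r+\beta/2 .
\]
This exceeds $1/2$ because $r>1/2$, and it does not depend on $H$ or $s$.

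For the reverse side we apply \eqref{eq:ar-rkl} at $h=1$. The first-token score is $g_1(c|x)=p_1(c|x)^\alpha 2^{-\beta}$.

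The continuation factor is $V_2(x,c)$. At each later position $g_h(\cdot)=p_1(\cdot)^\alpha 2^{-\beta}$, and $p_1$ depends only on the first token, so $V_2$ factorizes over positions. After $a$ we get
\[
V_2(x,a)=\bigl(2^{-\beta}(s^\alpha+(1-s)^\alpha)\bigr)^{H-1}.
\]
After $b$ we get
\[
V_2(x,b)=\bigl(2^{-\beta}\cdot 2\cdot 2^{-\alpha}\bigr)^{H-1}=1 .
\]
Since $\alpha+\beta=1$, the ratio is
\[
\frac{\pir^*(a|x)}{\pir^*(b|x)}=\Bigl(\frac{r}{1-r}\Bigr)^{\alpha}\,\kappa^{H-1},\qquad \kappa:=2^{-\beta}\bigl(s^\alpha+(1-s)^\alpha\bigr).
\]

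It remains to choose $s$ with $\kappa<1$. The function $s\mapsto s^\alpha+(1-s)^\alpha$ is concave and maximized at $s=1/2$ with value $2^{1-\alpha}=2^\beta$, so $\kappa\le1$. Equality holds only at $s=1/2$, which is the uniform continuation. We therefore fix any $s\neq1/2$, say $s=0.99$, which gives $\kappa<1$. Then $\kappa^{H-1}\to0$, and the ratio falls below $1$ once
\[
H-1>\frac{\alpha\log\bigl(r/(1-r)\bigr)}{\log(1/\kappa)} .
\]
This is exactly the statement for every sufficiently large $H$.

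The main check is that strict concavity delivers $\kappa<1$ for every $\beta\in(0,1/2)$, including $\beta$ near $0$. It does: the maximum value $2^\beta$ is attained only at $s=1/2$, so no smallness condition on $\beta$ is needed. The positivity assumption $V_h>0$ in Theorem~\ref{thm:obj-reverse-kl} holds automatically here because all probabilities are positive.
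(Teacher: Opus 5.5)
Your proof is correct and follows the paper's argument essentially verbatim. It uses the same expert (continuation probabilities $(s,1-s)$ after $a$, uniform after $b$), the same forward computation $(1-\beta)r+\beta/2>1/2$, and the same strict-concavity argument, with your $\kappa$ equal to the paper's ratio $C_\alpha/D_\alpha$; the only cosmetic difference is that you carry the uniform teacher's factor $2^{-\beta}$ through the continuation values $V_2$, whereas the paper cancels it at the sequence level before summing $p_1^\alpha$ over continuations.
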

\begin{remark}
The reversal in Proposition~\ref{prop:reverse-long-horizon} illustrates a potential difficulty for OPD on long responses: increasing the horizon can change the reverse aggregation target's preferred prefix while the expert's preference remains fixed. Empirically, \citet{li2026rethinking} find that longer rollouts can exhibit late-stage collapse, with instability emerging at later tokens and spreading toward earlier positions. \citet{ziheng2026less} find that teachers are less able to recover correct answers from longer student-generated prefixes and that stopping rollouts early can improve training stability.
\end{remark}
\section{Conclusion}
We studied off- and on-policy distillation in a sequential multi-teacher framework. By characterizing the forward- and reverse-KL targets and establishing learning guarantees, we connected the two feedback protocols to distinct aggregation objectives. Our comparisons show that reverse-KL aggregation can better preserve a confident expert's preference in the presence of uniform teachers, but can suppress correct responses under misleading feedback or favor incorrect prefixes over long horizons. These results offer possible explanations for the benefits and fragility of OPD relative to SFT, and provide a perspective on the difference between SFT and OPD through the lens of distinct aggregation objectives.

%%%%%%%%%%%%%%%%%%%%%%%%%%%%%%%%%%%%%%%%%%%%%%%%%%%%%%%%%%%%%%%%%%%%%%%%

\subsection*{AI use statement}

This work builds on the authors' research intuition and ideas. We used AI models, including GPT 5.6 Sol, GPT 6 Astra, Claude Fable 5, and Grok 4.6, to help articulate and summarize these ideas and to draft and polish the manuscript. These models also assisted in proving several key technical lemmas and developing proof strategies. The authors verified the proofs and reviewed the AI-assisted writing, and take full responsibility for the correctness and final content of the paper.

% \subsection*{Ethics statement}

% This work presents a theoretical analysis of knowledge distillation and does not involve human subjects or the collection of personal data. We do not identify specific ethical concerns arising directly from the analysis. Applications of knowledge distillation may nevertheless transfer biases or undesirable behaviors from teacher models to students; our theoretical guarantees concern distributional aggregation and learning, and do not establish the safety or fairness of the resulting models.

% \subsection*{Reproducibility statement}

% This work is theoretical. All assumptions and algorithms are stated explicitly, and complete proofs are provided in the appendices. All figures are deterministic evaluations of closed-form expressions, with parameter values given in the text and captions.
% \tableofcontents

% \subsubsection*{Author Contributions}
% If you'd like to, you may include  a section for author contributions as is done
% in many journals. This is optional and at the discretion of the authors.

% \subsubsection*{Acknowledgments}
% Use unnumbered third level headings for the acknowledgments. All
% acknowledgments, including those to funding agencies, go at the end of the paper.

\appendix
\section{Proof of the KL Aggregation Theorems}
\label{sec:autoregressive}

In this section, we prove Theorems~\ref{thm:obj-forward-kl}
and~\ref{thm:obj-reverse-kl}. We first reduce the objective to a
separate optimization problem at each context. Recall that
\begin{align*}
    \bar\rho(x):=\frac1I\sum_{i\in\cI}\rho_i(x),
    \qquad
    w_i(x):=\frac{\rho_i(x)}{\sum_{j\in\cI}\rho_j(x)}
\end{align*}
for every $x$ with $\bar\rho(x)>0$. Define
\begin{align*}
    \cI_x^+&:=\{i\in\cI:w_i(x)>0\}, \ 
    L_{D,x}(\pi):=\sum_{i\in\cI_x^+}w_i(x)
    D\bigl(\pi(\cdot\mid x)\,\big\|\,p_i(\cdot\mid x)\bigr).
\end{align*}
Using $\rho_i(x)=I\bar\rho(x)w_i(x)$, we obtain
\begin{align}
    &\sum_{i\in\cI}\EE_{x\sim\rho_i}
    \bigl[D\bigl(\pi(\cdot\mid x)\,\big\|\,p_i(\cdot\mid x)\bigr)\bigr] =I\sum_{x:\bar\rho(x)>0}\bar\rho(x)L_{D,x}(\pi).
    \label{eq:agg-proof-context-reduction}
\end{align}
The optimization domain in \eqref{eq:average-objective} allows a
separate distribution in $\Delta(\cY)$ at every context. Therefore,
it suffices to minimize $L_{D,x}$ at each context with
$\bar\rho(x)>0$. Contexts with $\bar\rho(x)=0$ do not affect the
objective, and their policies may be chosen arbitrarily.
Throughout the proofs, zero-weight teachers are omitted from
geometric products. We use the conventions $0\log(0/q)=0$
for $q\geq0$ and $p\log(p/0)=+\infty$ for $p>0$.

\subsection{Proof of Theorem~\ref{thm:obj-forward-kl}}
\label{app:proof-obj-forward-kl}

\begin{proof}[Proof of Theorem~\ref{thm:obj-forward-kl}]
Fix a context $x$ with $\bar\rho(x)>0$. For forward KL, the
contextwise objective in~\eqref{eq:agg-proof-context-reduction} is
\begin{align*}
    L_{\mathrm F,x}(\pi)
    :=\sum_{i\in\cI_x^+}w_i(x)
    \KL\bigl(p_i(\cdot\mid x)\,\big\|\,\pi(\cdot\mid x)\bigr).
\end{align*}
We first identify its minimizing sequence distribution, and then
derive the corresponding token conditionals.

\textbf{Sequence-level minimizer.}
Define the candidate distribution
\begin{align*}
    q_{\mathrm F}(y\mid x)
    :=\sum_{i\in\cI_x^+}w_i(x)p_i(y\mid x),
    \qquad y\in\cY.
\end{align*}
Since the weights are nonnegative and sum to one,
$q_{\mathrm F}(\cdot\mid x)\in\Delta(\cY)$. Let
\begin{align*}
    \mathcal S_{\mathrm F}(x)
    &:=\{y\in\cY:q_{\mathrm F}(y\mid x)>0\},\\*
    C_{\mathrm F}(x)
    &:=\sum_{i\in\cI_x^+}w_i(x)
    \KL\bigl(p_i(\cdot\mid x)\,\big\|\,q_{\mathrm F}(\cdot\mid x)\bigr).
\end{align*}
For every $i\in\cI_x^+$, we have
$q_{\mathrm F}(y\mid x)\geq w_i(x)p_i(y\mid x)$. Consequently,
$C_{\mathrm F}(x)$ is finite and does not depend on $\pi$.
For a policy that is positive on $\mathcal S_{\mathrm F}(x)$,
expanding the KL divergences gives
\begin{align}
    L_{\mathrm F,x}(\pi)-C_{\mathrm F}(x)
    &=\sum_{i\in\cI_x^+}w_i(x)
      \sum_{y\in\mathcal S_{\mathrm F}(x)}
      p_i(y\mid x)
      \log\frac{q_{\mathrm F}(y\mid x)}{\pi(y\mid x)}\notag\\*
    &=\sum_{y\in\mathcal S_{\mathrm F}(x)}
      q_{\mathrm F}(y\mid x)
      \log\frac{q_{\mathrm F}(y\mid x)}{\pi(y\mid x)}\notag\\*
    &=\KL\bigl(q_{\mathrm F}(\cdot\mid x)
      \,\big\|\,\pi(\cdot\mid x)\bigr),
    \label{eq:agg-proof-forward-decomposition}
\end{align}
where the second equality uses the definition of $q_{\mathrm F}$.
If $\pi$ assigns zero probability to some
$y\in\mathcal S_{\mathrm F}(x)$, at least one positive-weight
teacher assigns positive probability to that response. In that
case, both sides of \eqref{eq:agg-proof-forward-decomposition} are
$+\infty$, so the identity remains valid in the extended-real sense.

By nonnegativity of KL divergence, the right-hand side of
\eqref{eq:agg-proof-forward-decomposition} is minimized at zero, with
equality if and only if
$\pi(\cdot\mid x)=q_{\mathrm F}(\cdot\mid x)$. Thus, the minimizing
sequence distribution is unique at this context and satisfies
\begin{align*}
    \pif^*(y\mid x)
    =q_{\mathrm F}(y\mid x)
    =\sum_{i\in\cI}w_i(x)p_i(y\mid x),
\end{align*}
which proves \eqref{eq:seq-fkl}.

\textbf{Token-level conditionals.}
Fix $h\in[H]$ and $u\in\cA^{h-1}$. Marginalizing the sequence
distribution over all continuations gives
\begin{align}
    \pif^*(u\mid x)
    &=\sum_{i\in\cI}w_i(x)p_i(u\mid x),\notag\\*
    \pif^*((u,a)\mid x)
    &=\sum_{i\in\cI}w_i(x)p_i(u\mid x)p_i(a\mid x,u).
    \label{eq:agg-proof-forward-prefix}
\end{align}
The first equality follows from linearity of marginalization;
the second additionally uses the autoregressive factorization
of each teacher.
Suppose that $\sum_i\rho_i(x)p_i(u\mid x)>0$. Then
$\pif^*(u\mid x)>0$, and taking the ratio of the two prefix
probabilities yields
\begin{align}
    \pif^*(a\mid x,u)
    &=\frac{\pif^*((u,a)\mid x)}{\pif^*(u\mid x)}\notag\\*
    &=\frac{\sum_{i\in\cI}w_i(x)p_i(u\mid x)p_i(a\mid x,u)}
            {\sum_{j\in\cI}w_j(x)p_j(u\mid x)}\notag\\*
    &=\sum_{i\in\cI}
      \frac{\rho_i(x)p_i(u\mid x)}
           {\sum_{j\in\cI}\rho_j(x)p_j(u\mid x)}
      p_i(a\mid x,u),
    \label{eq:agg-proof-forward-conditional}
\end{align}
where the last equality substitutes the definition of $w_i(x)$
and cancels its common normalization factor. This is exactly
\eqref{eq:ar-fkl}. At a prefix with $\pif^*(u\mid x)=0$, the
conditional may be chosen arbitrarily without changing the
sequence distribution. Since $x$ was arbitrary, combining the
contextwise minimizers using \eqref{eq:agg-proof-context-reduction}
completes the proof.
\end{proof}

\subsection{Proof of Theorem~\ref{thm:obj-reverse-kl}}
\label{app:proof-obj-reverse-kl}

\begin{proof}[Proof of Theorem~\ref{thm:obj-reverse-kl}]
Fix a context $x$ with $\bar\rho(x)>0$. The contextwise
reverse-KL objective is
\begin{align*}
    L_{\mathrm R,x}(\pi)
    :=\sum_{i\in\cI_x^+}w_i(x)
    \KL\bigl(\pi(\cdot\mid x)\,\big\|\,p_i(\cdot\mid x)\bigr).
\end{align*}
Define the unnormalized sequence score and its total mass by
\begin{align*}
    G_x(y):=\prod_{i\in\cI_x^+}p_i(y\mid x)^{w_i(x)},
    \qquad Z_x:=\sum_{y\in\cY}G_x(y).
\end{align*}
We first show that $Z_x=V_1(x)$, then identify the minimizing
sequence distribution and its token conditionals.

\textbf{Backward recursion and normalization.}
By the autoregressive factorization of the teachers,
\begin{align}
    G_x(y)
    &=\prod_{i\in\cI_x^+}
      \left(\prod_{h=1}^H p_i(a_h\mid x,y_{<h})\right)^{w_i(x)}
      \notag\\*
    &=\prod_{h=1}^H\prod_{i\in\cI_x^+}
      p_i(a_h\mid x,y_{<h})^{w_i(x)}\notag\\*
    &=\prod_{h=1}^H g_h(a_h\mid x,y_{<h}).
    \label{eq:agg-proof-reverse-factorization}
\end{align}
For every $h\in[H+1]$ and $u\in\cA^{h-1}$, we claim that
\begin{align}
    V_h(x,u)
    =\sum_{a_h,\ldots,a_H\in\cA}
      \prod_{k=h}^H g_k(a_k\mid x,y_{<k}),
    \qquad y=(u,a_h,\ldots,a_H).
    \label{eq:agg-proof-continuation-value}
\end{align}
At $h=H+1$, the sum contains one empty continuation and its
empty product is one, so the identity agrees with
$V_{H+1}(x,y)=1$. Suppose that it holds at level $h+1$.
Using the defining recursion for $V_h$, we obtain
\begin{align*}
    V_h(x,u)
    &=\sum_{a_h\in\cA}g_h(a_h\mid x,u)
      V_{h+1}(x,(u,a_h))\\*
    &=\sum_{a_h\in\cA}g_h(a_h\mid x,u)
      \sum_{a_{h+1},\ldots,a_H\in\cA}
      \prod_{k=h+1}^H g_k(a_k\mid x,y_{<k})\\*
    &=\sum_{a_h,\ldots,a_H\in\cA}
      \prod_{k=h}^H g_k(a_k\mid x,y_{<k}),
\end{align*}
where the second equality uses the induction hypothesis.
This proves \eqref{eq:agg-proof-continuation-value} by backward
induction. At the empty prefix, it gives
\begin{align}
    V_1(x)=\sum_{y\in\cY}G_x(y)=Z_x.
    \label{eq:agg-proof-reverse-normalizer}
\end{align}
In particular, the positivity condition $V_1(x)>0$ implies
$Z_x>0$.

\textbf{Sequence-level minimizer.}
Define
\begin{align*}
    q_{\mathrm R}(y\mid x):=\frac{G_x(y)}{Z_x},
    \qquad
    \mathcal S_{\mathrm R}(x):=\{y\in\cY:G_x(y)>0\}.
\end{align*}
By \eqref{eq:agg-proof-reverse-normalizer},
$q_{\mathrm R}(\cdot\mid x)$ is a probability distribution.
The set $\mathcal S_{\mathrm R}(x)$ is the common support of
all positive-weight teachers. Any policy assigning positive
mass outside this set has infinite reverse KL to at least one
such teacher, and therefore incurs infinite objective value.
For a policy supported on $\mathcal S_{\mathrm R}(x)$, we have
\begin{align}
    L_{\mathrm R,x}(\pi)
    &=\sum_{y\in\mathcal S_{\mathrm R}(x)}\pi(y\mid x)
      \left[\log\pi(y\mid x)
      -\sum_{i\in\cI_x^+}w_i(x)\log p_i(y\mid x)\right]
      \notag\\*
    &=\sum_{y\in\mathcal S_{\mathrm R}(x)}\pi(y\mid x)
      \log\frac{\pi(y\mid x)}{G_x(y)}\notag\\*
    &=\sum_{y\in\mathcal S_{\mathrm R}(x)}\pi(y\mid x)
      \log\frac{\pi(y\mid x)}{q_{\mathrm R}(y\mid x)}-\log Z_x
      \notag\\*
    &=\KL\bigl(\pi(\cdot\mid x)
      \,\big\|\,q_{\mathrm R}(\cdot\mid x)\bigr)-\log V_1(x).
    \label{eq:agg-proof-reverse-decomposition}
\end{align}
Here, the second equality uses the definition of $G_x$, the
third uses $G_x=Z_xq_{\mathrm R}$ and
$\sum_y\pi(y\mid x)=1$, and the last uses
\eqref{eq:agg-proof-reverse-normalizer}. The same identity holds
in the extended-real sense when $\pi$ puts positive mass
outside $\mathcal S_{\mathrm R}(x)$, since both sides are
then infinite.

Since $-\log V_1(x)$ does not depend on $\pi$, nonnegativity
of KL divergence shows that the unique minimizing sequence
distribution is
\begin{align*}
    \pir^*(y\mid x)
    =q_{\mathrm R}(y\mid x)
    =\frac{\prod_{i\in\cI_x^+}p_i(y\mid x)^{w_i(x)}}{V_1(x)},
\end{align*}
which proves \eqref{eq:seq-rkl}.

\textbf{Token-level conditionals.}
Fix $h\in[H]$ and a prefix $u=(u_1,\ldots,u_{h-1})$, and let
\begin{align*}
    G_{<h}(x,u):=\prod_{k=1}^{h-1}g_k(u_k\mid x,u_{<k}),
\end{align*}
where the empty product is one. Combining
\eqref{eq:agg-proof-reverse-factorization} and
\eqref{eq:agg-proof-continuation-value}, and summing over all
continuations, gives
\begin{align}
    \pir^*(u\mid x)
    &=\frac{G_{<h}(x,u)V_h(x,u)}{V_1(x)},\notag\\*
    \pir^*((u,a)\mid x)
    &=\frac{G_{<h}(x,u)g_h(a\mid x,u)
      V_{h+1}(x,(u,a))}{V_1(x)}.
    \label{eq:agg-proof-reverse-prefix}
\end{align}
If $\pir^*(u\mid x)>0$, then both $G_{<h}(x,u)$ and
$V_h(x,u)$ are positive. Taking the ratio of the two
prefix probabilities yields
\begin{align}
    \pir^*(a\mid x,u)
    &=\frac{\pir^*((u,a)\mid x)}{\pir^*(u\mid x)}\notag\\*
    &=\frac{g_h(a\mid x,u)V_{h+1}(x,(u,a))}{V_h(x,u)},
    \label{eq:agg-proof-reverse-conditional}
\end{align}
which is \eqref{eq:ar-rkl}. These conditionals are uniquely
determined at every positive-probability prefix.

At a null prefix with $V_h(x,u)>0$, the right-hand side of
\eqref{eq:agg-proof-reverse-conditional} still defines a valid
conditional, because its numerator sums to $V_h(x,u)$.
When $V_h(x,u)=0$, choose an arbitrary conditional instead.
To verify that these choices induce the minimizing sequence
law, take any $y\in\mathcal S_{\mathrm R}(x)$. All factors
and continuation values along this response are positive,
so the selected conditionals telescope:
\begin{align*}
    \prod_{h=1}^H
    \frac{g_h(a_h\mid x,y_{<h})V_{h+1}(x,y_{\leq h})}
         {V_h(x,y_{<h})}
    &=\frac{\prod_{h=1}^H g_h(a_h\mid x,y_{<h})}{V_1(x)}\\*
    &=\frac{G_x(y)}{Z_x}
    =\pir^*(y\mid x),
\end{align*}
where $y_{\leq h}:=(a_1,\ldots,a_h)$ and the first equality
uses $V_{H+1}(x,y)=1$.
These probabilities already sum to one over
$\mathcal S_{\mathrm R}(x)$. Since every selected token
conditional is normalized, the induced autoregressive law
assigns zero mass to all other responses and equals
$\pir^*(\cdot\mid x)$.
Thus, uniqueness concerns the sequence distribution and its
conditionals at positive-probability prefixes, not arbitrary
conditionals at null prefixes. Applying
\eqref{eq:agg-proof-context-reduction} across contexts completes
the proof.
\end{proof}

\section{Proof of Theorem~\ref{thm:ar-fkl-regret}}
\label{sec:proof-fkl-thm}
\begin{proof}
In this section, we prove Theorem~\ref{thm:ar-fkl-regret}.
Let $\cF_t:=\sigma\big(
        \{
            x_{s,j},\,y_{s,j},\,q_{s,j,h}(a)
            :
            1\leq s<t,\,
            j\in[m],\,
            h\in[H],\,
            a\in\cA
        \}
    \big)$. Then, $\hat\pi_t$ is $\cF_t$-measurable. 
    
Fix a level $h$, a state $(x,u)\in\cX\times\cA^{h-1}$, and a token
$a\in\cA$. Under either feedback model,
\begin{align}
    \EE\left[
        q_{t,j,h}(a)
        \,\middle|\,
        \cF_t,\,
        i_t=i,\,
        x_{t,j}=x,\,
        y_{t,j,<h}=u
    \right]
    =
    p_i(a|x,u).
    \label{eq:proof-feedback-conditional-mean}
\end{align}
Indeed, without logits, $q_{t,j,h}(a)=\ind(a_{t,j,h}=a)$, whose
conditional mean is $p_i(a|x,u)$. With logits,
$q_{t,j,h}(a)=p_i(a|x,u)$ is observed directly.

Conditioned on $i_t=i$, using the independence of samples between each round, we have
\begin{align}
\label{eq:condition-prob}
    \PP\big(
        x_{t,j}=x,\,
        y_{t,j,<h}=u |
        i_t=i,\cF_t
    \big)=\rho_i(x)p_i(u|x).
\end{align}
Combining
\eqref{eq:proof-feedback-conditional-mean} and \eqref{eq:condition-prob}, and using that $i_t$ is uniform
over $\cI$, we have
\begin{align}
&\EE\big[
    \ind(x_{t,j}=x,\,y_{t,j,<h}=u)
    q_{t,j,h}(a)
    \,\big|\,
    \cF_t
\big]
\notag\\
&\quad =
\sum_{i\in\cI}
\PP(i_t=i\mid\cF_t)\,
\EE\left[
    \ind(x_{t,j}=x,\,y_{t,j,<h}=u)
    q_{t,j,h}(a)
    \,\middle|\,
    \cF_t,i_t=i
\right]
\notag\\
&\quad =
\frac{1}{I}
\sum_{i\in\cI}
\PP\big(
    x_{t,j}=x,\,
    y_{t,j,<h}=u
   \big|
    \cF_t,i_t=i
\big)
\notag\\
&\qquad\cdot
\EE\big[
    q_{t,j,h}(a)
    \big|
    \cF_t,i_t=i,\,
    x_{t,j}=x,\,
    y_{t,j,<h}=u
\big]
\notag\\
&\quad =
\frac{1}{I}
\sum_{i\in\cI}
\rho_i(x)p_i(u|x)p_i(a|x,u).
\label{eq:proof-one-rollout-mean}
\end{align}
The right-hand side is precisely the joint prefix--token probability under
the forward target \eqref{eq:ar-fkl}. More explicitly,
\begin{align}
    \frac{1}{I}
    \sum_{i\in\cI}
    \rho_i(x)p_i(u|x)p_i(a|x,u)
    =
    \bar\rho(x)
    \pif^*(u|x)
    \pif^*(a|x,u).
    \label{eq:proof-forward-joint-law}
\end{align}
To see this identity, first note that
\begin{align}
    \bar\rho(x)\pif^*(u|x)
    &=
    \bigg(
        \frac{1}{I}\sum_{k\in\cI}\rho_k(x)
    \bigg)
    \left(
        \sum_{i\in\cI}
        \frac{\rho_i(x)}
             {\sum_{k\in\cI}\rho_k(x)}
        p_i(u|x)
    \right)
    \notag\\
    &=
    \frac{1}{I}
    \sum_{i\in\cI}
    \rho_i(x)p_i(u|x).
    \label{eq:proof-forward-prefix-law}
\end{align}
Moreover, conditioned on the prefix $(x,u)$, the posterior weight of
teacher $i$ is
\begin{align*}
    w_i(x,u)
    =
    \frac{\rho_i(x)p_i(u|x)}
         {\sum_{k\in\cI}\rho_k(x)p_k(u|x)}.
\end{align*}
Therefore,
\begin{align}
    \pif^*(a|x,u)
    &=
    \sum_{i\in\cI}
    w_i(x,u)p_i(a|x,u)
    \notag\\
    &=
    \frac{
        \sum_{i\in\cI}
        \rho_i(x)p_i(u|x)p_i(a|x,u)
    }{
        \sum_{k\in\cI}
        \rho_k(x)p_k(u|x)
    }.
    \label{eq:proof-forward-token-law}
\end{align}
Multiplying \eqref{eq:proof-forward-prefix-law} and
\eqref{eq:proof-forward-token-law}, the prefix normalizing factor cancels,
giving
\begin{align}
\notag
    \bar\rho(x)\pif^*(u|x)\pif^*(a|x,u)
    =
    \frac{1}{I}
    \sum_{i\in\cI}
    \rho_i(x)p_i(u|x)p_i(a|x,u).
\end{align}
Recall that
\begin{align*}
    c_t(x,u,a)
    :=
    \frac{1}{m}\sum_{j=1}^m
    \ind(x_{t,j}=x,\,y_{t,j,<h}=u)\,
    q_{t,j,h}(a).
\end{align*}
Averaging \eqref{eq:proof-one-rollout-mean} over the $m$ rollouts therefore
yields
\begin{align}
    \EE\left[
        c_t(x,u,a)
        \,\middle|\,
        \cF_t
    \right]
    =
    \bar\rho(x)
    \pif^*(u|x)
    \pif^*(a|x,u).
    \label{eq:proof-count-mean}
\end{align}
Only linearity of expectation is used here. In particular, the $m$ rollouts
need not be independent after marginalizing over their shared teacher index
$i_t$.

For any autoregressive policy $\pi$, define its round-$t$ empirical loss by
\begin{align}
    \ell_t(\pi)
    :=
    -\sum_{h=1}^H
    \sum_{x\in\cX}
    \sum_{u\in\cA^{h-1}}
    \sum_{a\in\cA}
    c_t(x,u,a)\log\pi(a|x,u).
    \label{eq:proof-empirical-loss}
\end{align}
Define $L(\pi):=\EE_{x\sim\bar\rho,y\sim\pif^*(\cdot|x)}\big[-\log\pi(y|x)\big]$. Using the autoregressive representation of $\pi$, we have
\begin{align*}
    -\log\pi(y|x)
    =
    -\sum_{h=1}^H
    \log\pi(a_h|x,y_{<h}).
\end{align*}
Consequently, using the linearity of expectation, $L(\pi)$ can be expressed as
\begin{align}
\notag
    L(\pi)
    =
    -\sum_{h=1}^H
    \sum_{x,u,a}
    \bar\rho(x)
    \pif^*(u|x)
    \pif^*(a|x,u)
    \log\pi(a|x,u).
\end{align}
Substituting \eqref{eq:proof-count-mean} into
\eqref{eq:proof-empirical-loss} now gives, for every
$\cF_t$-measurable policy $\pi$,
\begin{align}
    \EE\left[
        \ell_t(\pi)
        \,\middle|\,
        \cF_t
    \right]
    =
    L(\pi).
    \label{eq:proof-unbiased-loss}
\end{align}
Moreover, we have
\begin{align}
    L(\pi)-L(\pif^*)
    &=
    \EE_{x\sim\bar\rho,
                    y\sim\pif^*(\cdot|x)}
    \bigg[
        \log
        \frac{\pif^*(y|x)}
             {\pi(y|x)}
    \bigg]
    \notag\\\label{eq:L-diff}
    &=
    \EE_{x\sim\bar\rho}
    \KL\big[
        \pif^*(\cdot|x)
        \|
        \pi(\cdot|x)
    \big].
\end{align}
Applying \eqref{eq:proof-unbiased-loss} and \eqref{eq:L-diff} to $\hat\pi_t$ and $\pif^*$ and the tower property, we have
\begin{align}
\notag
    \EE\big[\operatorname{Regret}(T)\big]
    &= \EE \bigg[\sum_{t=1}^T \big[L(\hat \pi_t) - L(\pif^*)\big]\bigg]\\
    &=\EE\bigg[
        \sum_{t=1}^T
        \big[
            \ell_t(\hat\pi_t)
            -
            \ell_t(\pif^*)
        \big]
    \bigg].
    \label{eq:proof-regret-empirical}
\end{align}
Fix $h \in [H]$, define 
\begin{align*}
\ell_{t,h}(\pi) := -\sum_{x\in\cX}\sum_{u\in\cA^{h-1}}\sum_{a\in\cA}c_t(x,u,a)\log\pi(a|x,u).
\end{align*}
Then, $\ell_{t}(\pi) = \sum_{h} \ell_{t,h}(\pi)$. We consider $\sum_{t=1}^T
        \big[
            \ell_{t,h}(\hat\pi_t)
            -
            \ell_{t,h}(\pif^*)
        \big]$.
Recall that $W_t(x,u,a) = \sum_{i=1}^t c_i(x,u,a)$, $W_t(x,u) = \sum_a W_t(x,u,a)$. For a fixed state $(x,u)$, we omit $(x,u)$ and use the shorthand notation $W_t(a)$ and $W_t$ when it will not cause any confusion. Similarly, we write $c_t(a) = c_t(x,u,a)$ and define $c_t := \sum_a c_t(a)$. Thus, $W_t=\sum_{i=1}^t c_i$.

Using the definition of $c_t(x,u,a)$ in \eqref{eq:fkl-count}, we have
\begin{align}
\notag
    c_t=
    \frac{1}{m}\sum_{j=1}^m
    \ind(x_{t,j}=x,\,y_{t,j,<h}=u)\,
    \sum_a q_{t,j,h}(a).
\end{align}
Note that in both cases, $\sum_a q_{t,j,h}(a)=1$. Thus, we have
\begin{align}
\notag
    c_t
    &=
    \frac{1}{m}
    \sum_{j=1}^m
    \ind(x_{t,j}=x,\,y_{t,j,<h}=u)
    \leq 1.
\end{align}
In Algorithm~\ref{algo:ar-fkl}, the policy at context $(x,u)$ is defined as
\begin{align}
\notag
    \hat\pi_t(a|x,u)
    =
    \frac{W_{t-1}(a)+1/2}
         {W_{t-1}+A/2}.
\end{align}
For simplicity, we write the action set as $\cA = \{1,2,\ldots, A\}$. Let $\mathbf Q_t$ be a random $A$-dim
probability vector with Dirichlet distribution
\begin{align*}
    \mathbf Q_t
    \sim
    \operatorname{Dir}\bigg(
        W_{t-1}(1)+\frac12,\ldots,
        W_{t-1}(A)+\frac12
    \bigg).
\end{align*}
The goal of this construction is the following fact: the mean of $\Qb_t$ is exactly the output policy:
\begin{align}
\notag
    \EE_{q\sim \Qb_t}[q(a)]
    =
    \frac{W_{t-1}(a)+1/2}
         {W_{t-1}+A/2}
    =
    \hat\pi_t(a|x,u).
\end{align}
For a positive vector $v=(v_1,\ldots,v_A)$, define the multivariate beta
function by
\begin{align}
\label{eq:beta-function}
    \mathrm B(v)
    :=
    \frac{\prod_{a=1}^A\Gamma(v_a)}
         {\Gamma(\sum_{a=1}^A v_a)},
\end{align}
where $\Gamma$ is the gamma function. The beta function is the normalizing constant of the Dirichlet distribution. Indeed, for every positive vector $v$, the density of the Dirichlet distribution $\Qb_v$ on the simplex $\Delta_A := \{q \in \RR_+^A: \sum_a q(a) = 1\}$ is
\begin{align*}
    f_v(q)
    =
    \frac{1}{\mathrm B(v)}
    \prod_{a=1}^A q(a)^{v_a-1},
    \qquad q\in\Delta_A.
\end{align*}
Equivalently,
\begin{align*}
    \int_{\Delta_A}
    \prod_{a=1}^A q(a)^{v_a-1}\,\mathrm dq
    =
    \mathrm B(v).
\end{align*}
More generally, for every nonnegative vector $c\in\RR_+^A$,
\begin{align}
\notag
    \EE_{q \sim \Qb_v}\bigg[
        \prod_{a=1}^A q(a)^{c_a}
    \bigg]
    & = \frac{1}{\mathrm B(v)}\int_{\Delta_A} \prod_{a=1}^A q(a)^{c_a + v_a-1} \mathrm dq\\
    &= \frac{\mathrm B(v+c)}{\mathrm B(v)}.
\notag
\end{align}
Therefore, we have
\begin{align}
\notag
    \EE_{q\sim \Qb_t}\bigg[
        \prod_{a\in\cA}
         q(a)^{c_t(a)}
    \bigg]
    &=
    \frac{
        \mathrm B\big(
            W_{t-1}(1) + c_t(1)+\frac12,\ldots,
            W_{t-1}(A) +c_t(A)+\frac12
        \big)
    }{
        \mathrm B\big(
            W_{t-1}(1)+\frac12,\ldots,
            W_{t-1}(A)+\frac12
        \big)
    }\\\label{eq:proof-dirichlet-moment}
    &= \frac{
        \mathrm B\big(
            W_{t}(1)+\frac12,\ldots,
            W_{t}(A)+\frac12
        \big)
    }{
        \mathrm B\big(
            W_{t-1}(1)+\frac12,\ldots,
            W_{t-1}(A)+\frac12
        \big)
    },
\end{align}
where we use $W_{t}(a)=W_{t-1}(a)+c_{t}(a)$. Let
\begin{align*}
    \alpha_t:=\sum_{a\in\cA}c_t(a)\leq1,
    \qquad
    \cA_t^+:=\{a\in\cA:c_t(a)>0\}.
\end{align*}
For every $a\in\cA_t^+$, define the H\"older exponent $r_a:={1}/{c_t(a)}$.
If $\alpha_t<1$, introduce one additional exponent $r_0 := 1/(1-\alpha_t)$. These
exponents satisfy
\begin{align*}
    \frac{1}{r_0}
    +
    \sum_{a\in\cA_t^+}\frac{1}{r_a}
    =
    (1-\alpha_t)+\sum_{a\in\cA_t^+}c_t(a)
    =
    1.
\end{align*}
Using the generalized H\"older's inequality, we have
\begin{align}
    \EE_{q\sim \Qb_t}\bigg[
        \prod_{a\in\cA}
         q(a)^{c_t(a)}
    \bigg]
    &\leq
    \EE[1^{r_0}]^{\frac{1}{r_0}}
    \prod_{a\in\cA_t^+}
    \EE_{q\sim \Qb_t}\bigg[
        \Big[ q(a)^{c_t(a)}\Big]^{r_a}
    \bigg]^{\frac{1}{r_a}}
    \notag\\
    &=
    \prod_{a\in\cA_t^+}
    \EE_{q\sim \Qb_t}\big[q(a)\big]^{c_t(a)}
    =
    \prod_{a\in\cA}
    \hat\pi_t(a|x,u)^{c_t(a)},
    \label{eq:proof-holder}
\end{align}
where we used 
$\EE_{q\sim \Qb_t}[q(a)]=\hat\pi_t(a|x,u)$. If $\alpha_t=1$, we can skip $r_0$ and use the same generalized H\"older's inequality.

Combining \eqref{eq:proof-dirichlet-moment} and
\eqref{eq:proof-holder}, multiplying over $t$, and using the telescoping argument, we have
\begin{align}
    \prod_{t=1}^T
    \prod_{a\in\cA}
    \hat\pi_t(a|x,u)^{c_{t}(a)}
    &\geq
    \prod_{t=1}^T
    \frac{
        \mathrm B\big(W_{t}(\cdot)+\frac12\mathbf 1\big)
    }{
        \mathrm B\big(W_{t-1}(\cdot)+\frac12\mathbf 1\big)
    }
    \notag\\
    &=
    \frac{
        \mathrm B\big(W_{T}(\cdot)+\frac12\mathbf 1\big)
    }{
        \mathrm B\big(\frac12\mathbf 1\big)
    }.
\notag
\end{align}
Taking the negative logarithm, we have
\begin{align}
    -\sum_{t=1}^T\sum_{a\in\cA}
    c_{t}(a)\log\hat\pi_t(a|x,u)
    \leq
    \log
    \frac{
        \mathrm B(\frac12\mathbf 1)
    }{
        \mathrm B\big(W_{T}(\cdot)+\frac12\mathbf 1\big)
    }.\label{eq:telescope}
\end{align}
Next, we consider an arbitrary
policy $\pi(\cdot|x,u)$. First suppose $W_T>0$. By the
nonnegativity of KL divergence,
\begin{align}
\notag
    \sum_{a\in\cA}W_T(a)\log\pi(a|x,u)
    &=
    \sum_{a\in\cA}
    W_T(a)\log\frac{W_T(a)}{W_T}
    -
    W_T
    \sum_{a\in \cA} \frac{W_T(a)}{W_T} \log \frac{W_T(a) }{W_T \cdot \pi(a|x,u)}\\
    &=\sum_{a\in\cA}
    W_T(a)\log\frac{W_T(a)}{W_T}
    -
    W_T
    \KL\bigg(
        \frac{W_T(\cdot)}{W_T}
        \bigg\|
        \pi(\cdot|x,u)
    \bigg)
    \notag\\
    &\leq
    \sum_{a\in\cA}
    W_T(a)\log\frac{W_T(a)}{W_T},
    \label{eq:proof-comparator}
\end{align}
where we set $0\log0:=0$. Summing \eqref{eq:telescope} and \eqref{eq:proof-comparator}, we have
\begin{align}
    \sum_{t=1}^T\sum_{a\in\cA}
    c_t(a)
    \log\frac{\pi(a|z)}{\hat\pi_t(a|z)}
    &\leq
    \log
    \frac{
        \mathrm B(\frac12\mathbf1)
    }{
        \mathrm B(W_T(\cdot)+\frac12\mathbf1)
    }
    +
    \sum_{a\in\cA}
    W_T(a)\log\frac{W_T(a)}{W_T}=: \cR\big(W_T(\cdot)\big).\notag
\end{align}
When $W_T=0$, let $\cR\big(W_T(\cdot)\big) = 0$. The following lemma bounds $\cR\big(W_T(\cdot)\big)$ using Stirling's formula.
\begin{lemma}
\label{lem:statewise-kt}
Let $W_T(a)\geq0$ for every $a\in\cA$, and let
\begin{align*}
    W_T:=\sum_{a\in\cA}W_T(a).
\end{align*}
Then, $\cR\big(W_T(\cdot)\big)$ satisfies
\begin{align}
\notag
    \cR\bigl(W_T(\cdot)\bigr)
    \leq
    A\log(W_T+1)+3A.
\end{align}
\end{lemma}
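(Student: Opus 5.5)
We need to bound $\cR(W)=\log\mathrm B(\tfrac12\mathbf 1)-\log\mathrm B(W+\tfrac12\mathbf 1)+\sum_a W(a)\log(W(a)/W)$ for real, not necessarily integer, $W(a)\ge0$. The plan is to bound each gamma factor separately with a Stirling-type inequality valid for all positive reals, and then observe that the main $W\log W$ terms cancel exactly against the comparator entropy term. The case $W_T=0$ is trivial by convention, so assume $W:=W_T>0$.

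\textbf{Step 1: rewrite $\cR$.} By definition \eqref{eq:beta-function},
\[
\cR=\log\frac{\Gamma(W+A/2)}{\Gamma(A/2)}+A\log\Gamma(\tfrac12)-\sum_a\log\Gamma\big(W(a)+\tfrac12\big)+\sum_a W(a)\log W(a)-W\log W .
\]
Here $\Gamma(1/2)=\sqrt\pi$, so $A\log\Gamma(\tfrac12)\le 0.58A$.

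\textbf{Step 2: two-sided Stirling bounds for real arguments.} I would use the standard inequality, valid for all $z>0$,
\[
(z-\tfrac12)\log z - z + \tfrac12\log(2\pi)\le \log\Gamma(z)\le (z-\tfrac12)\log z - z+\tfrac12\log(2\pi)+\tfrac1{12z}.
\]

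\textbf{Step 3: lower-bound each small gamma factor.} Apply the lower bound to $\Gamma(W(a)+\tfrac12)$ with $z=W(a)+\tfrac12$. This gives $\log\Gamma(W(a)+\tfrac12)\ge W(a)\log(W(a)+\tfrac12)-W(a)-\tfrac12+\tfrac12\log 2\pi \ge W(a)\log W(a)-W(a)-\tfrac12$, using $\log(W(a)+\tfrac12)\ge\log W(a)$ (with $0\log 0=0$). Summing over $a$, the terms $\sum_a W(a)\log W(a)$ cancel with the comparator, leaving at most $W+A/2$.

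\textbf{Step 4: upper-bound the large gamma factor.} If $A\ge2$, apply the upper bound to $\log\Gamma(W+A/2)$. This gives $(W+\tfrac{A-1}{2})\log(W+A/2)-W-\tfrac A2+O(1)$. Combined with $-W\log W$ and the $+W$ from Step 3, the dangerous piece is $W\log(1+\tfrac{A}{2W})\le A/2$, which is bounded. The remaining piece is $\tfrac{A-1}{2}\log(W+A/2)\le \tfrac{A}{2}\log(W+1)+\tfrac A2\log(A/2)$. This is fine except for the $\log A$ term. That term is offset by $-\log\Gamma(A/2)$, which is roughly $-\tfrac A2\log\tfrac A2+\tfrac A2$ by the Stirling lower bound again. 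The case $A=1$ gives $\cR=0$ directly. Collecting the constants yields something like $\tfrac A2\log(W+1)+cA$ with $c\le3$, which is stronger than the claim.

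\textbf{Main obstacle.} The delicate part is the constant bookkeeping, where the $\tfrac A2\log\tfrac A2$ terms must cancel uniformly across all regimes, including small $W$ such as $W<1$ and $W\ll A$. If Stirling at small arguments like $z=A/2=1$ is too loose, I would handle $A\le 2$ separately by direct evaluation. Alternatively, I would use the monotone bound $\log\frac{\Gamma(W+A/2)}{\Gamma(A/2)}\le W\log(W+A/2)$ (from $\Gamma(z+s)\le \Gamma(z)(z+s)^s$) when $W\le A$. The generous $A\log(W+1)+3A$ leaves enough slack for this.
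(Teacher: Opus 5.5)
Your proposal is correct and follows essentially the same route as the paper. Both expand $\cR$ through Gamma functions, apply Stirling's upper bound to $\Gamma(W_T+A/2)$ and lower bounds to each $\Gamma(W_T(a)+\tfrac12)$ and to $\Gamma(A/2)$, and cancel the $W\log W$ terms. Both then bound $W_T\log(1+\tfrac{A}{2W_T})\le A/2$ and absorb the $\log(A/2)$ term using the lower bound on $\Gamma(A/2)$. Your factorization $W_T+A/2\le (W_T+1)\tfrac A2$ makes this last cancellation a bit cleaner than the paper's $(W_T+1)(1+\tfrac A2)$ together with $\log(1+2/A)\le 2/A$.

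The contingency plans in your last paragraph are unnecessary. The two-sided Stirling bound holds for every real $s>0$, and carrying out your bookkeeping gives at most $\tfrac A2\log(W_T+1)+2A$ for $A\ge 2$, comfortably within the claimed bound.
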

We defer the proof of Lemma \ref{lem:statewise-kt} to Appendix \ref{sec:proof-fkl-lemma}. Choose $\pi= \pif^*$. For any
$(x,u)\in\cX\times\cA^{h-1}$, we have
\begin{align}
\sum_{t=1}^T \sum_{a\in\cA}
    c_t(x,u,a)
    \log
    \frac{\pif^*(a|x,u)}
         {\hat\pi_t(a|x,u)} \le A\log\bigl(W_T(x,u)+1\bigr)+3A
    \notag.
\end{align}
Therefore, we have
\begin{align*}
    \sum_{t=1}^T
        \big[
            \ell_{t,h}(\hat\pi_t)
            -
            \ell_{t,h}(\pif^*)
        \big] &= \sum_{x\in\cX}\sum_{u\in\cA^{h-1}}\sum_{t=1}^T\sum_{a\in\cA}c_t(x,u,a)\log\frac{\pif^*(a|x,u)}{\hat\pi_t(a|x,u)}\\
        &\leq
    \sum_{(x,u)\in\cX\times\cA^{h-1}}
    \Big[
        A\log\bigl(W_T(x,u)+1\bigr)+3A
    \Big].
\end{align*}
For every fixed level $h$ and round $t$, it is easy to check that
\begin{align*}
    \sum_{(x,u)\in\cX \times \cA^{h-1}} \sum_{a\in \cA}
    c_t(x,u,a)
    =1.
\end{align*}
Consequently,
\begin{align*}
    \sum_{(x,u)\in\cX\times\cA^{h-1}}W_T(x,u)
    =T,
\end{align*}
and hence $W_T(x,u)\leq T$ for any $(x,u)$. This leads to
\begin{align}
\label{eq:fkl-final}
    \sum_{t=1}^T
        \big[
            \ell_{t,h}(\hat\pi_t)
            -
            \ell_{t,h}(\pif^*)
        \big] 
        &\leq
    2SA^{h} \log T.
\end{align}
Finally, summing \eqref{eq:fkl-final} over
$h\in[H]$, we have
\begin{align}
\notag
    \sum_{t=1}^T
        \big[
            \ell_{t}(\hat\pi_t)
            -
            \ell_{t}(\pif^*)
        \big] 
        &\leq
    2S \log T \sum_{h=1}^H A^h \\\notag
    &= 2S \cdot \log T \frac{A(A^{H}-1)}{A-1}\\\label{eq:fkl-final2}
    &\le 4S A^H\log T.
\end{align}
Using \eqref{eq:proof-regret-empirical}, we obtain 
\begin{align*}
    \EE \big[\operatorname{Regret}(T)\big]
    &\le 4S A^H\log T.
\end{align*}
We next consider the high-probability regret bound. Define
\begin{align*}
    X_t
    &:=
    \ell_t(\pif^*)-\ell_t(\hat\pi_t),\\
    r_t
    &:=
    -\EE[X_t\mid\cF_t]
    =
    \EE_{x\sim\bar\rho}
    \KL\big(
        \pif^*(\cdot|x)
        \|
        \hat\pi_t(\cdot|x)
    \big).
\end{align*}
Thus, $\operatorname{Regret}(T)=\sum_{t=1}^T r_t$.
Define $M_0 = 0$, and
\begin{align}
    M_t
    :=
    \sum_{s=1}^t(X_s+r_s),
    \qquad t\in[T].
\notag
\end{align}
Since $r_t=-\EE[X_t\mid\cF_t]$, we have $\EE[M_t-M_{t-1} | \cF_t]=0$. Thus, $\{M_t\}_{t=0}^T$ is a martingale with respect to $\{\cF_{t+1}\}_{t=0}^T$.

Our goal is to control $M_T$. Using the
definition of $\ell_t$,
\begin{align}
\notag
    X_t
    &=
    \sum_{h=1}^H
    \sum_{(x,u,a)}
    c_t(x,u,a)
    \log
    \frac{\hat\pi_t(a|x,u)}
         {\pif^*(a|x,u)}.
\end{align}
First note that in Algorithm \ref{algo:ar-fkl},
\begin{align*}
    \hat\pi_t(a|x,u)
    &=
    \frac{W_{t-1}(x,u,a)+1/2}
         {W_{t-1}(x,u)+A/2}\\
    &\geq
    \frac{1}{2(t-1)+A}
    \geq
    \frac{1}{2T+A}.
\end{align*}
Since $\pif^*(a|x,u)\leq1$, it follows that
\begin{align}
    \frac{X_t}{H}
    \geq
    -L_T,
    \qquad
    L_T:=\log(2T+A).
\label{eq:martingale-lower-bound}
\end{align}
Here we use the following equation 
\begin{align*}
    \sum_{(x,u,a)}
    c_t(x,u,a)
    =1.
\end{align*}
Moreover, we can see
\begin{align}
    \sum_{h=1}^H
    \sum_{(x,u,a)}
    \frac{c_t(x,u,a)}{H}
    =1.
\notag
\end{align}
Using the AM-GM inequality, we have
\begin{align}
    \exp\bigg(\frac{X_t}{H}\bigg)
    &=
    \prod_{h=1}^H
    \prod_{(x,u,a)}
    \left(
        \frac{\hat\pi_t(a|x,u)}
             {\pif^*(a|x,u)}
    \right)^{c_t(x,u,a)/H}
    \notag\\
    &\leq
    \frac1H
    \sum_{h=1}^H
    \sum_{(x,u,a)}
    c_t(x,u,a)
    \frac{\hat\pi_t(a|x,u)}
         {\pif^*(a|x,u)}.
\notag
\end{align}
Taking conditional expectation over $\cF_t$ and using
\begin{align*}
    \EE[c_t(x,u,a)\mid\cF_t]
    =
    \bar\rho(x)
    \pif^*(u|x)
    \pif^*(a|x,u),
\end{align*}
we obtain
\begin{align}
    \EE\bigg[
        \exp\bigg(\frac{X_t}{H}\bigg)
        \bigg|
        \cF_t
    \bigg]
    &\leq
    \frac1H
    \sum_{h=1}^H
    \sum_{(x,u,a)}
    \bar\rho(x)\pif^*(u|x)
    \pif^*(a|x,u)
    \frac{\hat\pi_t(a|x,u)}
         {\pif^*(a|x,u)}
    \notag\\
    &=
    \frac1H
    \sum_{h=1}^H
    \sum_{(x,u)}
    \bar\rho(x)\pif^*(u|x)
    \sum_{a}
    \hat\pi_t(a|x,u)
    \notag\\
    &\leq
    \frac1H
    \sum_{h=1}^H
    \sum_{x\in\cX}
    \bar\rho(x)
    \sum_{u\in\cA^{h-1}}
    \pif^*(u|x)
    \notag\\
    &=1.
    \label{eq:proof-Xt-central-condition}
\end{align}
The following lemma shows that a one-sided lower bound and an exponential central condition provide exponential control of deviations from the conditional mean.
\begin{lemma}
\label{lem:central-to-mgf}
Let $\cF$ be a $\sigma$-algebra, and let $Y$ be a random variable satisfying,
almost surely,
\begin{align}
\notag
    Y\geq-L,
    \qquad
    \EE[e^Y\mid\cF]\leq1,
\end{align}
for some $L\geq0$. Define $ \mu:=-\EE[Y\mid\cF]$. Then $\mu\in[0,L]$, and, for every $\theta\in[0,1]$,
\begin{align}
    \log
    \EE\Big[
        e^{\theta(Y+\mu)}
        \Big|
        \cF
    \Big]
    \leq
    (1+L)\theta^2\mu.
\notag
\end{align}
\end{lemma}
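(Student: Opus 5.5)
Throughout, all expectations are conditional on $\cF$, which we suppress. We first show $\mu\in[0,L]$. Jensen's inequality and $\EE[e^Y]\le1$ give $\EE[Y]\le\log\EE[e^Y]\le0$, so $\mu\ge0$. The bound $Y\ge-L$ gives $\EE[Y]\ge-L$, so $\mu\le L$. Writing $Z:=Y+\mu$, the claim becomes $\log\EE[e^{\theta Z}]\le(1+L)\theta^2\mu$ with $\EE[Z]=0$.

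The plan is a Bernstein-type argument using only the one-sided lower bound, with the variance proxy controlled by $\mu$ through the central condition. We use the inequality
\[
e^{s}\le 1+s+\tfrac{s^2}{2}\quad\text{for } s\le0 .
\]
For $s>0$ we need a comparison with $e^{s}-1-s$ itself. Define $\phi(s):=e^s-1-s\ge0$. The function $\theta\mapsto\phi(\theta y)/\theta^2$ is nondecreasing in $\theta>0$ for every real $y$ (its power series is $\sum_{k\ge2}\theta^{k-2}y^k/k!$ for $y\ge0$, and the monotonicity is a standard fact for all $y$). Hence for $\theta\in[0,1]$,
\[
\EE[\phi(\theta Y)]\le\theta^2\EE[\phi(Y)]=\theta^2\bigl(\EE[e^Y]-1-\EE[Y]\bigr)\le\theta^2\mu .
\]
This is the key step: the central condition turns $\EE[\phi(Y)]$ into something at most $\mu$.

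Next we pass from $Y$ to $Z=Y+\mu$. We have
\[
\EE[e^{\theta Z}]=e^{\theta\mu}\EE[e^{\theta Y}]=e^{\theta\mu}\bigl(1-\theta\mu+\EE[\phi(\theta Y)]\bigr)\le e^{\theta\mu}\bigl(1-\theta\mu+\theta^2\mu\bigr).
\]
Taking logarithms and using $\log(1+v)\le v$,
\[
\log\EE[e^{\theta Z}]\le\theta\mu+\log(1-\theta\mu+\theta^2\mu)\le\theta^2\mu .
\]
This is already at most $(1+L)\theta^2\mu$. If $1-\theta\mu+\theta^2\mu\le0$ the bound would be vacuous, but that cannot happen, since the left side $\EE[e^{\theta Z}]$ is positive and the inequality chain forces the right side to be positive. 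So the lemma appears to hold even with constant $1$, and the factor $(1+L)$ is slack. The lower bound $Y\ge-L$ is then only needed for $\mu\le L$ and integrability.

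The main obstacle is verifying the monotonicity of $\phi(\theta y)/\theta^2$ for negative $y$. We set $g(s)=\phi(s)/s^2$ and show $g$ is nondecreasing on $\mathbb R$. Then $\phi(\theta y)/\theta^2=y^2g(\theta y)$. For $y<0$ this requires $g$ to be nonincreasing along $\theta\mapsto\theta y$, which is decreasing in $\theta$, and that is consistent with $g$ nondecreasing on $\mathbb R$. The claim that $g$ is nondecreasing follows from $g(s)=\int_0^1(1-r)e^{rs}\,dr$, an integral of functions that are each increasing in $s$.

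If this step fails, the fallback is to split on $Y\ge0$ versus $Y\in[-L,0)$. On $Y\in[-L,0)$ we would bound $\phi(\theta Y)\le\theta^2Y^2/2\le\theta^2L|Y|/2$ and control $\EE[|Y|\ind(Y<0)]$ by $\mu$ plus the positive part, which is where the $(1+L)$ factor would naturally enter.
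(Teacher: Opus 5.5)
There is a genuine gap, and it sits exactly at the step you flagged. Your representation $g(s)=\phi(s)/s^2=\int_0^1(1-r)e^{rs}\,dr$ is correct and shows that $g$ is nondecreasing on $\mathbb R$. For $y<0$, however, it yields the \emph{reverse} of what you need. Since $\phi(\theta y)/\theta^2=y^2g(\theta y)$ and $\theta\mapsto\theta y$ is decreasing, the map $\theta\mapsto g(\theta y)$ is nonincreasing, so $\phi(\theta y)\ge\theta^2\phi(y)$ for $\theta\in(0,1]$. Concretely, $y=-10$, $\theta=0.1$ gives $\phi(-1)=e^{-1}\approx0.37>0.09\approx\theta^2\phi(-10)$.

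Consequently $\EE[\phi(\theta Y)]\le\theta^2\EE[\phi(Y)]$ fails, and your constant-$1$ conclusion is false, not merely unproved. Take $Y=-L$ with probability $p$ and $Y=c$ otherwise, with $c>0$ chosen so that $\EE[e^Y]=1$. Expanding as $p\to0$ gives $\mu=p\,\phi(-L)+O(p^2)$ and $\log\EE[e^{\theta(Y+\mu)}]=p\,\phi(-\theta L)+O(p^2)$. The ratio to $\theta^2\mu$ therefore tends to $\phi(-\theta L)/(\theta^2\phi(-L))$. This is about $4.1$ at $L=10$, $\theta=0.1$, and equals $L^2/(2\phi(-L))$ in the limit $\theta\to0$, which grows like $L/2$. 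So the factor $(1+L)$ is not slack, and the hypothesis $Y\ge-L$ is used essentially on the negative range, not just to get $\mu\le L$.

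Your fallback, as sketched, does not close either. Bounding $Y^2\le L|Y|$ on $\{Y<0\}$ compares against $|Y|$ instead of $\phi(Y)$, which is too lossy when $|Y|\ll L$. Take $Y=-s$ with probability $p$, with the positive value adjusted so that $\EE[e^Y]=1$, $p$ small and $s<L/(1+L)$. Then your negative-part bound $\theta^2(L/2)\EE[|Y|\ind(Y<0)]=\theta^2Lps/2$ already exceeds the target $(1+L)\theta^2\mu\approx(1+L)\theta^2p\,\phi(-s)\le(1+L)\theta^2ps^2/2$.

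The missing idea is a pointwise comparison on $[-L,0]$ against $\phi$ itself, which is what the paper does. For $s\in[0,L]$ and $\theta\in[0,1]$, $\phi(-\theta s)\le\theta^2s^2/2$. Meanwhile $\phi(-s)=\int_0^s(1-e^{-v})\,dv\ge\int_0^s\frac{v}{1+v}\,dv\ge\frac{s^2}{2(1+L)}$, using $e^{-v}\le1/(1+v)$. Combined with your power-series argument for $y\ge0$, this gives $\phi(\theta y)\le(1+L)\theta^2\phi(y)$ for every $y\ge-L$. The rest of your computation then goes through verbatim with constant $1+L$: $\EE[\phi(Y)]\le\mu$, then $\EE[e^{\theta Y}]=1-\theta\mu+\EE[\phi(\theta Y)]$, then $\log(1+v)\le v$. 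At that point it coincides with the paper's proof.
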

Applying Lemma \ref{lem:central-to-mgf} to $Y = X_t/H$, $L=L_T$, we have for every $\lambda\in[0,1/H]$
\begin{align}
    \log
    \EE\left[
        e^{\lambda(X_t+r_t)}
        \,\middle|\,
        \cF_t
    \right]
    \leq
    H(1+L_T)\lambda^2r_t.
    \label{eq:fkl-centered-mgf}
\end{align}
Define
\begin{align*}
    \mathcal E_t(\lambda)
    :=
    \exp\bigg(
        \lambda\sum_{s=1}^t(X_s+r_s)
        -
        H(1+L_T)\lambda^2\sum_{s=1}^t r_s
    \bigg).
\end{align*}
Then, \eqref{eq:fkl-centered-mgf} implies that $\{\mathcal E_s(\lambda)\}_{s=0}^T$ is a nonnegative supermartingale with
$\mathcal E_0(\lambda)=1$. As a result, $\EE[\cE_T(\lambda)] \le \EE[\cE_0(\lambda)]=1$. Moreover, we have
\begin{align*}
    \cE_T(\lambda)= \exp\Big(\lambda M_T - H(1+L_T)\lambda^2 \operatorname{Regret}(T)\Big).
\end{align*}
Therefore, for any $C > 0$, by Markov's inequality, we have
\begin{align*}
    \PP \Big[\lambda M_T - H(1+L_T)\lambda^2 \operatorname{Regret}(T) \ge C\Big] \le \frac{\EE[\cE_T(\lambda)]}{e^C} \le \frac{1}{e^C}.
\end{align*}
Set $C = \log(1/\delta)$. Now we obtain with probability at least $1-\delta$,
\begin{align}
    M_T = \sum_{t=1}^T(X_t+r_t)
    \leq
    H(1+L_T)\lambda\operatorname{Regret}(T)
    +
    \frac{\log(1/\delta)}{\lambda}.
    \label{eq:fkl-martingale-bound}
\end{align}

On the other hand, \eqref{eq:fkl-final2} gives
\begin{align*}
    -\sum_{t=1}^T X_t
    =
    \sum_{t=1}^T
    \left[
        \ell_t(\hat\pi_t)-\ell_t(\pif^*)
    \right]
    \leq 4S A^H\log T.
\end{align*}
Moreover, we have
\begin{align*}
    \operatorname{Regret}(T)
    =
    -\sum_{t=1}^T X_t
    +
    \sum_{t=1}^T(X_t+r_t).
\end{align*}
Therefore, with probability at least $1-\delta$, the following inequality holds:
\begin{align}
    \operatorname{Regret}(T)
    \leq
    4S A^H\log T
    +
    H(1+L_T)\lambda\operatorname{Regret}(T)
    +
    \frac{\log(1/\delta)}{\lambda}.
\label{eq:fkl-self-bounding-regret}
\end{align}
Choose
\begin{align*}
    \lambda
    =
    \frac{1}{2H(1+\log(2T+A))} \le \frac{1}{H}.
\end{align*}
We have $H(1+L_T)\lambda = 1/2$. Rearranging
\eqref{eq:fkl-self-bounding-regret} yields
\begin{align}
    \operatorname{Regret}(T)
    \leq
    8S A^H\log T
    +
    4H\bigl(1+\log(2T+A)\bigr)
    \log\frac1\delta.
\notag
\end{align}
\end{proof}
\section{Proof of Theorem \ref{thm:ar-rkl-regret}}
\label{sec:proof-rkl-thm}
\begin{proof}[Proof of Theorem \ref{thm:ar-rkl-regret}]
We first control the error of the optimistic token-score estimates. Let $\cF_t$ denote the history before round $t$. In this way, both $\hat l_{t-1}$ and $\hat\pi_t$ are $\cF_t$-measurable.

Let $\cE$ be the event in Lemma~\ref{lem:ar-token-confidence}, which
satisfies $\PP(\cE)\geq1-2\delta$. On this event, simultaneously for
every round $s$ and every visited triple $(x,u,a)$,
\begin{align}
    \left|
        \bar l_s(x,u,a)-\log g_h(a|x,u)
    \right|
    \leq \beta_s(x,u,a).
    \label{eq:rkl-proof-confidence}
\end{align}
Moreover, Assumption \ref{assump:token-bound} and the definition of $g_h$ imply that for every
teacher $i$,
\begin{align*}
    \big|
        \log p_i(a|x,u)-\log g_h(a|x,u)
    \big|
    &=
    \bigg|
        \log\frac{p_i(a|x,u)}{\piref(a|x,u)}
        -
        \sum_{k\in\cI}w_k(x)
        \log\frac{p_k(a|x,u)}{\piref(a|x,u)}
    \bigg|
    \leq2B.
\end{align*}
Since $\bar l_s(x,u,a)$ is an average of the observed teacher
log-probabilities at this triple, it follows that
\begin{align}
\label{eq:rkl-beta-trivial}
    \big|
        \bar l_s(x,u,a)-\log g_h(a|x,u)
    \big|
    \leq2B.
\end{align}
Combining 
\eqref{eq:rkl-proof-confidence} and \eqref{eq:rkl-beta-trivial}, we obtain
\begin{align*}
    \big|
        \bar l_s(x,u,a)-\log g_h(a|x,u)
    \big|
    \leq\min\big\{\beta_s(x,u,a),2B\big\}.
\end{align*}
Consequently, on the event $\cE$, the optimistic estimate
$\hat l_s=\bar l_s+\min\{\beta_s,2B\}$ satisfies
\begin{align*}
    0
    \leq
    \hat l_s(x,u,a)-\log g_h(a|x,u)
    \leq
    \min\big\{2\beta_s(x,u,a),4B\big\}.
\end{align*}
For an unvisited triple, the algorithm sets
\begin{align*}
    \hat l_s(x,u,a)
    =
    \log\piref(a|x,u)+B.
\end{align*}
Assumption~\ref{assump:token-bound} implies
\begin{align*}
    -B
    \leq
    \log g_h(a|x,u)-\log\piref(a|x,u)
    \leq B.
\end{align*}
Therefore,
\begin{align*}
    0
    \leq
    \hat l_s(x,u,a)-\log g_h(a|x,u)
    \leq2B.
\end{align*}
We next consider the output policy. Multiplying the token conditionals defined by the backward recursion gives
\begin{align}
    \hat\pi_t(y|x)
    &=
    \prod_{h=1}^H
    \frac{
        \exp\bigl(\hat l_{t-1}(x,y_{<h},a_h)\bigr)
        \hat V_{t-1,h+1}(x,y_{\leq h})
    }{
        \hat V_{t-1,h}(x,y_{<h})
    }\notag\\
    &=
    \frac{
        \exp\left(
            \sum_{h=1}^H
            \hat l_{t-1}(x,y_{<h},a_h)
        \right)
    }{
        \hat V_{t-1,1}(x)
    },\label{eq:rkl-proof-sequence-policy}
\end{align}
where the second equation holds by canceling out the telescoping terms and
$\hat V_{t-1,H+1}=1$. Comparing this expression with the reverse target
in \eqref{eq:seq-rkl}, we obtain
\begin{align}
    \log\frac{\hat\pi_t(y|x)}{\pir^*(y|x)}
    &=
    \sum_{h=1}^H
    \Big[
        \hat l_{t-1}(x,y_{<h},a_h)
        -
        \log g_h(a_h|x,y_{<h})
    \Big]+
    \log\frac{V_1(x)}{\hat V_{t-1,1}(x)}.
    \label{eq:rkl-proof-sequence-ratio}
\end{align}
Moreover, the ratio of the normalizing constants can be expressed as an expectation under $\hat\pi_t$. To see this, first expand the expectation:
\begin{align}
\notag
    &\EE_{y\sim\hat\pi_t(\cdot|x)}
    \exp\bigg(
        -\sum_{h=1}^H
        \Big[
            \hat l_{t-1}(x,y_{<h},a_h)
            -
            \log g_h(a_h|x,y_{<h})
        \Big]
    \bigg)\\\label{eq:rkl-expect}
    &\quad=
    \sum_{y\in\cY}
    \hat\pi_t(y|x)
    \exp\bigg(
        -\sum_{h=1}^H\hat l_{t-1}(x,y_{<h},a_h)
        +
        \sum_{h=1}^H\log g_h(a_h|x,y_{<h})
    \bigg).
\end{align}
Substituting
\eqref{eq:rkl-proof-sequence-policy} into \eqref{eq:rkl-expect}, the right-hand side becomes
\begin{align*}
    &\sum_{y\in\cY}
    \frac{
        \exp\big(
            \sum_{h=1}^H\hat l_{t-1}(x,y_{<h},a_h)
        \big)
    }{
        \hat V_{t-1,1}(x)
    }
    \exp\bigg(
        -\sum_{h=1}^H\hat l_{t-1}(x,y_{<h},a_h)
        +
        \sum_{h=1}^H\log g_h(a_h|x,y_{<h})
    \bigg)\\
    &\quad=
    \frac{1}{\hat V_{t-1,1}(x)}
    \sum_{y\in\cY}
    \exp\bigg(
        \sum_{h=1}^H\log g_h(a_h|x,y_{<h})
    \bigg)\\
    &\quad=
    \frac{1}{\hat V_{t-1,1}(x)}
    \sum_{y\in\cY}
    \prod_{h=1}^H g_h(a_h|x,y_{<h}).
\end{align*}
Finally, recall that in Theorem \ref{thm:obj-reverse-kl}, we have seen
\begin{align*}
    V_1(x)
    =
    \sum_{y\in\cY}
    \prod_{h=1}^H g_h(a_h|x,y_{<h}).
\end{align*}
Consequently,
\begin{align}
    \EE_{y\sim\hat\pi_t(\cdot|x)}
    \exp\bigg(
        -\sum_{h=1}^H
        \Big[
            \hat l_{t-1}(x,y_{<h},a_h)
            -
            \log g_h(a_h|x,y_{<h})
        \Big]
    \bigg)
    =
    \frac{V_1(x)}{\hat V_{t-1,1}(x)}.
    \label{eq:rkl-proof-normalizer-ratio}
\end{align}
Taking expectation over $\hat \pi_t(\cdot|x)$ and substituting \eqref{eq:rkl-proof-normalizer-ratio} into
\eqref{eq:rkl-proof-sequence-ratio}  yields
\begin{align}
    &\KL\bigl(
        \hat\pi_t(\cdot|x)
        \,\big\|\,
        \pir^*(\cdot|x)
    \bigr)=
    \EE_{y\sim\hat\pi_t(\cdot|x)}
    \bigg[
        \sum_{h=1}^H
        \Big[
            \hat l_{t-1}(x,y_{<h},a_h)
            -
            \log g_h(a_h|x,y_{<h})
        \Big]
    \bigg]\notag\\
    &\quad+
    \log\EE_{y\sim\hat\pi_t(\cdot|x)}
    \exp\bigg(
        -\sum_{h=1}^H
        \Big[
            \hat l_{t-1}(x,y_{<h},a_h)
            -
            \log g_h(a_h|x,y_{<h})
        \Big]
    \bigg).
    \label{eq:rkl-proof-kl-identity}
\end{align}
On the event $\cE$, $\hat l_{t-1}(x,y_{<h},a_h) \ge \log g_h(a_h|x,y_{<h})$ for any $t,h$. We can therefore apply the basic inequality $e^{-v}\leq1-v+v^2/2$ for $v\geq0$. Let $V = \sum_{h=1}^H
        \big[
            \hat l_{t-1}(x,y_{<h},a_h)
            -
            \log g_h(a_h|x,y_{<h})
        \big]$. The right-hand side can be bounded as
        \begin{align*}
            \EE[V] + \log \EE \exp(-V) &\le \EE[V] + \log \big(1-\EE [V]+\EE [V^2]/2\big)\\
            & \le \EE[V] - \EE[V] + \frac{1}{2} \EE [V^2]\\
            &= \frac{1}{2} \EE [V^2],
        \end{align*}
where we use $\log x \le x - 1$ for any $x > 0$.  
As a result, we have
\begin{align}
    \KL\bigl(
        \hat\pi_t(\cdot|x)
        \,\big\|\,
        \pir^*(\cdot|x)
    \bigr)&\leq
    \frac12
    \EE_{y\sim\hat\pi_t(\cdot|x)}
    \bigg[
        \bigg(
            \sum_{h=1}^H
            \Big[
                \hat l_{t-1}(x,y_{<h},a_h)
                -
                \log g_h(a_h|x,y_{<h})
            \Big]
        \bigg)^2
    \bigg]\notag\\
    &\quad\leq
    \frac H2
    \EE_{y\sim\hat\pi_t(\cdot|x)}
    \bigg[
        \sum_{h=1}^H
        \Big[
            \hat l_{t-1}(x,y_{<h},a_h)
            -
            \log g_h(a_h|x,y_{<h})
        \Big]^2
    \bigg],
    \label{eq:rkl-proof-kl-token-error}
\end{align}
where the last inequality holds due to the Cauchy--Schwarz inequality. 

We next relate the expected squared errors in
\eqref{eq:rkl-proof-kl-token-error} to those evaluated on the observed
rollouts. Define
\begin{align*}
    X_t
    :=
    \frac{H}{2m}
    \sum_{j=1}^m\sum_{h=1}^H
    \Big[
        \hat l_{t-1}(x_{t,j},y_{t,j,<h},a_{t,j,h})
        -
        \log g_h(a_{t,j,h}|x_{t,j},y_{t,j,<h})
    \Big]^2.
\end{align*}
Conditionally on $\cF_t$, each rollout has joint distribution
\begin{align*}
    \PP(x_{t,j}=x,y_{t,j}=y\mid\cF_t)
    &=
    \frac1I\sum_{i\in\cI}
    \rho_i(x)\hat\pi_t(y|x)\\
    &=
    \bar\rho(x)\hat\pi_t(y|x).
\end{align*}
Since $\hat l_{t-1}$ and $\hat\pi_t$ are $\cF_t$-measurable,
linearity of conditional expectation therefore gives
\begin{align}
    \EE[X_t\mid\cF_t]
    &=
    \frac{H}{2m}
    \sum_{j=1}^m
    \EE_{x\sim\bar\rho}
    \EE_{y\sim\hat\pi_t(\cdot|x)}
    \bigg[
        \sum_{h=1}^H
        \Big[
            \hat l_{t-1}(x,y_{<h},a_h)
            -
            \log g_h(a_h|x,y_{<h})
        \Big]^2
    \bigg]\notag\\
    &=
    \frac H2
    \EE_{x\sim\bar\rho}
    \EE_{y\sim\hat\pi_t(\cdot|x)}
    \bigg[
        \sum_{h=1}^H
        \Big[
            \hat l_{t-1}(x,y_{<h},a_h)
            -
            \log g_h(a_h|x,y_{<h})
        \Big]^2
    \bigg].
    \label{eq:rkl-proof-empirical-mean}
\end{align}
Note that this does not require independence among the rollouts within a round. On the event $\cE$, combining \eqref{eq:rkl-proof-empirical-mean} with~\eqref{eq:rkl-proof-kl-token-error}, we have
\begin{align}
    \operatorname{Regret}(T)
    &=
    \sum_{t=1}^T
    \EE_{x\sim\bar\rho}
    \KL\bigl(
        \hat\pi_t(\cdot|x)
        \,\big\|\,
        \pir^*(\cdot|x)
    \bigr)
    \leq
    \sum_{t=1}^T\EE[X_t\mid\cF_t].
    \label{eq:rkl-proof-regret-predictable}
\end{align}
To apply Lemma~\ref{lem:foster}, we need a bound on $X_t$. At every previously visited triple,
\eqref{eq:rkl-beta-trivial} and the definition of $\hat l_{t-1}$ give
\begin{align*}
    \big|
        \hat l_{t-1}(x,u,a)-\log g_h(a|x,u)
    \big|
    &\leq
    \big|
        \bar l_{t-1}(x,u,a)-\log g_h(a|x,u)
    \big|
    +
    \min\{\beta_{t-1}(x,u,a),2B\}\\
    &\leq 4B.
\end{align*}
At an unvisited triple, 
$\hat l_{t-1}(x,u,a)=\log\piref(a|x,u)+B$ indicates $\big|
        \hat l_{t-1}(x,u,a)-\log g_h(a|x,u)
    \big| \in [0,2B]$. Consequently, in both cases, 
\begin{align*}
    0\leq X_t
    \leq
    \frac{H}{2m}\cdot mH\cdot(4B)^2
    =
    8H^2B^2.
\end{align*}
Moreover, $X_t$ is $\cF_{t+1}$-measurable. Applying
Lemma~\ref{lem:foster}, with probability at least $1-\delta$, the following inequality holds
\begin{align}
    \sum_{t=1}^T\EE[X_t\mid\cF_t]
    \leq
    2\sum_{t=1}^T X_t
    +
    64H^2B^2\log\frac2\delta.
    \label{eq:rkl-proof-martingale}
\end{align}
Substituting \eqref{eq:rkl-proof-martingale} into \eqref{eq:rkl-proof-regret-predictable} and taking a union bound, we conclude
that, with probability at least $1-3\delta$,
\begin{align}
\notag
    \operatorname{Regret}(T)
    &\leq
    \frac Hm
    \underbrace{\sum_{t=1}^T\sum_{j=1}^m\sum_{h=1}^H
    \Big[
        \hat l_{t-1}(x_{t,j},y_{t,j,<h},a_{t,j,h})
        -
        \log g_h(a_{t,j,h}|x_{t,j},y_{t,j,<h})
    \Big]^2}_{I}\\
    &\qquad +
    64H^2B^2\log\frac2\delta.
    \label{eq:rkl-proof-regret-empirical}
\end{align}
It remains to bound $I$ for every data trajectory on $\cE$.
Regrouping the observations by their corresponding triples gives
\begin{align*}
    I
    =
    \sum_{h=1}^H
    \sum_{(x,u,a)}
    \sum_{t=1}^T
    \big[
        N_t(x,u,a)-N_{t-1}(x,u,a)
    \big]
    \Big[
        \hat l_{t-1}(x,u,a)-\log g_h(a|x,u)
    \Big]^2.
\end{align*}
Here $N_t(x,u,a)-N_{t-1}(x,u,a)$ counts the visits to this triple
in round $t$, each of which contributes the same squared error.
Since there are $m$ samples per round,
\begin{align*}
    0
    \leq N_t(x,u,a)-N_{t-1}(x,u,a)
    \leq m,
    \qquad
    N_T(x,u,a)\leq mT.
\end{align*}
For each fixed triple $(x,u,a)$, we split the inner sum according to whether
$N_{t-1}(x,u,a)=0$ or $N_{t-1}(x,u,a)\geq1$.
In the former case, only the first round that visits this triple can contribute, with at most $m$ visits. Hence,
\begin{align*}
    \sum_{\substack{t\in[T]:\\N_{t-1}(x,u,a)=0}}
    \big[N_t(x,u,a)-N_{t-1}(x,u,a)\big]
    \Big[
        \hat l_{t-1}(x,u,a)-\log g_h(a|x,u)
    \Big]^2
    \leq4mB^2.
\end{align*}
Summing this bound over all $K:=S\sum_{h=1}^H A^h$ triples and applying Lemma \ref{lem:ar-token-confidence} gives
\begin{align}
\notag
    &I
    \leq
    4mB^2K
    \\\label{eq:nonzero-sum}
    & \qquad+
    \sum_{h=1}^H
    \sum_{(x,u,a)}
    \sum_{\substack{t\in[T]:\\N_{t-1}(x,u,a)\geq1}}
    \big[N_t(x,u,a)-N_{t-1}(x,u,a)\big]\cdot
    \min\Big\{
        16B^2,\,
        4\beta_{t-1}(x,u,a)^2
    \Big\}.
\end{align}
Recall that
\begin{align*}
    \beta_t(x,u,a) = \tilde O\bigg(
        B\sqrt{\frac{mH}{N_t(x,u,a)}}
        +
        \frac{BmH}{N_t(x,u,a)}
    \bigg).
\end{align*}
For $N_{t-1}(x,u,a)\geq1$, squaring the confidence radius gives
\begin{align*}
    4\beta_{t-1}(x,u,a)^2
    \leq
    \widetilde O\left(
        \frac{B^2mH}{N_{t-1}(x,u,a)}
        +
        \frac{B^2m^2H^2}{N_{t-1}(x,u,a)^2}
    \right),
\end{align*}
where we used $(v+w)^2\leq2v^2+2w^2$.
Combining this with the constant bound $16B^2$, we obtain
\begin{align*}
    \min\Big\{
        16B^2,\,
        4\beta_{t-1}(x,u,a)^2
    \Big\}&\leq
    \widetilde O\bigg(
        B^2\min\bigg\{
            1,\,
            \frac{mH}{N_{t-1}(x,u,a)}
            +
            \frac{m^2H^2}{N_{t-1}(x,u,a)^2}
        \bigg\}
    \bigg)\\
    &\qquad\leq
    \widetilde O\bigg(
        B^2 \min \bigg\{1, \frac{mH}{N_{t-1}(x,u,a)}\bigg\}
    \bigg),
\end{align*}
where we use $\min \{1,v+v^2\} \le 2 \min\{1,v\}$ since $v^2 \le v$ when $0\le v \le 1$.

Fix a triple $(x,u,a)$ with $N_T(x,u,a)\geq1$.
Order its visits chronologically, breaking ties within each round
arbitrarily. If its $k$-th visit occurs in round $t$, then
\begin{align*}
    k
    \leq N_{t-1}(x,u,a)+m
    \leq N_{t-1}(x,u,a)+mH,
\end{align*}
because at most $m$ visits occur in that round and $H\geq1$.
For each visit included in the remaining sum,
$N_{t-1}(x,u,a)\geq1$, and hence
\begin{align*}
    \min\left\{
        1,\frac{mH}{N_{t-1}(x,u,a)}
    \right\}
    &\leq
    \frac{2mH}{N_{t-1}(x,u,a)+mH}\\
    &\leq
    \frac{2mH}{k}.
\end{align*}
The first inequality uses
$\min\{1,v\}\leq2v/(1+v)$ for $v\geq0$. Since $N_t(x,u,a)-N_{t-1}(x,u,a)$ counts the visits in round $t$, the following summation counts all the visits to $(x,u,a)$ and thus can be indexed by $k$, leading to
\begin{align}
    &\sum_{\substack{t\in[T]:\\
        N_{t-1}(x,u,a)\geq1}}
    \big[N_t(x,u,a)-N_{t-1}(x,u,a)\big]
    \min\bigg\{
        1,\frac{mH}{N_{t-1}(x,u,a)}
    \bigg\}\notag\\
    &\qquad\leq
    2mH\sum_{k=1}^{N_T(x,u,a)}\frac1k\notag\\
    &\qquad\leq
    2mH\bigl[1+\log N_T(x,u,a)\bigr] \notag\\
    & \qquad 
    \leq
    2mH\log(emT),
    \label{eq:rkl-proof-harmonic-sum}
\end{align}
where we use $\sum_{k=1}^T 1/k \le 1 + \log T$ and $N_T(x,u,a)\leq mT$.

Substituting \eqref{eq:rkl-proof-harmonic-sum} into \eqref{eq:nonzero-sum} and summing over all $(x,u,a)$ triples, we have
\begin{align*}
    I
    &\leq
    4mB^2K
    +
    \widetilde O\bigl(B^2mHK\log(emT)\bigr)=
    \widetilde O\bigl(B^2mHK \log T\bigr) = \tilde O\bigl(B^2mSHA^H \log T\bigr).
\end{align*}
In conclusion, \eqref{eq:rkl-proof-regret-empirical} gives that with probability at least $1-3\delta$, the following inequality holds:
\begin{align*}
    \operatorname{Regret}(T) &\le \frac{H}{m} I + \tilde O(B^2H^2)\\
    &\le \tilde O\big(B^2H^2SA^H \log T\big).
\end{align*}
This completes the proof of Theorem \ref{thm:ar-rkl-regret}.
\end{proof}

\section{Proof of Lemmas Used in Appendices \ref{sec:proof-fkl-thm} and \ref{sec:proof-rkl-thm}}
\label{sec:proof-fkl-lemma}
\subsection{Proof of Lemma \ref{lem:statewise-kt}}
\begin{proof}[Proof of Lemma \ref{lem:statewise-kt}]
When $W_T = 0$, the inequality holds naturally. We therefore assume throughout the rest of the proof that $W_T>0$. If $A=1$, then $\mathrm B(v)=\frac{\Gamma(v)}{\Gamma(v)}=1$ for every $v>0$, and
\begin{align*}
    W_T(1)\log\frac{W_T(1)}{W_T}=0.
\end{align*}
Thus, $\cR(W_T(\cdot))=0$. It remains to consider $A\geq2$.

By the definition of the beta function \eqref{eq:beta-function}, we have
\begin{align}
    -\log
    \mathrm B\Big(
        W_T(\cdot)+\frac12\mathbf1
    \Big)
    &=
    \log\Gamma\Big(W_T+\frac A2\Big)
    -
    \sum_{a\in\cA}
    \log\Gamma\Big(W_T(a)+\frac12\Big).
    \label{eq:expand-beta}
\end{align}
Using Lemma \ref{lemma:stirling} with
$s=W_T+A/2$, we have
\begin{align}
    \log\Gamma\Big(W_T+\frac A2\Big)
    &\leq
    \Big(W_T+\frac{A-1}{2}\Big)
    \log\Big(W_T+\frac A2\Big)
    -
    \Big(W_T+\frac A2\Big)
    \notag\\
    &\qquad
    +
    \frac12\log(2\pi)
    +
    \frac{1}{12(W_T+A/2)}.
    \label{eq:stirling-total-count}
\end{align}
For every $a\in\cA$, applying Lemma \ref{lemma:stirling} with
$s=W_T(a)+1/2$, we have
\begin{align}
    \log\Gamma\Big(W_T(a)+\frac12\Big)
    &\geq
    W_T(a)\log\Big(W_T(a)+\frac12\Big)
    -
    \Big(W_T(a)+\frac12\Big)
    +
    \frac12\log(2\pi).
    \label{eq:stirling-action-count}
\end{align}
Substituting \eqref{eq:stirling-total-count} and
\eqref{eq:stirling-action-count} into \eqref{eq:expand-beta}, we obtain
\begin{align}
    &-\log
    \mathrm B\Big(
        W_T(\cdot)+\frac12\mathbf1
    \Big)
    \leq
    -
    \sum_{a\in\cA}
    W_T(a)\log\Big(W_T(a)+\frac12\Big)
    \notag\\
    &\quad+
    \Big(W_T+\frac{A-1}{2}\Big)
    \log\left(W_T+\frac A2\right)
    \notag-
    \frac{A-1}{2}\log(2\pi)
    +
    \frac{1}{12(W_T+A/2)},\notag
\end{align}
where we use $\sum_{a\in\cA}W_T(a)=W_T$. Recall that
\begin{align*}
    \cR\bigl(W_T(\cdot)\bigr)
    &=
    \log\mathrm B\Big(\frac12\mathbf1\Big)
    -
    \log\mathrm B\Big(
        W_T(\cdot)+\frac12\mathbf1
    \Big)
    +
    \sum_{a\in\cA}
    W_T(a)\log\frac{W_T(a)}{W_T}.
\end{align*}
Therefore, we have
\begin{align}
    &\cR\bigl(W_T(\cdot)\bigr)
    \leq
    \log\mathrm B\Big(\frac12\mathbf1\Big)
    +
    \underbrace{\sum_{a\in\cA}
    W_T(a)
    \log
    \frac{W_T(a)}{W_T(a)+1/2}}_{I_1}
    \notag\\
    &\quad+
    \underbrace{W_T
    \log\frac{W_T+A/2}{W_T}}_{I_2}
    +
    \underbrace{\frac{A-1}{2}
    \log\Big(W_T+\frac A2\Big)}_{I_3}
    -
    \frac{A-1}{2}\log(2\pi)
    +
    \frac{1}{12(W_T+A/2)}.
    \label{eq:redundancy-expanded}
\end{align}
Since $0\leq
    {W_T(a)}/{[W_T(a)+1/2]}
    \leq1$, $I_1 \le 0$.
For $I_2$, using $\log(1+s)\leq s$ for every $s\geq0$, we have
\begin{align}
    I_2
    &=
    W_T\log\left(1+\frac{A}{2W_T}\right)
    \leq
    \frac A2.
\notag
\end{align}
For $I_3$, we have $ W_T+\frac A2
    \leq
    (W_T+1)\big(1+ A/2\big)$. Thus, 
\begin{align*}
    W_T+\frac A2
    \leq
    (W_T+1)\Big(1+\frac A2\Big),
\end{align*}
and hence $I_3 \le (A-1) \log(W_T+1)/2 + (A-1) \log(1+A/2)/2$. As a result, 
\eqref{eq:redundancy-expanded} indicates
\begin{align}
    \cR\bigl(W_T(\cdot)\bigr)
    &\leq
    \frac{A-1}{2}\log(W_T+1)
    +
    \log\mathrm B\Big(\frac12\mathbf1\Big)
    +
    \frac A2
    +
    \frac{A-1}{2}
    \log\Big(1+\frac A2\Big)
    +
    \frac1{12}.
    \label{eq:redundancy-before-constants}
\end{align}
Finally, by definition,
\begin{align}
\notag
    \log\mathrm B\Big(\frac12\mathbf1\Big)
    =
    \frac A2\log\pi
    -
    \log\Gamma\Big(\frac A2\Big).
\end{align}
Applying Lemma \ref{lemma:stirling} with $s = A/2$, we have
\begin{align*}
    \log\Gamma\Big(\frac A2\Big)
    &\geq
    \frac{A-1}{2}\log\frac A2
    -
    \frac A2
    +
    \frac12\log(2\pi).
\end{align*}
Since $A\geq2$, we have
\begin{align}
    \log\mathrm B\Big(\frac12\mathbf1\Big)
    \leq
    \frac A2(\log\pi+1) - \frac{A-1}{2} \log \frac{A}{2}.
    \label{eq:initial-beta-bound}
\end{align}
Substituting \eqref{eq:initial-beta-bound} into \eqref{eq:redundancy-before-constants}, we have
\begin{align*}
    \cR\bigl(W_T(\cdot)\bigr)
    &\leq
    \frac{A-1}{2}\log(W_T+1)
    +
    \frac A2
    +
    \frac{A-1}{2}
    \log\Big(1+\frac A2\Big)
    +
    \frac1{12}\\
    &\qquad + \frac A2(\log\pi+1) - \frac{A-1}{2} \log \frac{A}{2}\\
    & \le A\log(W_T+1) + 3A,
\end{align*}
where we use $\log(1+A/2) - \log (A/2) \le 2/A$. This completes the proof of Lemma \ref{lem:statewise-kt}.
\end{proof}
\subsection{Proof of Lemma \ref{lem:central-to-mgf}}
 \begin{proof}[Proof of Lemma \ref{lem:central-to-mgf}]
By conditional Jensen's inequality,
\begin{align*}
    \exp\bigl(\EE[Y\mid\cF]\bigr)
    \leq
    \EE[e^Y\mid\cF]
    \leq1.
\end{align*}
Therefore, $\EE[Y\mid\cF]\leq0$ and hence $\mu\geq0$. Moreover,
$Y\geq-L$ implies $\EE[Y\mid\cF]\geq-L$, so $\mu\leq L$.

We first show that, for every $y\geq-L$ and $\theta\in[0,1]$,
\begin{align}
    e^{\theta y}-1-\theta y
    \leq
    (1+L)\theta^2(e^y-1-y).
    \label{eq:central-pointwise}
\end{align}
First, suppose that $y\geq0$. Expanding the exponential using Taylor's series gives
\begin{align*}
    e^{\theta y}-1-\theta y
    &=
    \sum_{k=2}^{\infty}
    \frac{\theta^ky^k}{k!}\\
    &\leq
    \theta^2
    \sum_{k=2}^{\infty}
    \frac{y^k}{k!}\\
    &=
    \theta^2(e^y-1-y),
\end{align*}
where we used $\theta^k\leq\theta^2$ for every $k\geq2$.

It remains to show \eqref{eq:central-pointwise} when $-L \le y \le 0$. Let $y=-s$ for some $s\in[0,L]$. Then, on the one hand, the left-hand side of \eqref{eq:central-pointwise} satisfies
\begin{align*}
    e^{-\theta s}-1+\theta s
    \leq
    \frac{\theta^2s^2}{2}.
\end{align*}
On the other hand, the right-hand side of \eqref{eq:central-pointwise} satisfies:
\begin{align*}
    (1+L)\theta^2(e^{-s}-1+s)
    &=
    (1+L)\theta^2\int_0^s(1-e^{-v})\,\mathrm dv\\
    &\geq(1+L)\theta^2
    \int_0^s\frac{v}{1+v}\,\mathrm dv\\
    &\geq(1+L)\theta^2
    \int_0^s\frac{v}{1+L}\,\mathrm dv\\
    &=
    \frac{\theta^2s^2}{2},
\end{align*}
where the first inequality holds due to $e^x \ge 1+x$, and thus $e^{-x} \le 1/(1+x)$ whenever $x \ge 0$. The second inequality holds due to $v \le s \le L$.
Therefore, we have proved \eqref{eq:central-pointwise} for $y\in[-L,0]$.

Using \eqref{eq:central-pointwise}, we obtain
\begin{align}
    \EE[e^{\theta Y}\mid\cF]
    &=
    1+\theta\EE[Y\mid\cF]
    +
    \EE[
        e^{\theta Y}-1-\theta Y
        \mid\cF
    ]
    \notag\\
    &\leq
    1-\theta\mu
    +
    (1+L)\theta^2
    \EE[e^Y-1-Y\mid\cF].
    \label{eq:central-uncentered}
\end{align}
Using $\EE[e^Y\mid\cF]\leq1$, we have
\begin{align*}
    \EE[e^Y-1-Y\mid\cF]
    &=
    \EE[e^Y\mid\cF]-1-\EE[Y\mid\cF]\\
    &\leq
    -\EE[Y\mid\cF]\\
    &=
    \mu.
\end{align*}
Therefore, \eqref{eq:central-uncentered} gives
\begin{align*}
    \EE[e^{\theta Y}\mid\cF]
    \leq
    1-\theta\mu+(1+L)\theta^2\mu.
\end{align*}
Consequently,
\begin{align*}
    \log
    \EE\left[
        e^{\theta(Y+\mu)}
        \,\middle|\,
        \cF
    \right]
    &=
    \theta\mu+
    \log\EE[e^{\theta Y}\mid\cF]\\
    &\leq
    \theta\mu+
    \log\left(
        1-\theta\mu+(1+L)\theta^2\mu
    \right)\\
    &\leq
    \theta\mu-\theta\mu+(1+L)\theta^2\mu\\
    &=
    (1+L)\theta^2\mu,
\end{align*}
where the last inequality uses $\log(1+s)\leq s$. This completes the proof of Lemma \ref{lem:central-to-mgf}.
\end{proof}
\subsection{Proof of Lemma \ref{lem:ar-token-confidence}}
 \begin{proof}
Fix a level $h$ and a triple $z=(x,u,a)$.
Let $\cF_t$ denote the history before round $t$. Define
\begin{align*}
    Y_t(z)
    :=
    \frac1m
    \sum_{j=1}^m
    \ind\bigl(
        x_{t,j}=x,\,
        y_{t,j,<h}=u,\,
        a_{t,j,h}=a
    \bigr)
    \bigl[
        l_{t,j,h}-\log g_h(a|x,u)
    \bigr].
\end{align*}
By the definitions of $S_t$ and $N_t$, we have
\begin{align}
    \sum_{s=1}^t Y_s(z)
    =
    \frac{
        S_t(x,u,a)
        -
        N_t(x,u,a)\log g_h(a|x,u)
    }{m}.
    \label{eq:confidence-centered-sum}
\end{align}
Under the on-policy protocol, conditionally on $\cF_t$, the teacher index is uniform over $\cI$, the context follows $\rho_{i_t}$, and the response is generated by $\hat\pi_t$. Therefore,
\begin{align*}
    \EE[Y_t(z)\mid\cF_t]
    &=
    \frac1m\sum_{j=1}^m
    \frac1I\sum_{i\in\cI}
    \rho_i(x)\hat\pi_t(u|x)\hat\pi_t(a|x,u)
    \bigl[
        \log p_i(a|x,u)-\log g_h(a|x,u)
    \bigr]\\
    &=
    \bar\rho(x)\hat\pi_t(u|x)\hat\pi_t(a|x,u)
    \sum_{i\in\cI}w_i(x)
    \bigl[
        \log p_i(a|x,u)-\log g_h(a|x,u)
    \bigr]\\
    &=0,
\end{align*}
where the second equality uses $\rho_i(x)/I=\bar\rho(x)w_i(x)$. The last equality holds due to $\sum_iw_i(x)=1$ and $\log g_h(a|x,u)=\sum_iw_i(x)\log p_i(a|x,u)$. 
Using Assumption~\ref{assump:token-bound}, for every teacher $i$,
\begin{align*}
    \big|
        \log p_i(a|x,u)-\log g_h(a|x,u)
    \big|
    &=
    \bigg|
        \log\frac{p_i(a|x,u)}{\piref(a|x,u)}
        -
        \sum_{k\in\cI}w_k(x)
        \log\frac{p_k(a|x,u)}{\piref(a|x,u)}
    \bigg|\\
    &\leq
    B+\sum_{k\in\cI}w_k(x)B
    =2B.
\end{align*}
When $\ind\bigl(x_{t,j}=x,\,y_{t,j,<h}=u,\,a_{t,j,h}=a\bigr) \neq 0$, $l_{t,j,h}=\log p_i(a|x,u)$ for some $i\in \cI$. Thus, 
\begin{align*}
    \big|l_{t,j,h}-\log g_h(a|x,u)\big| \le 2B.
\end{align*}
In the definition of $Y_t(z)$, at most $m$ summands can be nonzero. Thus, $|Y_t(z)| \le 2Bm/m = 2B$. Moreover, Jensen's inequality gives
\begin{align*}
    Y_t(z)^2
    &\leq
    \frac1m\sum_{j=1}^m
    \ind\bigl(
        x_{t,j}=x,\,
        y_{t,j,<h}=u,\,
        a_{t,j,h}=a
    \bigr)
    \bigl[
        l_{t,j,h}-\log g_h(a|x,u)
    \bigr]^2\\
    &\leq
    \frac{4B^2}{m}
    \sum_{j=1}^m
    \ind\bigl(
        x_{t,j}=x,\,
        y_{t,j,<h}=u,\,
        a_{t,j,h}=a
    \bigr).
\end{align*}
Taking the conditional expectation, we obtain
\begin{align}
    \Var(Y_t(z)\mid\cF_t)
    &=
    \EE[Y_t(z)^2\mid\cF_t]\notag\\
    &\leq
    \frac{4B^2}{m}
    \EE\bigg[
        \sum_{j=1}^m
    \ind\bigl(
        x_{t,j}=x,\,
        y_{t,j,<h}=u,\,
        a_{t,j,h}=a
    \bigr)\bigg|\,
        \cF_t
    \bigg].
    \label{eq:bernstein-variance}
\end{align}
Define $V_t(z):=\sum_{s=1}^t\Var(Y_s(z)\mid\cF_s)$. Since $\EE[Y_s(z)\mid\cF_s]=0$ and $|Y_s(z)|\leq2B$,
Lemma~\ref{lem:bakhtiari11} implies that both
\begin{align*}
    \bigg(
        \sum_{s=1}^tY_s(z),\, V_t(z)
    \bigg)
    \quad\text{and}\quad
    \bigg(
        -\sum_{s=1}^tY_s(z),\, V_t(z)
    \bigg)
\end{align*}
are sub-gamma processes with parameter $2B/3$. Let $K:=S\sum_{h=1}^H A^h$. 
Then, applying Lemma~\ref{lem:bakhtiari10} with $\rho=2B$, with probability at least $1-\delta/(2K)$, we have
\begin{align}
\label{eq:bernstein}
    \sum_{s=1}^tY_s(z) \le 4 \sqrt{V_t(z) \log (2H_tK/\delta)} + \frac{88}{3}B\log(2H_tK/\delta),
\end{align}
where we can choose $H_t = e+\log(1+T)$ since $V_t(z)\leq4B^2T$. Applying the same inequality to $-Y_s(z)$, and taking the union bound, we obtain that with probability at least $1-\delta$, the following inequality holds for any $h$ and any $z=(x,u,a)$:
\begin{align*}
    \bigg|\sum_{s=1}^tY_s(z)\bigg| \le 4 \sqrt{V_t(z) \log (2H_tK/\delta)} + \frac{88}{3}B\log(2H_tK/\delta).
\end{align*}
Since
\begin{align*}
    \frac1m\sum_{j=1}^m
    \ind\bigl(
        x_{s,j}=x,\,
        y_{s,j,<h}=u,\,
        a_{s,j,h}=a
    \bigr)
    \in[0,1],
\end{align*}
we can apply Lemma~\ref{lem:foster} and obtain that for any $h$ and $z=(x,u,a)$, with probability at least $1-\delta/K$, the following inequality holds simultaneously for all $t \in [T]$:
\begin{align}
\notag
    &\sum_{s=1}^t\EE\bigg[
        \frac{1}{m}\sum_{j=1}^m
        \ind\bigl(
            x_{s,j}=x,\,
            y_{s,j,<h}=u,\,
            a_{s,j,h}=a
        \bigr)
        \,\bigg|\,
        \cF_s
    \bigg]\\\label{eq:Ville}
    &\qquad \le 2 \sum_{s=1}^t\frac{1}{m}\sum_{j=1}^m
    \ind\bigl(
        x_{s,j}=x,\,
        y_{s,j,<h}=u,\,
        a_{s,j,h}=a
    \bigr) + 8 \log (2K/\delta).
\end{align}
Taking a union bound over $h$ and $(x,u,a)$, the inequality holds for all $h$ and $(x,u,a)$ simultaneously. Combining \eqref{eq:bernstein-variance} and \eqref{eq:Ville}, we have
\begin{align}
    V_t(z)
    &\leq
    \frac{8B^2}{m}
    \sum_{s=1}^t\sum_{j=1}^m
    \ind\bigl(
        x_{s,j}=x,\,
        y_{s,j,<h}=u,\,
        a_{s,j,h}=a
    \bigr)
    +
    32B^2\log({2K}/{\delta})\notag\\
    &=
    \frac{8B^2N_t(x,u,a)}m
    +
    32B^2\log({2K}/{\delta}).
    \label{eq:confidence-observed-variance}
\end{align}
Substituting \eqref{eq:confidence-observed-variance} into \eqref{eq:bernstein}, we have, with
probability at least $1-2\delta$, simultaneously for every
$t\in[T]$, $h$, and $z=(x,u,a)$, the following inequality holds
\begin{align*}
    \bigg|\sum_{s=1}^tY_s(z)\bigg|
    &\leq
    4\sqrt{
        \Big(
            \frac{8B^2N_t(x,u,a)}m
            +
            32B^2\log\frac{2K}{\delta}
        \Big)
        \log\frac{2H_tK}{\delta}
    }
    +
    \frac{88}{3}B\log\frac{2H_tK}{\delta}\\
    &\lesssim
    B\sqrt{
        \frac{N_t(x,u,a)}m
        \log\frac{2H_tK}{\delta}
    }
    +
    B\log\frac{2H_tK}{\delta},
\end{align*}
where we use $\sqrt{v+w}\leq\sqrt v+\sqrt w$ and
$\log(2K/\delta)\leq\log(2H_tK/\delta)$.

For every triple with $N_t(x,u,a)\geq1$,
\eqref{eq:confidence-centered-sum} gives
\begin{align*}
    \big|
        \bar l_t(x,u,a)-\log g_h(a|x,u)
    \big|
    &=
    \frac{m}{N_t(x,u,a)}
    \bigg|\sum_{s=1}^tY_s(z)\bigg|\\
    &\lesssim
    B\sqrt{
        \frac{m}{N_t(x,u,a)}
        \log\frac{2H_tK}{\delta}
    }
    +
    \frac{Bm}{N_t(x,u,a)}
    \log\frac{2H_tK}{\delta}.
\end{align*}
Finally, since $K\leq SH A^H$ and $H_t=e+\log(1+T)$,
\begin{align*}
    \log\frac{2H_tK}{\delta}
    \leq
    H\log A
    +
    \log\bigg[
        \frac{2SH\bigl(e+\log(1+T)\bigr)}{\delta}
    \bigg].
\end{align*}
Hiding the logarithmic factors, we can define 
\begin{align*}
    \beta_t(x,u,a) = \tilde O\bigg(
        B\sqrt{\frac{mH}{N_t(x,u,a)}}
        +
        \frac{BmH}{N_t(x,u,a)}
    \bigg),
\end{align*}
and conclude that
\begin{align*}
    \big|
        \bar l_t(x,u,a)-\log g_h(a|x,u)
    \big|
    \leq \beta_t(x,u,a).
\end{align*}
This completes the proof of Lemma \ref{lem:ar-token-confidence}.
\end{proof}
%======================================================================
\section{Missing Proof in Section \ref{sec:ar-target-comparison}}
\label{sec:proof-target-comparison}
\label{sec:proof-uniform-threshold}
\begin{proof}[Proof of Proposition~\ref{prop:uniform-teacher-threshold}]
Fix $N\geq2$ and $\alpha\in(0,1)$, and let $b:=1/N$. For $q\in(b,1)$, write the correct-response probabilities as
\begin{align*}
    F(q)&:=\pif^*(y^\star|x)=\alpha q+(1-\alpha)b,\\
    R(q)&:=\pir^*(y^\star|x)
    =\frac{q^\alpha}{q^\alpha+(N-1)^{1-\alpha}(1-q)^\alpha}.
\end{align*}
Both probabilities lie in $(0,1)$. Since the log-odds function is strictly increasing, $F(q)-R(q)$ has the same sign as
\begin{align*}
    G(q)&:=\log\frac{F(q)}{1-F(q)}-\log\frac{R(q)}{1-R(q)}\\
    &=\log\frac{F(q)}{1-F(q)}-\alpha\log\frac{q}{1-q}
      +(1-\alpha)\log(N-1).
\end{align*}
The function $G$ extends continuously to $q=b$, with $G(b)=0$. Differentiating gives
\begin{align*}
    G'(q)
    &=\frac{\alpha}{F(q)(1-F(q))}-\frac{\alpha}{q(1-q)}\\
    &=\frac{\alpha(q-F(q))(1-q-F(q))}
    {F(q)(1-F(q))q(1-q)}.
\end{align*}
For $q>b$, we have $q-F(q)=(1-\alpha)(q-b)>0$. Thus, the sign of $G'(q)$ is the sign of
\[
    1-q-F(q)=1-(1+\alpha)q-(1-\alpha)b,
\]
which vanishes at $q_{\mathrm m}:=[1-(1-\alpha)b]/(1+\alpha)$.
If $N\geq3$, then $b<q_{\mathrm m}<1$, so $G$ is strictly increasing on $(b,q_{\mathrm m})$ and strictly decreasing on $(q_{\mathrm m},1)$. In particular, $G(q_{\mathrm m})>0$. Moreover, $F(q)\to\alpha+(1-\alpha)b\in(0,1)$ as $q\uparrow1$, and hence $G(q)\to-\infty$. The intermediate value theorem and strict decrease on $(q_{\mathrm m},1)$ imply that there is exactly one zero $q_{\mathrm c}\in(q_{\mathrm m},1)$. We have $G(q)>0$ for $b<q<q_{\mathrm c}$ and $G(q)<0$ for $q_{\mathrm c}<q<1$, giving the claimed ordering.

If $N=2$, then $q_{\mathrm m}=b=1/2$, so $G'(q)<0$ throughout $(1/2,1)$. Since $G(b)=0$, this gives $G(q)<0$ and therefore $R(q)>F(q)$ on that interval.
Finally, both targets are below the expert probability for $q>b$: we have $F(q)<q$, and
\[
    \frac{R(q)}{1-R(q)}
    =\frac{q}{1-q}
      \left(\frac{1-q}{(N-1)q}\right)^{1-\alpha}
    <\frac{q}{1-q},
\]
which implies $R(q)<q$.
\end{proof}

\begin{proof}[Proof of Proposition~\ref{prop:reverse-extreme-value}]
Fix a context $x$ and suppress its dependence in the notation. Since the forward target is the weighted arithmetic mixture of the teachers, summing over $E$ gives
\[
    \pif^*(E)=\sum_i w_i p_i(E)\geq w_kp_k(E),
\]
where the inequality follows from the nonnegativity of every summand.

For the reverse target, fix $\alpha,q,\eta\in(0,1)$ and write $m:=|E|$ and $n:=|\cY\setminus E|$. Both $m$ and $n$ are positive. For $\varepsilon\in(0,1)$, define two teachers by
\[
    p_1(y)=
    \begin{cases}
        q/m, & y\in E,\\
        (1-q)/n, & y\notin E,
    \end{cases}
    \qquad
    p_2(y)=
    \begin{cases}
        \varepsilon/m, & y\in E,\\
        (1-\varepsilon)/n, & y\notin E.
    \end{cases}
\]
These distributions have full support, and $p_1(E)=q$. With weights $\alpha$ and $1-\alpha$, the unnormalized geometric masses on the two sets are
\begin{align*}
    \sum_{y\in E}p_1(y)^\alpha p_2(y)^{1-\alpha}
    &=q^\alpha\varepsilon^{1-\alpha},\\
    \sum_{y\notin E}p_1(y)^\alpha p_2(y)^{1-\alpha}
    &=(1-q)^\alpha(1-\varepsilon)^{1-\alpha}.
\end{align*}
Consequently,
\[
    \pir^*(E)
    =\frac{q^\alpha\varepsilon^{1-\alpha}}
    {q^\alpha\varepsilon^{1-\alpha}+(1-q)^\alpha(1-\varepsilon)^{1-\alpha}}.
\]
As $\varepsilon\downarrow0$, the numerator tends to zero and the denominator tends to $(1-q)^\alpha>0$. Hence $\pir^*(E)\to0$, and a sufficiently small positive $\varepsilon$ ensures $\pir^*(E)<\eta$. This choice retains full support for both teachers and proves the claim.
\end{proof}

\begin{proof}[Proof of Proposition~\ref{prop:reverse-long-horizon}]
Fix $\beta\in(0,1/2)$ and $r\in(1/2,1)$, and set $\alpha:=1-\beta$. We construct the expert using token probabilities that do not depend on the horizon. Fix $\delta\in(0,1/2)$, let $p_1(a|x)=r$, and, at every subsequent position, set
\[
    p_1(a|x,u)=
    \begin{cases}
        1-\delta, & \text{if the first token of $u$ is $a$},\\
        1/2, & \text{if the first token of $u$ is $b$}.
    \end{cases}
\]
The probability of $b$ is the complementary probability at every prefix. Let $p_2$ assign probability $1/2$ to each token at every prefix. Both teachers assign positive probability to every response in $\{a,b\}^H$ for every finite $H$.

Marginalizing the forward mixture over the last $H-1$ tokens yields
\[
    \pif^*(a|x)=\alpha r+\frac{\beta}{2}
    =\frac12+\alpha\left(r-\frac12\right)>\frac12.
\]
Thus $\pif^*(a|x)>\pif^*(b|x)$ for every $H$.

For the reverse target, the uniform teacher contributes the same factor $2^{-\beta H}$ to every complete response, so
\[
    \pir^*(y|x)=\frac{p_1(y|x)^\alpha}
    {\sum_{z\in\{a,b\}^H}p_1(z|x)^\alpha}.
\]
Define $C_\alpha:=(1-\delta)^\alpha+\delta^\alpha$ and $D_\alpha:=2^{1-\alpha}$. Summing the powered expert probabilities over continuations gives
\begin{align*}
    \sum_{v\in\{a,b\}^{H-1}}p_1(av|x)^\alpha
    &=r^\alpha C_\alpha^{H-1},\\
    \sum_{v\in\{a,b\}^{H-1}}p_1(bv|x)^\alpha
    &=(1-r)^\alpha D_\alpha^{H-1}.
\end{align*}
Since $0<\alpha<1$, the function $t\mapsto t^\alpha$ is strictly concave. As $\delta\neq1/2$, strict concavity implies
\[
    C_\alpha=(1-\delta)^\alpha+\delta^\alpha
    <2\left(\frac12\right)^\alpha=D_\alpha.
\]
The reverse target's first-token odds therefore satisfy
\[
    \frac{\pir^*(a|x)}{\pir^*(b|x)}
    =\left(\frac{r}{1-r}\right)^\alpha
      \left(\frac{C_\alpha}{D_\alpha}\right)^{H-1}.
\]
In particular, these odds are strictly less than one whenever
\[
    H>1+\frac{\alpha\log(r/(1-r))}{\log(D_\alpha/C_\alpha)}.
\]
The denominator is strictly positive, so this bound is finite for every fixed $\beta$ and $r$ in the stated ranges. The same construction thus satisfies $\pir^*(a|x)<\pir^*(b|x)$ for every sufficiently large integer $H$, completing the proof.
\end{proof}
\section{Parameter Sensitivity of Aggregation Targets}
\label{sec:aggregation-ablations}
We extend the examples in Section~\ref{sec:ar-target-comparison} by varying the teacher weights, expert confidence, response-space size, and continuation distributions. All curves are deterministic evaluations of the closed-form targets, rather than results of training a student. Each comparison holds the other parameters fixed. The vertical axes show the full probability range, and dashed lines indicate the expert's probability.

\paragraph{Uninformative teachers.}
The expert has weight $\alpha$, assigns probability $q$ to the correct response, and distributes its remaining mass uniformly over the $N-1$ incorrect responses. The other teachers are uniform and have total weight $1-\alpha$. Writing $F(q):=\pif^*(y^\star|x)$ and $R(q):=\pir^*(y^\star|x)$, we evaluate
\[
    F(q)=\alpha q+\frac{1-\alpha}{N},
    \qquad
    R(q)=\frac{q^\alpha}{q^\alpha+(N-1)^{1-\alpha}(1-q)^\alpha}.
\]
Here $N$ counts responses; splitting the uniform teachers' total weight among more identical teachers leaves both targets unchanged.

\begin{figure}[!ht]
    \centering
    \includegraphics[width=\linewidth]{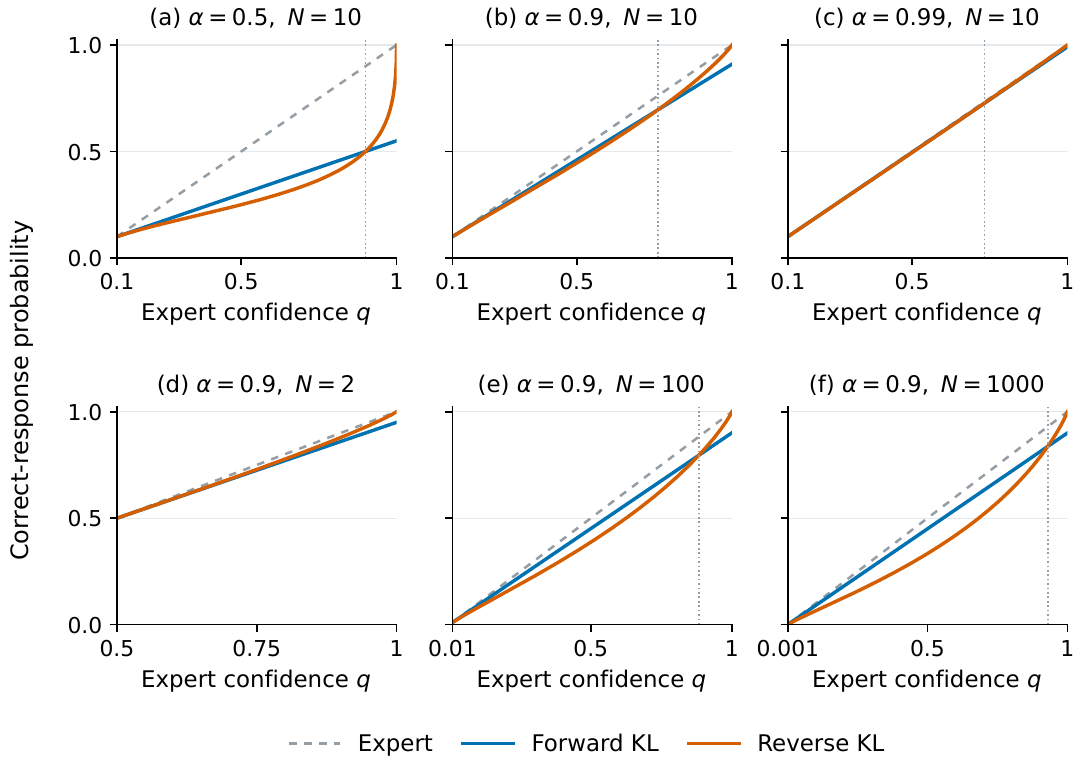}
    \caption{Sensitivity to expert weight and response-space size with uninformative teachers. Panels (a--c) vary $\alpha\in\{0.5,0.9,0.99\}$ at $N=10$. Panels (d--f), together with (b), compare $N\in\{2,10,100,1000\}$ at $\alpha=0.9$. Each curve varies $q$ from $1/N$ toward $1$. Dotted vertical lines mark the interior crossing when it exists.}
    \label{fig:ablation-uniform}
\end{figure}

Figure~\ref{fig:ablation-uniform} shows that the region in which reverse aggregation retains more correct-response probability depends on both parameters. For the weights shown at $N=10$, increasing $\alpha$ lowers the confidence threshold, although the targets become close as both approach the expert. At $\alpha=0.9$, increasing $N$ from $10$ to $1000$ raises the threshold, narrowing the range of $q$ with a reverse advantage. The binary case differs: reverse aggregation is larger throughout $q\in(1/2,1)$, with no interior crossing.

\paragraph{Misleading teachers.}
We fix $N=10$ and replace the uniform teacher by a teacher of weight $1-\alpha$ that assigns probability $\varepsilon$ to the correct response. Both teachers distribute their remaining probability uniformly over the incorrect responses. The targets are
\[
    F(\varepsilon)=\alpha q+(1-\alpha)\varepsilon,
    \qquad
    R(\varepsilon)=\frac{q^\alpha\varepsilon^{1-\alpha}}
    {q^\alpha\varepsilon^{1-\alpha}+(1-q)^\alpha(1-\varepsilon)^{1-\alpha}}.
\]
Figure~\ref{fig:ablation-misleading} varies the expert weight at fixed $q=0.99$, and its confidence at fixed $\alpha=0.9$. The horizontal coordinate is $\log_{10}\varepsilon$; we evaluate the reverse target in the log domain to retain accuracy for extremely small probabilities.

\begin{figure}[!ht]
    \centering
    \includegraphics[width=\linewidth]{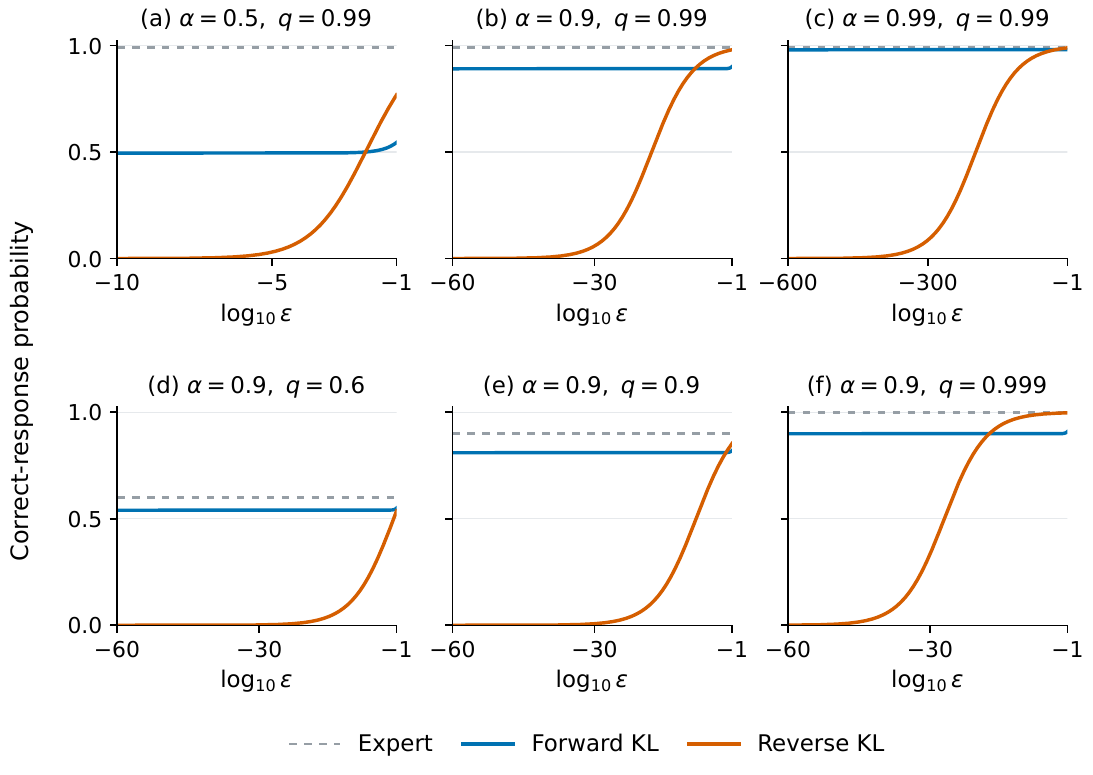}
    \caption{Sensitivity to expert weight and confidence with misleading teachers. Panels (a--c) fix $q=0.99$ and vary $\alpha$. Panels (d--f), together with (b), fix $\alpha=0.9$ and compare $q\in\{0.6,0.9,0.99,0.999\}$. Horizontal ranges differ: $\log_{10}\varepsilon\in[-10,-1]$ in (a), $[-600,-1]$ in (c), and $[-60,-1]$ otherwise. Moving left makes the other teacher more misleading.}
    \label{fig:ablation-misleading}
\end{figure}

Higher expert weight or confidence shifts the decline in reverse probability to smaller values of $\varepsilon$. This protection can be substantial: at $q=0.99$, the reverse probability equals $1/2$ at approximately $\varepsilon=10^{-18}$ for $\alpha=0.9$, compared with $10^{-198}$ for $\alpha=0.99$. Nevertheless, every fixed $\alpha<1$ permits suppression as $\varepsilon\downarrow0$, whereas forward aggregation remains above $\alpha q$. The extreme ranges illustrate the absence of a uniform positive lower bound for reverse aggregation; they do not estimate how often such predictions arise in practice.

\paragraph{Long horizons.}
The expert selects the correct first token $a$ with probability $r$. At every subsequent position, its probabilities for $(a,b)$ are $(1-\delta,\delta)$ after an initial $a$ and $(1/2,1/2)$ after an initial $b$. The other teacher is uniform at every prefix. With expert weight $\alpha$, let $C_\alpha=(1-\delta)^\alpha+\delta^\alpha$ and $D_\alpha=2^{1-\alpha}$. The first-token probabilities are
\[
    F_H=\alpha r+\frac{1-\alpha}{2},
    \qquad
    R_H=\left[1+\left(\frac{1-r}{r}\right)^\alpha
    \left(\frac{D_\alpha}{C_\alpha}\right)^{H-1}\right]^{-1}.
\]
Figure~\ref{fig:ablation-horizon} varies $\alpha$, $r$, and $\delta$ separately around $(\alpha,r,\delta)=(0.9,0.99,0.01)$, evaluating every integer $H$ from $1$ to $1000$.

\begin{figure}[!ht]
    \centering
    \includegraphics[width=\linewidth]{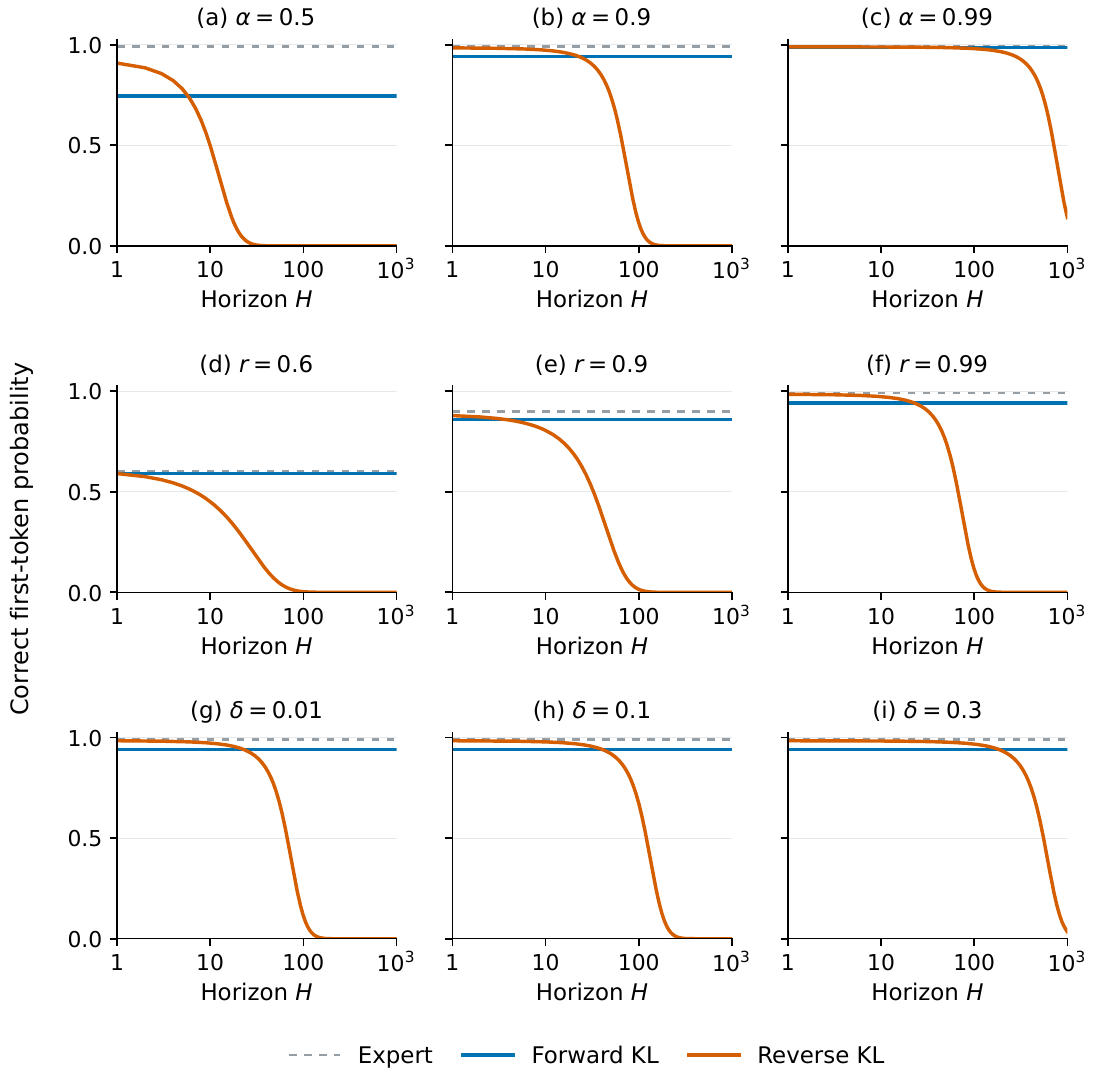}
    \caption{Sensitivity of first-token probabilities to the horizon. Top row: vary $\alpha$ with $r=0.99$ and $\delta=0.01$. Middle row: vary $r$ with $\alpha=0.9$ and $\delta=0.01$. Bottom row: vary $\delta\in\{0.01,0.1,0.3\}$ with $\alpha=0.9$ and $r=0.99$. The baseline is repeated in (b), (f), and (g). The horizontal axis is logarithmic; curves connect values computed at integer horizons.}
    \label{fig:ablation-horizon}
\end{figure}

Increasing the expert's weight or first-token confidence delays the reversal in the displayed settings. At $r=0.99$ and $\delta=0.01$, the first horizon with $R_H<1/2$ is $10$, $68$, or $717$ for $\alpha=0.5$, $0.9$, or $0.99$, respectively. Making the continuations after $a$ more diffuse also delays reversal. For $\delta=0.3$, the reverse probability first falls below $1/2$ at $H=555$, compared with $H=68$ at $\delta=0.01$. Forward aggregation is independent of $H$ in every panel. Thus, in this construction, the deterioration arises from the difference between the continuation distributions, while expert confidence and weight determine how long the initial preference persists.

\section{Forward KL with Function Approximation}
\label{sec:fa-forward}

The tabular estimator learns a separate conditional distribution at each
prefix. Function approximation allows these conditionals to share a common
representation. Let $\Pi=\{\pi_\theta:\theta\in\Theta\}$ be a finite class of
autoregressive policies, where
\begin{align*}
    \pi_\theta(y|x)
    =
    \prod_{h=1}^H
    \pi_\theta(a_h|x,y_{<h}).
\end{align*}
We make the standard realizability assumption for the function class.

\begin{assumption}
\label{assump:fa-forward-realizability}
There exists $\theta^*\in\Theta$ such that, for every context $x$ and every response $y\in\cY$,
\begin{align*}
    \pi_{\theta^*}(y|x)=\pif^*(y|x).
\end{align*}
\end{assumption}
\subsection{Algorithm Design}
We aggregate the candidate policies at the token level, using sampled
tokens when logits are unavailable and full teacher probabilities
otherwise. Recall the token feedback $q_{t,j,h}(a)$ defined in Section~\ref{sec:ar-fkl-tabular}: 
\begin{align}
\notag
    q_{t,j,h}(a):=
    \begin{cases}
        \ind(a_{t,j,h}=a)
        & \mathrm{w/o \ logit},\\
        \frac{\exp(Z_{t,j,h}(a))}
             {\sum_{b\in\cA}\exp(Z_{t,j,h}(b))},
        & \mathrm{w/ \ logit}.
    \end{cases}
\end{align}
For each $\theta \in \Theta$, define its empirical loss in round $t$ by
\begin{align}
\label{eq:empirical-loss}
    \ell_t(\pi_\theta)
    :=
    -\frac1m
    \sum_{j=1}^m\sum_{h=1}^H\sum_{a\in\cA}
    q_{t,j,h}(a)
    \log\pi_\theta(a|x_{t,j},y_{t,j,<h}).
\end{align}
The main idea of the algorithm is to maintain a distribution $p_t(\theta)$ over the parameter space $\Theta$. 
More specifically, choose a prior distribution $p_0$ on $\Theta$ with
$p_0(\theta)>0$ for every $\theta\in\Theta$, and initialize
$L_0(\theta)=0$. At the beginning of round $t$, for any prefix $(x,u)$, the learner uses
\begin{align}
    \hat\pi_t(a|x,u)
    :=
    \sum_{\theta\in\Theta}
    p_{t-1}(\theta)\pi_\theta(a|x,u).
    \label{eq:fa-forward-mixture}
\end{align}
After observing the feedback, the learner updates
\begin{align}
    % L_t(\theta)
    % &:=
    % L_{t-1}(\theta)+\ell_t(\pi_\theta),\notag\\
    p_t(\theta)
    &:=
    \frac{
        p_{t-1}(\theta)
        \exp\bigl(-\ell_t(\pi_\theta)/H\bigr)
    }{
        \sum_{\theta'\in\Theta}
        p_{t-1}(\theta')
        \exp\bigl(-\ell_t(\pi_{\theta'})/H\bigr)
    }.\label{eq:fa-forward-weight-update}
    % \notag
    % \\
    % &=
    % \frac{
    %     q_0(\theta)\exp\bigl(-L_t(\theta)/H\bigr)
    % }{
    %     \sum_{\vartheta\in\Theta}
    %     q_0(\vartheta)\exp\bigl(-L_t(\vartheta)/H\bigr)
    % }.
\end{align}
The factor $1/H$ normalizes the sum of the $H$ token losses in each
rollout. Intuitively, the update favors candidates with smaller cumulative empirical losses, relative to their prior weights.
\begin{algorithm}[H]
\caption{Forward KL with Function Approximation}
\label{algo:fa-forward}
\begin{algorithmic}[1]
    \STATE \textbf{Input:} Finite policy class
    $\Pi=\{\pi_\theta:\theta\in\Theta\}$,
    prior $p_0$ with $p_0(\theta)>0$ for every $\theta$,
    number of rounds $T$.
    \FOR{$t=1,\ldots,T$}
        \STATE For any prefix $(x,u)$, construct the policy $\hat\pi_t(a|x,u)$ as in \eqref{eq:fa-forward-mixture}.
        \STATE Observe the batch of teacher-generated rollouts
        $\{(x_{t,j},y_{t,j})\}_{j=1}^m$ and, when available, the teacher logits $\{Z_{t,j,h}\}_{j,h}$.
        \STATE For every $\theta\in\Theta$, compute $\ell_t(\pi_\theta)$ as in \eqref{eq:empirical-loss}
        \STATE For every $\theta\in \Theta$, update the mixture weights $p_t(\theta)$ as in \eqref{eq:fa-forward-weight-update}
    \ENDFOR
    \STATE \textbf{Output:} $\{\hat\pi_t\}_{t=1}^T$.
\end{algorithmic}
\end{algorithm}
This exponential reweighting resembles the posterior updates used in Thompson sampling. Unlike Thompson sampling, however, our algorithm predicts with a mixture of the candidate policies rather than a single sampled candidate. Evaluating this mixture at each decoding step introduces large computational cost, which is a key limitation of our forward-KL algorithm with function approximation.
\subsection{Theoretical Guarantee}
In this section, we first work on a finite function class $\Theta$. We will apply the results to the setting with finite covering number later. The following theorem bounds the regret by the prior weight of a good
candidate and its approximation error. Under realizability, the
approximation term vanishes.

\begin{theorem}
\label{thm:ar-fa-forward}
Let $T \ge 2$. Suppose the regret in \eqref{eq:regret} is defined with
$D(p\|q)=\KL(q\|p)$. Under off-policy distillation protocol and the function approximation setting, both with and without access to teacher logits, Algorithm \ref{algo:fa-forward} satisfies
\begin{align}
    \EE\big[\operatorname{Regret}(T)\big]
    &\leq
    \inf_{\theta\in\Theta}
    \bigg\{
        H\log\frac1{p_0(\theta)}
        +
        T\EE_{x\sim\bar\rho}
        \KL\bigl(
            \pif^*(\cdot|x)
            \,\big\|\,
            \pi_\theta(\cdot|x)
        \bigr)
    \bigg\},
    \notag
\end{align}
provided the right-hand side is finite. In particular, under
Assumption~\ref{assump:fa-forward-realizability}, for the optimal parameter $\theta^*$, Algorithm \ref{algo:fa-forward} satisfies
\begin{align}
    \EE\big[\operatorname{Regret}(T)\big]
    \leq
    H\log\frac1{p_0(\theta^*)}.
    \notag
\end{align}
Moreover, for any $\delta>0$, under Assumption \ref{assump:fa-forward-realizability}, the following inequality holds with probability at least $1-\delta$:
\begin{align*}
    \operatorname{Regret}(T)
    \leq
    2H\log\frac1{p_0(\theta^*)}
    +
    4H\bigg(
        1+\log\frac{2}{\delta p_0(\theta^*)}
    \bigg)\log\frac2\delta.
\end{align*}
\end{theorem}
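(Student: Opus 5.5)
We follow the same skeleton as the proof of Theorem~\ref{thm:ar-fkl-regret} in Appendix~\ref{sec:proof-fkl-thm}. The key ingredient carries over unchanged: by \eqref{eq:proof-count-mean}, for every $\cF_t$-measurable autoregressive policy $\pi$,
\[
\EE[\ell_t(\pi)\mid\cF_t]=L(\pi),
\]
where $\ell_t$ is the empirical loss \eqref{eq:empirical-loss}. This is the same sum as in \eqref{eq:proof-empirical-loss}, just regrouped by rollouts. Combined with \eqref{eq:L-diff}, it gives
\[
\EE[\operatorname{Regret}(T)]=\EE\sum_t[\ell_t(\hat\pi_t)-\ell_t(\pif^*)].
\]
Likewise, for any $\theta$, $\EE\sum_t[\ell_t(\pi_\theta)-\ell_t(\pif^*)]=T\,\EE_{x\sim\bar\rho}\KL(\pif^*\|\pi_\theta)$. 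It therefore suffices to prove the deterministic, pathwise bound
\[
\sum_t\ell_t(\hat\pi_t)\le \ell_{1:T}(\pi_\theta)+H\log\frac1{p_0(\theta)}
\]
for every $\theta$, where $\ell_{1:T}(\pi_\theta):=\sum_t\ell_t(\pi_\theta)$.

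\textbf{Pathwise mixability step.} Write $\Phi_t:=-H\log\sum_\theta p_0(\theta)e^{-\ell_{1:t}(\pi_\theta)/H}$, so that $p_t\propto p_0 e^{-\ell_{1:t}/H}$. Then $\Phi_T\le \ell_{1:T}(\pi_\theta)+H\log(1/p_0(\theta))$, and $\Phi_t-\Phi_{t-1}=-H\log\sum_\theta p_{t-1}(\theta)e^{-\ell_t(\pi_\theta)/H}$. The needed claim is
\[
e^{-\ell_t(\hat\pi_t)/H}\ge \sum_\theta p_{t-1}(\theta)e^{-\ell_t(\pi_\theta)/H}.
\]
To prove it, write
\[
e^{-\ell_t(\pi_\theta)/H}=\prod_{j,h,a}\pi_\theta(a|x_{t,j},y_{t,j,<h})^{q_{t,j,h}(a)/(mH)}.
\]
The exponents $q_{t,j,h}(a)/(mH)$ are nonnegative and sum to $1$. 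By the generalized H\"older inequality with exponents $r=mH/q_{t,j,h}(a)$, the same device as \eqref{eq:proof-holder},
\[
\EE_{\theta\sim p_{t-1}}\prod\pi_\theta^{c}\le\prod(\EE_\theta\pi_\theta)^{c}=\prod\hat\pi_t^{c}
\]
by \eqref{eq:fa-forward-mixture}. This is exactly the required inequality. Summing the increments then yields the pathwise bound, and taking expectations gives the first two claims. Realizability makes the KL term vanish at $\theta^*$.

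\textbf{High-probability bound.} We repeat the martingale argument from Appendix~\ref{sec:proof-fkl-thm}, with $X_t:=\ell_t(\pif^*)-\ell_t(\hat\pi_t)$ and $r_t=-\EE[X_t\mid\cF_t]$. The AM--GM computation \eqref{eq:proof-Xt-central-condition} gives $\EE[e^{X_t/H}\mid\cF_t]\le 1$ verbatim, since $\hat\pi_t$ is a valid autoregressive policy.

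The main obstacle is the one-sided lower bound $X_t/H\ge -L$. In the tabular case this came from smoothing, but here $\hat\pi_t$ can be arbitrarily small, so no such bound holds. My plan is to work with the shifted quantity $X'_t:=\ell_t(\pi_{\theta^*})-\ell_t(\hat\pi_t)$, which equals $X_t$ under realizability. The mixability step gives
\[
-\sum_t X'_t\le H\log\frac1{p_0(\theta^*)}.
\]
To obtain a per-round lower bound, I would observe that $\hat\pi_t\ge p_{t-1}(\theta^*)\pi_{\theta^*}$ tokenwise, which gives $X_t/H\ge\log p_{t-1}(\theta^*)$. The remaining task is to lower bound the posterior weight $p_{t-1}(\theta^*)$ with high probability. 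For this I would use the supermartingale $1/p_t(\theta^*)$. Its ratio $\sum_\theta p_{t-1}(\theta)e^{(\ell_t(\pi_{\theta^*})-\ell_t(\pi_\theta))/H}$ has conditional expectation at most $1$, again by AM--GM, as in \eqref{eq:proof-Xt-central-condition}. Ville's inequality then gives $p_t(\theta^*)\ge \delta p_0(\theta^*)/2$ for all $t$ with probability $1-\delta/2$, so we may take $L=\log\frac{2}{\delta p_0(\theta^*)}$.

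On this event, and working with the stopped process so that the lower bound holds almost surely, Lemma~\ref{lem:central-to-mgf} with $\lambda=1/(2H(1+L))$ and Markov's inequality at level $\delta/2$ give
\[
\operatorname{Regret}\le H\log\frac1{p_0(\theta^*)}+\tfrac12\operatorname{Regret}+2H(1+L)\log\frac2\delta.
\]
Rearranging yields the stated bound after a union bound over the two events.
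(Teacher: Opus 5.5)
Your proposal is correct and follows essentially the same route as the paper. For the expected bound, both use the normalizing-constant potential and generalized H\"older to get the pathwise bound $\sum_t[\ell_t(\hat\pi_t)-\ell_t(\pi_\theta)]\le H\log(1/p_0(\theta))$. For the high-probability bound, both use the supermartingale $p_0(\theta^*)/p_t(\theta^*)$ with Ville's inequality to lower bound the mixture weight, an indicator-stopped $X_t$, Lemma~\ref{lem:central-to-mgf} with $\lambda=[2H(1+L)]^{-1}$, and the same rearrangement; indexing the H\"older product by $(j,h,a)$ instead of $c_t(x,u,a)$ and normalizing by $1/p_t(\theta^*)$ are only cosmetic differences.
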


\begin{proof}
Let $\cF_t$ be the $\sigma$-algebra generated by all rollouts and
feedback observed before round $t$:
\begin{align*}
    \cF_t
    :=
    \sigma\Big(
        (x_{s,j},y_{s,j}),q_{s,j,h}(a):
        s<t,\ j\in[m],\ h\in[H],\ a\in\cA
    \Big).
\end{align*}
Thus, both $p_{t-1}$ and $\hat\pi_t$ are $\cF_t$-measurable. Recall the feedback counts
$c_t(x,u,a)$ defined in \eqref{eq:fkl-count}. For any policy $\pi$, \eqref{eq:empirical-loss} can be rewritten as
\begin{align}
    \ell_t(\pi)
    =
    -\sum_{h=1}^H
    \sum_{(x,u,a)}
    c_t(x,u,a)\log\pi(a|x,u).
    \label{eq:fa-proof-token-loss}
\end{align}
Easy to see, the following equation holds
\begin{align}
\notag
    \sum_{(x,u,a)}c_t(x,u,a)=1
\end{align}
Therefore, we have
\begin{align}  \sum_{h=1}^H\sum_{(x,u,a)}\frac{c_t(x,u,a)}{H}=1.
    \label{eq:fa-proof-feedback-mass}
\end{align}
For any $t\ge 1$, define the normalizing constant
\begin{align*}
     C_t
    :=
    \sum_{\theta\in\Theta}
    p_0(\theta)
    \exp\bigg(
        -\frac1H\sum_{s=1}^t\ell_s(\pi_\theta)
    \bigg),
\end{align*}
and $C_0 =1$. Direct iteration of \eqref{eq:fa-forward-weight-update} gives
\begin{align}
\notag
    p_t(\theta)
    =
    \frac{
        p_0(\theta)
        \exp\big(
            -H^{-1}\sum_{s=1}^t\ell_s(\pi_\theta)
        \big)
    }{
         C_t
    }.
\end{align}
Moreover, we have
\begin{align}
\notag
    \frac{C_t}{C_{t-1}}
    &=
    \frac1{C_{t-1}}
    \sum_{\theta'\in\Theta}
    p_0(\theta')
    \exp\bigg(
        -\frac1H\sum_{s=1}^{t-1}\ell_s(\pi_{\theta'})
        -
        \frac{\ell_t(\pi_{\theta'})}{H}
    \bigg)\\\notag
    &=
    \sum_{\theta'\in\Theta}
    \underbrace{
        \frac{
            p_0(\theta')
            \exp\big(
                -H^{-1}\sum_{s=1}^{t-1}\ell_s(\pi_{\theta'})
            \big)
        }{C_{t-1}}
    }_{=\,p_{t-1}(\theta')}
    \exp\bigl(-\ell_t(\pi_{\theta'})/H\bigr)\\\label{eq:normalize-ratio}
    &=
    \sum_{\theta'\in\Theta}
    p_{t-1}(\theta')
    \exp\bigl(-\ell_t(\pi_{\theta'})/H\bigr).
\end{align}
The definition of $\ell_t$ in \eqref{eq:fa-proof-token-loss} gives
\begin{align}
\notag
    \exp\bigl(-\ell_t(\pi_{\theta'})/H\bigr)
    &=
    \exp\bigg(
        \sum_{h=1}^H\sum_{(x,u,a)}
        \frac{c_t(x,u,a)}H
        \log\pi_{\theta'}(a|x,u)
    \bigg)\\\label{eq:Holder-prep}
    &=
    \prod_{h=1}^H\prod_{(x,u,a)}
    \pi_{\theta'}(a|x,u)^{c_t(x,u,a)/H}.
\end{align}
Substituting \eqref{eq:Holder-prep} into \eqref{eq:normalize-ratio}, we have
\begin{align}
    \frac{C_t}{C_{t-1}}
    =
    \sum_{\theta'\in\Theta}
    p_{t-1}(\theta')
    \prod_{h=1}^H\prod_{(x,u,a)}
    \pi_{\theta'}(a|x,u)^{c_t(x,u,a)/H}.
    \label{eq:fa-proof-potential-ratio}
\end{align}
Using the generalized
H\"older's inequality, we have
\begin{align}
    \frac{C_t}{C_{t-1}}
    &\leq
    \prod_{h=1}^H\prod_{(x,u,a)}
    \bigg(
        \sum_{\theta'\in\Theta}
        p_{t-1}(\theta')\pi_{\theta'}(a|x,u)
    \bigg)^{c_t(x,u,a)/H}
    \notag\\
    &=
    \prod_{h=1}^H\prod_{(x,u,a)}
    \hat\pi_t(a|x,u)^{c_t(x,u,a)/H}
    \notag\\
    &=
    \exp\bigl(-\ell_t(\hat\pi_t)/H\bigr),
    \notag
\end{align}
where the last equation holds using \eqref{eq:fa-proof-token-loss}.

Taking the negative logarithm, and summing over $t \in [T]$ yields
\begin{align}
    \sum_{t=1}^T\ell_t(\hat\pi_t)
    &\leq
    -H\sum_{t=1}^T\log\frac{C_t}{C_{t-1}}
    =
    -H\log C_T,
    \label{eq:fa-proof-telescope}
\end{align}
where the equality uses $C_0=1$.

Now fix any comparator policy $\pi_\theta$ such that $\theta \in \Theta$. Easy to see
\begin{align*}
    C_T &= \sum_{\theta'}p_0(\theta')
    \exp\bigg(
        -\frac1H\sum_{t=1}^T\ell_t(\pi_{\theta'})
    \bigg)\\
    &\geq
    p_0(\theta)
    \exp\bigg(
        -\frac1H\sum_{t=1}^T\ell_t(\pi_\theta)
    \bigg).
\end{align*}
Therefore, we have
\begin{align}
    -\sum_{t=1}^T \ell_t(\pi_\theta) \le H \log C_T + H \log \frac{1}{p_0(\theta)}.
    \label{eq:fa-proof-theta}
\end{align}
Summing up \eqref{eq:fa-proof-telescope} and \eqref{eq:fa-proof-theta}, we have
\begin{align}
    \sum_{t=1}^T
    \big[
        \ell_t(\hat\pi_t)-\ell_t(\pi_\theta)
    \big]
    \leq
    H\log\frac1{p_0(\theta)}.
    \label{eq:fa-proof-comparator}
\end{align}
As a result, 
\begin{align}
    \sum_{t=1}^T
    \big[
        \ell_t(\hat\pi_t)-\ell_t(\pif^*)
    \big] &= \sum_{t=1}^T
    \big[
        \ell_t(\hat\pi_t)-\ell_t(\pi_{\theta})
    \big] + \sum_{t=1}^T
    \big[
        \ell_t(\pi_\theta)-\ell_t(\pif^*)
    \big]
    \notag\\
    &\qquad\leq
    H\log\frac1{p_0(\theta)}
    +
    \sum_{t=1}^T
    \big[
        \ell_t(\pi_\theta)-\ell_t(\pif^*)
    \big].
    \label{eq:fa-proof-empirical-oracle}
\end{align}
Using the same argument as in Appendix \ref{sec:proof-fkl-thm}, we have
\begin{align}
    \EE[c_t(x,u,a)\mid\cF_t]
    &=
    \frac1I\sum_{i\in\cI}
    \rho_i(x)p_i(u|x)p_i(a|x,u)
    \notag\\
    &=
    \bar\rho(x)\pif^*(u|x)\pif^*(a|x,u).
    \notag
\end{align}
Consequently, using \eqref{eq:fa-proof-token-loss}, for any $\cF_t$-measurable policy $\pi$ with finite
loss,
\begin{align}
    &\EE\big[
        \ell_t(\pi)-\ell_t(\pif^*)
        \mid\cF_t
    \big]
    \notag\\
    &\quad=
    \sum_{h=1}^H\sum_{(x,u,a)}
    \EE[c_t(x,u,a)\mid\cF_t]
    \log\frac{\pif^*(a|x,u)}{\pi(a|x,u)}
    \notag\\
    &\quad=
    \sum_{h=1}^H\sum_{(x,u)}
    \bar\rho(x)\pif^*(u|x)
    \KL\big(
        \pif^*(\cdot|x,u)
        \,\big\|\,
        \pi(\cdot|x,u)
    \big)
    \notag\\
    &\quad=
    \EE_{x\sim\bar\rho}
    \KL\big(
        \pif^*(\cdot|x)
        \,\big\|\,
        \pi(\cdot|x)
    \big).
    \label{eq:fa-proof-loss-to-kl}
\end{align}
The last equality follows from the chain rule for KL divergence.

Taking expectations in \eqref{eq:fa-proof-empirical-oracle} and
applying \eqref{eq:fa-proof-loss-to-kl} to both $\hat\pi_t$ and
$\pi_\theta$, we conclude that
\begin{align*}
    \EE[\operatorname{Regret}(T)]
    &\leq
    H\log\frac1{p_0(\theta)}
    +
    \sum_{t=1}^T
    \EE_{x\sim\bar\rho}
    \KL\big(
        \pif^*(\cdot|x)
        \,\big\|\,
        \pi_\theta(\cdot|x)
    \big)\\
    &=
    H\log\frac1{p_0(\theta)}
    +
    T\EE_{x\sim\bar\rho}
    \KL\big(
        \pif^*(\cdot|x)
        \,\big\|\,
        \pi_\theta(\cdot|x)
    \big).
\end{align*}
Since the inequality holds for every fixed $\theta$, taking the infimum gives
\begin{align}
    \EE[\operatorname{Regret}(T)]
    \leq
    \inf_{\theta\in\Theta}
    \bigg\{
        H\log\frac1{p_0(\theta)}
        +
        T\EE_{x\sim\bar\rho}
        \KL\big(
            \pif^*(\cdot|x)
            \,\big\|\,
            \pi_\theta(\cdot|x)
        \big)
    \bigg\}.
    \label{eq:fa-proof-oracle}
\end{align}

\paragraph{High-probability bound under realizability.}
Suppose $\pi_{\theta^*}=\pif^*$ for some $\theta^*\in\Theta$.
As in Appendix~\ref{sec:proof-fkl-thm}, define
\begin{align*}
    X_t
    &:=
    \ell_t(\pif^*)-\ell_t(\hat\pi_t),\\
    r_t
    &:=
    -\EE[X_t\mid\cF_t].
\end{align*}
By \eqref{eq:fa-proof-loss-to-kl},
$\operatorname{Regret}(T)=\sum_{t=1}^T r_t$. The argument establishing \eqref{eq:proof-Xt-central-condition}
applies to any $\cF_t$-measurable policy $\pi$, since it only
uses policy normalization and the conditional mean of the
feedback counts. Therefore,
\begin{align}
    \EE\Big[\exp\Big(\frac{X_t}{H}\Big)\Big| \cF_t\Big] = \EE\bigg[
        \exp\bigg(
            \frac{\ell_t(\pif^*)-\ell_t(\hat\pi_t)}H
        \bigg)
        \,\bigg|\,\cF_t
    \bigg]
    \leq1.
    \label{eq:fa-hp-central}
\end{align}

To replace the lower bound provided by KT smoothing in the
tabular proof, we control the mixture weight of $\theta^*$.
Define $W_0=1$ and
\begin{align*}
    W_t
    &:=
    C_t\exp\bigg(
        \frac1H\sum_{s=1}^t\ell_s(\pif^*)
    \bigg)
    =
    \frac{p_0(\theta^*)}{p_t(\theta^*)}.
\end{align*}
The equality follows from the formula for $p_t(\theta^*)$
and realizability. By \eqref{eq:normalize-ratio},
\begin{align*}
    \frac{W_t}{W_{t-1}}
    &=
    \frac{C_t}{C_{t-1}}
    \exp\bigl(\ell_t(\pif^*)/H\bigr)\\
    &=
    \sum_{\theta\in\Theta}
    p_{t-1}(\theta)
    \exp\bigg(
        \frac{\ell_t(\pif^*)-\ell_t(\pi_\theta)}H
    \bigg),
\end{align*}
where we apply \eqref{eq:normalize-ratio}. 
Since $W_{t-1}$ and $p_{t-1}$ are $\cF_t$-measurable,
\eqref{eq:fa-hp-central} implies
\begin{align*}
    \EE[W_t\mid\cF_t]
    &=
    W_{t-1}\sum_{\theta\in\Theta}
    p_{t-1}(\theta)
    \EE\bigg[
        \exp\bigg(
            \frac{\ell_t(\pif^*)-\ell_t(\pi_\theta)}H
        \bigg)
        \,\bigg|\,\cF_t
    \bigg]\\
    &\leq W_{t-1},
\end{align*}
where the last inequality holds due to \eqref{eq:fa-hp-central}. Thus, $W_t$ is a nonnegative supermartingale with respect
to $\{\cF_{t+1}\}_{t=0}^T$. Using Ville's inequality (Lemma \ref{lem:Ville}), we have
\begin{align}
    \PP\bigg(
        \max_{0\leq t\leq T}W_t\leq\frac2\delta
    \bigg)
    \geq1-\frac\delta2.
    \label{eq:fa-hp-weight-control}
\end{align}
Define
\begin{align*}
    J_t
    &:=
    \ind\bigg(
        \max_{0\leq s<t}W_s\leq\frac2\delta
    \bigg),
    \qquad
    L:=\log\frac{2}{\delta p_0(\theta^*)}.
\end{align*}
Whenever $J_t=1$, the token-level mixture satisfies
\begin{align*}
    \frac{\hat\pi_t(a|x,u)}{\pif^*(a|x,u)}
    \geq
    p_{t-1}(\theta^*)
    =
    \frac{p_0(\theta^*)}{W_{t-1}}
    \geq e^{-L},
\end{align*}
where the first inequality holds due to $\hat \pi_t(a|x,u):= \sum_{\theta} p_{t-1}(\theta) \pi_{\theta}(a|x,u) \ge p_{t-1}(\theta^*) \pif^*(a|x,u)$. The second inequality holds due to \eqref{eq:fa-hp-weight-control}. Thus,
\begin{align*}
    \log\frac{\hat\pi_t(a|x,u)}{\pif^*(a|x,u)}
    \geq-L
\end{align*}
for every $(x,u,a)$.
Using the definition of $X_t$ and the loss representation, we obtain
\begin{align*}
    \frac{X_t}{H}
    &=
    \frac{\ell_t(\pif^*)-\ell_t(\hat\pi_t)}H\\
    &=
    \sum_{h=1}^H\sum_{(x,u,a)}
    \frac{c_t(x,u,a)}H
    \log\frac{\hat\pi_t(a|x,u)}{\pif^*(a|x,u)}\\
    &\geq
    -L\sum_{h=1}^H\sum_{(x,u,a)}
    \frac{c_t(x,u,a)}H\\
    &=-L.
\end{align*}
The inequality uses $c_t(x,u,a)\geq0$, and the last equality
follows from \eqref{eq:fa-proof-feedback-mass}.

When $J_t=0$, we instead have $J_tX_t/H=0\geq-L$, since $L\geq0$.
Combining the two cases gives
\begin{align*}
    \frac{J_tX_t}{H}\geq-L.
\end{align*}
Moreover, since $J_t$ is $\cF_t$-measurable, 
\begin{align*}
    \EE[e^{J_tX_t/H}\mid\cF_t]
    &=1 * \ind(J_t=0)+\ind(J_t=1) \EE[e^{X_t/H}\mid\cF_t]\\ 
    &=1-J_t+J_t\EE[e^{X_t/H}\mid\cF_t]\\
    &\leq1,
\end{align*}
where the last inequality holds due to \eqref{eq:fa-hp-central}.

Hence, Lemma~\ref{lem:central-to-mgf} and the argument leading
to \eqref{eq:fkl-martingale-bound} apply to $J_tX_t$, whose conditional means are $-J_tr_t$.
Replacing $L_T$ by $L$ and the failure probability by $\delta/2$,
we obtain, for any fixed $\lambda\in(0,1/H]$, with probability
at least $1-\delta/2$,
\begin{align*}
    \sum_{t=1}^T J_t(X_t+r_t)
    \leq
    H(1+L)\lambda\sum_{t=1}^T J_tr_t
    +
    \frac{\log(2/\delta)}{\lambda}.
\end{align*}

Choose $\lambda=[2H(1+L)]^{-1}$.
Conditioned on the event in \eqref{eq:fa-hp-weight-control},
and taking a union bound, we conclude
that, with probability at least $1-\delta$,
\begin{align*}
    \operatorname{Regret}(T)
    \leq
    -\sum_{t=1}^T X_t
    +
    \frac12\operatorname{Regret}(T)
    +
    2H(1+L)\log\frac2\delta.
\end{align*}
Finally, \eqref{eq:fa-proof-comparator} with
$\theta=\theta^*$ gives
\begin{align*}
    -\sum_{t=1}^T X_t
    \leq H\log\frac1{p_0(\theta^*)}.
\end{align*}
Substituting this bound and rearranging yields
\begin{align*}
    \operatorname{Regret}(T)
    \leq
    2H\log\frac1{p_0(\theta^*)}
    +
    4H\left(
        1+\log\frac{2}{\delta p_0(\theta^*)}
    \right)\log\frac2\delta.
\end{align*}
\end{proof}

\subsection{Function Class with Finite Covering Number}
Let $\Pi=\{\pi_\theta:\theta\in\Theta\}$ be a class of
autoregressive policies containing $\pif^*$.
For $\varepsilon>0$, we define the $\varepsilon$-covering number
of the token log-probability class $\log\Pi$ as follows.

\begin{definition}[Covering number]
\label{def:log-policy-covering}
The covering number
$\mathcal N(\varepsilon,\log\Pi,\|\cdot\|_\infty)$
is the smallest cardinality of a finite subset
$\Pi_\varepsilon\subseteq\Pi$ such that, for every $\pi\in\Pi$,
there exists $\tilde\pi\in\Pi_\varepsilon$ satisfying
\begin{align*}
    \|\log\pi-\log\tilde\pi\|_\infty
    &:=
    \sup_{\substack{
        h\in[H],\,x\in\cX\\
        u\in\cA^{h-1},\,a\in\cA
    }}
    \big|
        \log\pi(a|x,u)
        -
        \log\tilde\pi(a|x,u)
    \big|
    \leq\varepsilon.
\end{align*}
\end{definition}
When the function class $\Pi$ has finite covering number $N_\varepsilon:=\mathcal N(\varepsilon,\log\Pi,\|\cdot\|_\infty)<\infty$, we can choose an $\varepsilon$-cover $\Pi_\varepsilon\subseteq\Pi$
of cardinality $N_\varepsilon$, and run Algorithm \ref{algo:fa-forward} on $\Pi_\varepsilon$ with a prior distribution $p_0$. Then, we have the following regret guarantee:
\begin{theorem}
\label{thm:fa-forward-covering}
Let $\Pi$ be a class of autoregressive policies containing
$\pif^*$. Fix $\varepsilon>0$ such that
\begin{align*}
    N_\varepsilon
    :=
    \mathcal N(\varepsilon,\log\Pi,\|\cdot\|_\infty)
    <\infty.
\end{align*}
Let the algorithm be described as above and $p_0$ be the uniform prior. Under off-policy feedback, both with and without teacher logits, the expected regret can be bounded by
\begin{align}
    \EE[\operatorname{Regret}(T)]
    \leq
    H\log N_\varepsilon+TH\varepsilon.
    \notag
\end{align}
Moreover, for every $\delta\in(0,1)$, with probability at least
$1-\delta$,
\begin{align}
    \operatorname{Regret}(T)
    &\leq
    2H\bigl(\log N_\varepsilon+T\varepsilon\bigr)
    \notag+
    4H\Big(
        1+\log N_\varepsilon+T\varepsilon
        +\log\frac2\delta
    \Big)\log\frac2\delta.
\end{align}
\end{theorem}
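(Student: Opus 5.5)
The plan is to apply Theorem~\ref{thm:ar-fa-forward} to the finite class $\Pi_\varepsilon$ with uniform prior $p_0(\theta)=1/N_\varepsilon$. The only missing ingredient is that realizability need not hold on $\Pi_\varepsilon$. We therefore use the oracle form of the bound with a comparator $\tilde\pi\in\Pi_\varepsilon$ that $\varepsilon$-covers $\pif^*\in\Pi$. By Definition~\ref{def:log-policy-covering}, $\log\pif^*(a|x,u)-\log\tilde\pi(a|x,u)\le\varepsilon$ at every $(h,x,u,a)$. The chain rule, exactly as in \eqref{eq:fa-proof-loss-to-kl}, gives
\[
\EE_{x\sim\bar\rho}\KL\bigl(\pif^*(\cdot|x)\,\big\|\,\tilde\pi(\cdot|x)\bigr)=\sum_{h=1}^H\sum_{(x,u)}\bar\rho(x)\pif^*(u|x)\KL\bigl(\pif^*(\cdot|x,u)\,\big\|\,\tilde\pi(\cdot|x,u)\bigr)\le H\varepsilon,
\]
because each token-level KL is an average of log-ratios bounded by $\varepsilon$. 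Substituting this and $\log(1/p_0(\tilde\pi))=\log N_\varepsilon$ into the first display of Theorem~\ref{thm:ar-fa-forward} yields the expected bound $H\log N_\varepsilon+TH\varepsilon$ directly.

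For the high-probability bound, we rerun the realizability argument from the proof of Theorem~\ref{thm:ar-fa-forward}, replacing $\theta^*$ by $\tilde\pi$. We keep $X_t:=\ell_t(\pif^*)-\ell_t(\hat\pi_t)$ and $r_t$ as before, so the central condition \eqref{eq:fa-hp-central} still holds; it only uses that $\hat\pi_t$ is $\cF_t$-measurable and normalized.

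The step that changes is the weight control. Define $W_t:=C_t\exp\bigl(\frac1H\sum_{s\le t}\ell_s(\pif^*)\bigr)$, which is still a nonnegative supermartingale by \eqref{eq:fa-hp-central}. However, $W_t$ now equals $\frac{p_0(\tilde\pi)}{p_t(\tilde\pi)}\exp\bigl(\frac1H\sum_{s\le t}[\ell_s(\pif^*)-\ell_s(\tilde\pi)]\bigr)$. Since $\ell_s(\pif^*)-\ell_s(\tilde\pi)\ge -H\varepsilon$ by the cover property and the unit feedback mass \eqref{eq:fa-proof-feedback-mass}, we get $p_t(\tilde\pi)\ge p_0(\tilde\pi)e^{-t\varepsilon}/W_t$.

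On the Ville event $\max_t W_t\le 2/\delta$, we have $\hat\pi_t\ge p_{t-1}(\tilde\pi)\tilde\pi\ge p_{t-1}(\tilde\pi)e^{-\varepsilon}\pif^*$ tokenwise. This gives the one-sided bound $X_t/H\ge -L$ with $L:=\log N_\varepsilon+T\varepsilon+\log(2/\delta)$. The extra $e^{-\varepsilon}$ factor is absorbed by using $(t-1)\varepsilon+\varepsilon\le T\varepsilon$.

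From here we proceed as in the proof of Theorem~\ref{thm:ar-fa-forward}. We form the truncated terms $J_tX_t$, apply Lemma~\ref{lem:central-to-mgf} with $\lambda=[2H(1+L)]^{-1}$ and failure probability $\delta/2$, and take a union bound with the Ville event. The deterministic term is controlled through \eqref{eq:fa-proof-comparator} with comparator $\tilde\pi$:
\[
-\sum_t X_t=\sum_t\bigl[\ell_t(\hat\pi_t)-\ell_t(\tilde\pi)\bigr]+\sum_t\bigl[\ell_t(\tilde\pi)-\ell_t(\pif^*)\bigr]\le H\log N_\varepsilon+TH\varepsilon,
\]
where the second sum is bounded pathwise by the same cover argument, since each $\ell_t$ difference is at most $H\varepsilon$. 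Rearranging the self-bounding inequality then gives $2H(\log N_\varepsilon+T\varepsilon)+4H(1+L)\log(2/\delta)$, which is the claimed bound.

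The main obstacle is the weight-control step: the supermartingale $W_t$ no longer equals $p_0/p_t$ of the comparator, and one must check that the pathwise loss gap $\ell_s(\pif^*)-\ell_s(\tilde\pi)\ge -H\varepsilon$ holds deterministically, not merely in expectation. It does, because $c_t\ge 0$ has total mass $H$ summed over levels.
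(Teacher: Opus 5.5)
Your proposal is correct and matches the paper's proof step for step. The expected bound comes from the oracle form of Theorem~\ref{thm:ar-fa-forward} with the $\varepsilon$-cover comparator. The high-probability bound uses the same supermartingale $W_t$, the same pathwise lower bound $p_t(\bar\theta)\ge e^{-t\varepsilon}/(N_\varepsilon W_t)$, the same $L=\log(2N_\varepsilon/\delta)+T\varepsilon$, and the same truncated Lemma~\ref{lem:central-to-mgf} argument combined with the comparator decomposition of $-\sum_t X_t$. The only cosmetic difference is that you bound the approximation term through the token-level chain rule, whereas the paper uses the sequence-level log-ratio bound; the two are equivalent.
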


\begin{proof}
Since $\pif^*\in\Pi$, the definition of the $\varepsilon$-cover
ensures that there exists $\pi_{\bar\theta}\in\Pi_\varepsilon$
such that
\begin{align}
    \|\log\pif^*-\log\pi_{\bar\theta}\|_\infty
    \leq\varepsilon.
    \label{eq:covering-target-approximation}
\end{align}
In particular, for every prefix--token triple $(x,u,a)$,
\begin{align*}
    \log\frac{\pif^*(a|x,u)}
             {\pi_{\bar\theta}(a|x,u)}
    \leq\varepsilon.
\end{align*}
Using the autoregressive factorization, for every complete
response $y=(a_1,\ldots,a_H)$, we obtain
\begin{align*}
    \log\frac{\pif^*(y|x)}{\pi_{\bar\theta}(y|x)}
    &=
    \sum_{h=1}^H
    \log\frac{\pif^*(a_h|x,y_{<h})}
             {\pi_{\bar\theta}(a_h|x,y_{<h})}\\
    &\leq H\varepsilon.
\end{align*}
Taking expectations over $x\sim\bar\rho$ and
$y\sim\pif^*(\cdot|x)$ therefore gives
\begin{align}
    \EE_{x\sim\bar\rho}
    \KL\big(
        \pif^*(\cdot|x)
        \,\big\|\,
        \pi_{\bar\theta}(\cdot|x)
    \big)
    \leq H\varepsilon.
    \label{eq:covering-kl-approximation}
\end{align}

The uniform prior on the cover assigns
$p_0(\bar\theta)=1/N_\varepsilon$.
Applying Theorem \ref{thm:ar-fa-forward}, we have
\begin{align*}
    \EE[\operatorname{Regret}(T)]
    &\leq
    H\log\frac1{p_0(\bar\theta)}
    +
    T\EE_{x\sim\bar\rho}
    \KL\big(
        \pif^*(\cdot|x)
        \,\big\|\,
        \pi_{\bar\theta}(\cdot|x)
    \big)\\
    &\leq
    H\log N_\varepsilon+TH\varepsilon,
\end{align*}
where we use that $p_0$ is uniform and \eqref{eq:covering-kl-approximation}. This has proved the expected regret bound. 

For the high-probability bound, let $\Theta_\varepsilon$ index the $N_\varepsilon$ policies
in $\Pi_\varepsilon$, with
$p_0(\theta)=1/N_\varepsilon$ for every
$\theta\in\Theta_\varepsilon$.
We use the same filtration $\cF_t$ as in the finite-class
analysis and define
\begin{align*}
    X_t
    &:=
    \ell_t(\pif^*)-\ell_t(\hat\pi_t),\\
    r_t
    &:=
    -\EE[X_t\mid\cF_t].
\end{align*}
By \eqref{eq:fa-proof-loss-to-kl},
$\operatorname{Regret}(T)=\sum_{t=1}^T r_t$. For the policy $\pi_{\bar\theta}$ in
\eqref{eq:covering-target-approximation},
\begin{align}
    \ell_t(\pi_{\bar\theta})-\ell_t(\pif^*)
    &=
    \sum_{h=1}^H\sum_{(x,u,a)}
    c_t(x,u,a)
    \log\frac{\pif^*(a|x,u)}
             {\pi_{\bar\theta}(a|x,u)}
    \notag\\
    &\leq
    \varepsilon
    \sum_{h=1}^H\sum_{(x,u,a)}c_t(x,u,a)
    =
    H\varepsilon.
    \label{eq:covering-empirical-approximation}
\end{align}
Combining \eqref{eq:covering-empirical-approximation} with
\eqref{eq:fa-proof-comparator} gives
\begin{align}
    -\sum_{t=1}^T X_t
    &=
    \sum_{t=1}^T
    [\ell_t(\hat\pi_t)-\ell_t(\pif^*)]
    \notag\\\notag
    &=\sum_{t=1}^T
    [\ell_t(\hat\pi_t)-\ell_t(\pi_{\bar \theta})]+
    \sum_{t=1}^T
    [\ell_t(\pi_{\bar\theta})-\ell_t(\pif^*)]\\
    &\leq
    H\log N_\varepsilon
    +
    \sum_{t=1}^T
    [\ell_t(\pi_{\bar\theta})-\ell_t(\pif^*)]
    \notag\\
    &\leq
    H\log N_\varepsilon+TH\varepsilon.
    \label{eq:covering-empirical-bound}
\end{align}
Define $C_t$ and $W_t$ the same as in the last section, that is,
\begin{align*}
    C_t
    &:=
    \sum_{\theta\in\Theta_\varepsilon}p_0(\theta)
    \exp\bigg(
        -\frac1H\sum_{s=1}^t\ell_s(\pi_\theta)
    \bigg)=\frac{1}{N_\varepsilon}\sum_{\theta\in\Theta_\varepsilon}
    \exp\bigg(
        -\frac1H\sum_{s=1}^t\ell_s(\pi_\theta)
    \bigg),\\
    W_t
    &:=
    C_t\exp\bigg(
        \frac1H\sum_{s=1}^t\ell_s(\pif^*)
    \bigg),
\end{align*}
with $C_0=W_0=1$. With a similar argument to
\eqref{eq:proof-Xt-central-condition}, we have 
\begin{align}
    \EE\bigg[
        \exp\bigg(
            \frac{\ell_t(\pif^*)-\ell_t(\hat\pi_t)}H
        \bigg)
        \,\bigg|\,\cF_t
    \bigg]
    \leq1.
    \label{eq:covering-central}
\end{align}
Using the normalizer-ratio identity
\eqref{eq:normalize-ratio}, we have
\begin{align*}
    \frac{W_t}{W_{t-1}}
    &=
    \frac{C_t}{C_{t-1}}
    \exp\bigl(\ell_t(\pif^*)/H\bigr)\\
    &=
    \sum_{\theta\in\Theta_\varepsilon}
    p_{t-1}(\theta)
    \exp\bigg(
        \frac{\ell_t(\pif^*)-\ell_t(\pi_\theta)}H
    \bigg),
\end{align*}
where the last equality holds due to the same argument as in \eqref{eq:normalize-ratio}.
We also have
\begin{align*}
    p_t(\bar\theta)
    &=
    \frac{
        N_\varepsilon^{-1}
        \exp\big(
            -H^{-1}\sum_{s=1}^t\ell_s(\pi_{\bar\theta})
        \big)
    }{C_t}\\
    &=
    \frac1{N_\varepsilon W_t}
    \exp\bigg(
        -\frac1H\sum_{s=1}^t
        [\ell_s(\pi_{\bar\theta})-\ell_s(\pif^*)]
    \bigg)\\
    &\geq
    \frac{e^{-t\varepsilon}}{N_\varepsilon W_t},
\end{align*}
where the inequality uses
\eqref{eq:covering-empirical-approximation}.
Furthermore, \eqref{eq:covering-target-approximation} implies
$\pi_{\bar\theta}(a|x,u)\geq
e^{-\varepsilon}\pif^*(a|x,u)$.
Therefore, the token-level mixture satisfies
\begin{align*}
    \frac{\hat\pi_t(a|x,u)}{\pif^*(a|x,u)}
    &\geq
    p_{t-1}(\bar\theta)
    \frac{\pi_{\bar\theta}(a|x,u)}{\pif^*(a|x,u)}\\
    &\geq
    \frac{e^{-t\varepsilon}}
         {N_\varepsilon W_{t-1}}.
\end{align*}
Using the same concentration argument as in the last section, we can consider the high-probability event
\begin{align*}
    \bigg\{\max_{0\leq s<t}W_s\leq\frac2\delta\bigg\}.
\end{align*}
On this event, we have
\begin{align*}
    \frac{\hat\pi_t(a|x,u)}{\pif^*(a|x,u)}
    \geq
    \frac{\delta e^{-T\varepsilon}}{2N_\varepsilon}
    =
    e^{-L},
\end{align*}
where we define $L:=\log({2N_\varepsilon}/{\delta})+T\varepsilon$.
Repeating the stopped-process argument from the proof of Theorem~\ref{thm:ar-fa-forward}, we obtain, with probability at least $1-\delta$,
\begin{align*}
    \operatorname{Regret}(T)
    &\leq
    2H\bigl(\log N_\varepsilon+T\varepsilon\bigr)+
    4H\bigg(
        1+\log N_\varepsilon+T\varepsilon
        +\log\frac2\delta
    \bigg)\log\frac2\delta.
\end{align*}
This proves Theorem~\ref{thm:fa-forward-covering}.
\end{proof}
\begin{example}
\label{cor:fa-forward-parametric}
Suppose $\Theta\subseteq\RR^d$ is a nonempty compact set contained
in a Euclidean ball of radius $R$, where $d\geq1$.
Assume that $\pi_{\theta^*}=\pif^*$ for some $\theta^*\in\Theta$.
Moreover, suppose that, for some $L_0>0$,
\begin{align*}
    \big|
        \log\pi_\theta(a|x,u)
        -
        \log\pi_{\theta'}(a|x,u)
    \big|
    \leq
    L_0\|\theta-\theta'\|_2
\end{align*}
for every $\theta,\theta'\in\Theta$, $h\in[H]$, and
$(x,u,a)\in\cX\times\cA^{h-1}\times\cA$.
Then, a standard Euclidean covering argument gives
\begin{align*}
    \mathcal N(\varepsilon,\log\Pi,\|\cdot\|_\infty)
    \leq
    \left(1+\frac{2RL_0}{\varepsilon}\right)^d.
\end{align*}
For every $T\geq2$, choose such a cover with
$\varepsilon=1/T$.
By Theorem~\ref{thm:fa-forward-covering}, the expected regret satisfies
\begin{align*}
    \EE\big[\operatorname{Regret}(T)\big]
    \leq
    H\left[d\log\bigl(1+2RL_0T\bigr)+1\right].
\end{align*}
Moreover, with probability at least $1-\delta$, we have
\begin{align*}
    \operatorname{Regret}(T) \le \tilde O(dH \log T).
\end{align*}
\end{example}

\section{Reverse KL with Function Approximation}
\label{sec:fa-reverse}
We now study on-policy distillation with function approximation. Fix any $h\in [H]$. For any prefix $x\in \cX$, $u \in \cA^{h-1}$, $a \in \cA$, following the notation in Theorem \ref{thm:obj-reverse-kl}, we define the target as 
\begin{align}
    f^*(h,x,u,a)
    :=
    \log g_h(a|x,u)
    =
    \sum_{i\in\cI}w_i(x)\log p_i(a|x,u).
    \notag
\end{align}
Specifically, we define
\begin{align*}
    \cZ
    :=
    \bigcup_{h=1}^{H}
    \bigl(
        \{h\}\times\cX\times\cA^{h-1}\times\cA
    \bigr),
\end{align*}
and write $z_{t,j,h}:=(h,x_{t,j},y_{t,j,<h},a_{t,j,h})$ for the query associated with the $h$-th token of rollout $j$
in round $t$.

Throughout this section, we retain Assumption~\ref{assump:token-bound}, with reference policy
$\piref$ and bound $B >0$. Moreover, let $\cF$ be a finite class of functions $f:\cZ\to\RR$.
We further impose the following realizability assumption.
\begin{assumption}
\label{assump:fa-rkl-realizability}
$f^* \in \cF$.
\end{assumption}
To measure how past observations constrain predictions at a new query, we introduce a new version of generalized Eluder dimension adapted to autoregressive trajectory batches.
The underlying uncertainty compares the disagreement between
two candidate score functions at the current query with their
cumulative squared disagreement on previously observed data.
Since the estimator is updated only after each round, this
comparison uses observations from completed rounds, without
incorporating feedback from the current batch. 
\begin{definition}
\label{def:batched-eluder}
Fix $\lambda>0$.
For a sequence of trajectory batches, write
\begin{align*}
    Z_t
    :=
    (z_{t,j,h})_{j\in[m],\,h\in[H]},
    \qquad
    Z_{<t}:=(Z_1,\ldots,Z_{t-1}).
\end{align*}
For any query $z\in\cZ$, define
\begin{align}
    D_{\cF}^{2}(z;Z_{<t})
    :=
    \sup_{f_1,f_2\in\cF}
    \frac{
        \bigl(f_1(z)-f_2(z)\bigr)^2
    }{
        \lambda+
        \displaystyle\sum_{s=1}^{t-1}
        \frac1{mH}
        \sum_{j=1}^{m}\sum_{h=1}^{H}
        \bigl(
            f_1(z_{s,j,h})-f_2(z_{s,j,h})
        \bigr)^2
    }.
    \label{eq:ar-generalized-uncertainty}
\end{align}
The batched generalized Eluder dimension is
\begin{align}
    \dim_T(\cF;\lambda)
    :=
    \sup_{Z_{1:T}}
    \sum_{t=1}^{T}
    \frac1{mH}
    \sum_{j=1}^{m}\sum_{h=1}^{H}
    \min\left\{
        1,\,
        D_{\cF}^{2}(z_{t,j,h};Z_{<t})
    \right\},
    \label{eq:ar-generalized-eluder}
\end{align}
where the supremum is over all sequences of $T$ trajectory
batches, each containing $m$ rollouts of horizon $H$.
\end{definition}
\subsection{Algorithm Design}
Recall that, for every visited query
$z_{t,j,h}=(h,x,u,a)$,
\begin{align}
    \EE\big[
        l_{t,j,h}
        \,\big|\,
        \cF_t,\,
        z_{t,j,h}=(h,x,u,a)
    \big]
    &=
    \sum_{i\in\cI}
    w_i(x)\log p_i(a|x,u)
    \notag\\
    &=
    \log g_h(a|x,u)
    =
    f^*(h,x,u,a),
    \label{eq:fa-rkl-conditional-mean}
\end{align}
where $\cF_t$ denotes the history before round $t$.
This identity motivates estimating $f^*$ by least squares. For each $f\in\cF$, define
\begin{align}
    \ell_t(f)
    &:=
    \frac{1}{mH}
    \sum_{j=1}^{m}\sum_{h=1}^{H}
    \bigl(f(z_{t,j,h})-l_{t,j,h}\bigr)^2.
    \label{eq:fa-rkl-regression-loss}
\end{align}
Moreover, we define
\begin{align}
    L_t(f)
    &:=
    \sum_{s=1}^{t}\ell_s(f),
    \qquad L_0(f):=0.
    \notag
\end{align}
At the beginning of round $t$, the learner computes
\begin{align}
    \bar f_{t-1}
    \in
    \argmin_{f\in\cF}L_{t-1}(f).
    \label{eq:fa-rkl-erm}
\end{align}
Using the principle of optimism for online reinforcement learning, we therefore define a bonus using the batched generalized Eluder dimension, i.e., for any $z\in\cZ$, the bonus function is defined as
\begin{align}
    b_{t-1}(z)
    :=
    \sqrt{
        (4\beta+\lambda)
        D_{\cF}^{2}(z;Z_{<t})
    },
    \label{eq:fa-rkl-bonus}
\end{align}
where $\lambda>0$ and $\beta>0$ are parameters to be specified later. Given the reference policy $\piref$ and the bound $B$
from Assumption~\ref{assump:token-bound}, as it implies
\begin{align*}
    \big|
        f^*(h,x,u,a)-\log\piref(a|x,u)
    \big|
    \leq B,
\end{align*}
we define the optimistic score by
\begin{align}
    \hat f_{t-1}(h,x,u,a)
    :=
    \min\Big\{
        \bar f_{t-1}(h,x,u,a)
        +b_{t-1}(h,x,u,a),\,
        \log\piref(a|x,u)+B
    \Big\}.
    \label{eq:fa-rkl-optimistic-score}
\end{align}
The following lemma establishes pointwise optimism and
bounds the error of the optimistic score.
\begin{lemma}
\label{lem:fa-rkl-optimism}
Under Assumptions~\ref{assump:token-bound}
and~\ref{assump:fa-rkl-realizability}, fix $\delta\in(0,1)$
and choose
\begin{align*}
    \beta
    =
    32B^2\log\frac{|\cF|}{\delta},
    \qquad \lambda>0.
\end{align*}
Then, with probability at least $1-\delta$, simultaneously for
every $t\in[T]$ and $z\in\cZ$,
\begin{align}
    \big|\bar f_{t-1}(z)-f^*(z)\big|
    &\leq b_{t-1}(z).
    \notag
\end{align}
Furthermore, the following inequality holds
\begin{align}
    0
    \leq \hat f_{t-1}(z)-f^*(z)
    &\leq
    \min\bigl\{2b_{t-1}(z),2B\bigr\}.
    \notag
\end{align}
\end{lemma}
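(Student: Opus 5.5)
The plan is the standard least-squares confidence-set argument: show that $f^*$ nearly minimizes the cumulative loss $L_{t-1}$, and that any near-minimizer is close to $f^*$ in cumulative squared distance on past data. Then translate that closeness to a pointwise bound through the uncertainty $D_{\cF}^2$.

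First I establish a uniform bound on past squared errors. Fix $f\in\cF$ and write the per-round excess loss as
\begin{align*}
    \ell_s(f)-\ell_s(f^*)=\frac{1}{mH}\sum_{j,h}\Big[\Delta_f(z_{s,j,h})^2-2\Delta_f(z_{s,j,h})\xi_{s,j,h}\Big],
\end{align*}
where $\Delta_f:=f-f^*$ and $\xi_{s,j,h}:=l_{s,j,h}-f^*(z_{s,j,h})$. By \eqref{eq:fa-rkl-conditional-mean}, each $\xi$ has conditional mean zero given $\cF_s$ and the query. By Assumption~\ref{assump:token-bound}, $|\xi|\le 2B$, exactly as in the proof of Lemma~\ref{lem:ar-token-confidence}. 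The subtlety is that the $m$ rollouts within a round share the latent teacher $i_t$, so the noises are not independent. I therefore treat the whole round as one martingale increment. The quantity $\frac{1}{mH}\sum_{j,h}2\Delta_f\xi$ has conditional mean zero given $\cF_s$ together with all queries of round $s$, because the queries are generated by $\hat\pi_s$ and do not reveal $i_t$ (the Bayes computation preceding \eqref{eq:rkl-NSL}). Its conditional sub-Gaussian parameter is controlled by Jensen as $16B^2\cdot\frac{1}{mH}\sum_{j,h}\Delta_f^2$. A Freedman / exponential-supermartingale bound (Ville, Lemma~\ref{lem:Ville}) with a union bound over $f\in\cF$ then gives, with probability $1-\delta$, for all $t$ and $f$,
\begin{align*}
    \sum_{s<t}\frac{2}{mH}\sum_{j,h}\Delta_f\xi\le \frac12\sum_{s<t}\frac{1}{mH}\sum_{j,h}\Delta_f^2+32B^2\log\frac{|\cF|}{\delta}.
\end{align*}
Applying this to $f=\bar f_{t-1}$ and using $L_{t-1}(\bar f_{t-1})\le L_{t-1}(f^*)$ yields
\begin{align*}
    \sum_{s<t}\frac{1}{mH}\sum_{j,h}\big(\bar f_{t-1}-f^*\big)^2(z_{s,j,h})\le 2\cdot 32B^2\log\frac{|\cF|}{\delta}\le 4\beta.
\end{align*}
The constant is loose, which is fine.

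The pointwise bound follows directly from \eqref{eq:ar-generalized-uncertainty} with $f_1=\bar f_{t-1}$ and $f_2=f^*$:
\begin{align*}
    (\bar f_{t-1}(z)-f^*(z))^2\le D^2_{\cF}(z;Z_{<t})\,(\lambda+4\beta)=b_{t-1}(z)^2.
\end{align*}
For the optimistic score, optimism holds because both $\bar f_{t-1}+b_{t-1}\ge f^*$ and $\log\piref+B\ge f^*$, the latter by Assumption~\ref{assump:token-bound} since $f^*$ is a $w$-average of the $\log p_i$. For the upper bound, $\hat f_{t-1}-f^*\le \bar f_{t-1}+b_{t-1}-f^*\le 2b_{t-1}$ and $\hat f_{t-1}-f^*\le \log\piref+B-f^*\le 2B$.

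The main obstacle is the martingale step. Within a round the noise is correlated through $i_t$, so I must verify the sub-Gaussian bound at the batch level: the increment is a $w(x)$-weighted average over teachers of bounded, mean-zero terms, and Hoeffding's lemma on the single random variable $i_t$ gives variance proxy $(2B)^2\big(\frac{1}{mH}\sum|\Delta_f|\big)^2\le 4B^2\frac{1}{mH}\sum\Delta_f^2$. Achieving the exact constant $32$ may require careful tuning, and I would accept adjusting it if needed.
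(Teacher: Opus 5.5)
Your overall architecture matches the paper, and your constants would suffice. That architecture is the excess-loss decomposition, a uniform exponential bound via Ville's inequality with a union bound over $\cF$, the ERM comparison giving $\sum_{s<t}\frac{1}{mH}\sum_{j,h}(\bar f_{t-1}-f^*)^2(z_{s,j,h})\le 2\beta$, the transfer through $D_{\cF}^2$, and the optimism/clipping step.

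\textbf{The gap is the martingale step.} You assert that $\frac{1}{mH}\sum_{j,h}2\Delta_f\xi_{s,j,h}$ has conditional mean zero given $\cF_s$ together with all queries $Z_s$ of round $s$, citing the Bayes computation before \eqref{eq:rkl-NSL}. That computation conditions on a single rollout's triple. The posterior equals $w_i(x)$ there precisely because one context $x\sim\rho_{i_s}$ contributes a single likelihood factor $\rho_i(x)$.

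Given the whole batch, the $m$ contexts are conditionally i.i.d.\ from the same $\rho_{i_s}$. Hence $\PP(i_s=i\mid\cF_s,Z_s)\propto\prod_{j}\rho_i(x_{s,j})$, which in general differs from $w_i(x_{s,j})$ once $m\geq2$. For example, with two teachers, $\rho_1(x)=0.9$ and $\rho_2(x)=0.1$, two copies of $x$ move the posterior of teacher $1$ from $0.9$ to about $0.988$. Consequently
$\EE[\xi_{s,j,h}\mid\cF_s,Z_s]=\sum_i\bigl(\PP(i_s=i\mid\cF_s,Z_s)-w_i(x_{s,j})\bigr)\log p_i(a_{s,j,h}\mid x_{s,j},y_{s,j,<h})$
is generally nonzero.

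Your Hoeffding bound on $i_s$ therefore only controls fluctuations around this nonzero batch-conditional mean $\mu(Z_s)$. Integrating out $Z_s$ leaves the bound $\EE[e^{\eta\mu(Z_s)}\mid\cF_s]$, which is at least $1$ by Jensen since $\EE[\mu(Z_s)\mid\cF_s]=0$. So the supermartingale property does not follow. Conditioning on $\cF_s$ alone restores the zero mean, but then the normalizer $\frac1{mH}\sum_{j,h}\Delta_f^2$ is not $\cF_s$-measurable, and a further argument would be needed.

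\textbf{How the paper avoids this.} It never conditions on the whole batch. It first proves the single-query bound $\EE\bigl[e^{\eta(2\Delta\xi-\Delta^2/2)}\mid\cF_t,z_{t,j,h}\bigr]\le1$ with $\eta=1/(32B^2)$. If $|\Delta|\geq8B$, the exponent is nonpositive. Otherwise $|2\eta\Delta\xi|\le1$, and $e^u\le1+u+u^2$ together with zero mean and $\xi^2\le4B^2$ gives the factor $e^{\eta\Delta^2/2}$.

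It then applies Jensen's inequality to the exponential of the batch average: $\exp\bigl(\frac1{mH}\sum_{j,h}Y_{j,h}\bigr)\le\frac1{mH}\sum_{j,h}e^{Y_{j,h}}$, where $Y_{j,h}:=\eta\bigl(2\Delta_{t,j,h}\xi_{t,j,h}-\Delta_{t,j,h}^2/2\bigr)$. Each term's expectation given $\cF_t$ is then computed by conditioning only on its own query, where the posterior of $i_t$ is exactly $w_i(x)$. This yields the round-level supermartingale with no joint-posterior computation, and produces exactly $\beta=32B^2\log(|\cF|/\delta)$. Replacing your batch-level sub-Gaussian step with this Jensen-plus-single-query argument repairs the proof.
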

To construct the output policy, we apply the backward
propagation as in the tabular setting to the estimated scores $\hat f_t$. More specifically, set $\hat V_{t-1,H+1}(x,y)=1$ and recursively define
\begin{align}
    \hat V_{t-1,h}(x,u)
    &:=
    \sum_{a\in\cA}
    \exp\bigl(\hat f_{t-1}(h,x,u,a)\bigr)
    \hat V_{t-1,h+1}(x,(u,a)),
    \label{eq:fa-rkl-backward}
\end{align}
for $h=H,\ldots,1$. We then output
\begin{align}
    \hat\pi_t(a|x,u)
    :=
    \frac{
        \exp\bigl(\hat f_{t-1}(h,x,u,a)\bigr)
        \hat V_{t-1,h+1}(x,(u,a))
    }{
        \hat V_{t-1,h}(x,u)
    }.
    \label{eq:fa-rkl-policy}
\end{align}
This procedure is described in Algorithm~\ref{algo:fa-reverse}.
\begin{algorithm}[H]
\caption{Optimistic Reverse KL with Function Approximation}
\label{algo:fa-reverse}
\begin{algorithmic}[1]
    \STATE \textbf{Input:}
    Function class $\cF$, reference policy $\piref$,
    bound $B$, horizon $H$, batch size $m$,
    number of rounds $T$, and parameters $\lambda,\beta>0$.
    \STATE Initialize $L_0(f)=0$ for every $f\in\cF$.
    \FOR{$t=1,\ldots,T$}
        \STATE Compute the regression estimate $\bar f_{t-1}$
        using \eqref{eq:fa-rkl-erm}.
        \STATE Compute $b_{t-1}$ and $\hat f_{t-1}$
        using \eqref{eq:fa-rkl-bonus}
        and \eqref{eq:fa-rkl-optimistic-score}.
        \STATE Set $\hat V_{t-1,H+1}(x,y)=1$
        and compute the backward recursion
        \eqref{eq:fa-rkl-backward}.
        \STATE Construct $\hat\pi_t$ using
        \eqref{eq:fa-rkl-policy}.
        \STATE Receive contexts $\{x_{t,j}\}_{j=1}^{m}$
        according to the on-policy protocol and generate
        $y_{t,j}\sim\hat\pi_t(\cdot|x_{t,j})$.
        \STATE Observe teacher feedback
        $\{l_{t,j,h}\}_{j\in[m],\,h\in[H]}$.
        \STATE Compute $\ell_t(f)$ using
        \eqref{eq:fa-rkl-regression-loss} and update
        $L_t(f)=L_{t-1}(f)+\ell_t(f)$
        for every $f\in\cF$.
    \ENDFOR
    \STATE \textbf{Output:} $\{\hat\pi_t\}_{t=1}^{T}$.
\end{algorithmic}
\end{algorithm}
\subsection{Theoretical Guarantee}
Given Lemma \ref{lem:fa-rkl-optimism}, we have the following guarantee on the regret of Algorithm \ref{algo:fa-reverse}.
\begin{theorem}
\label{thm:fa-rkl-regret}
Under Assumptions~\ref{assump:token-bound}
and~\ref{assump:fa-rkl-realizability}, fix $\delta\in(0,1)$
and choose $\lambda > 0$ and
\begin{align*}
    \beta
    \ge
    32B^2\log\frac{2|\cF|}{\delta}
\end{align*}
Then, with probability at least $1-\delta$,
Algorithm~\ref{algo:fa-reverse} satisfies
\begin{align}
    \operatorname{Regret}(T)
    &:=
    \sum_{t=1}^T
    \EE_{x\sim\bar\rho}
    \KL\bigl(
        \hat\pi_t(\cdot|x)
        \,\big\|\,
        \pir^*(\cdot|x)
    \bigr)
    \notag\\
    &\leq
    4H^2(4\beta+\lambda)
    \dim_T(\cF;\lambda)
    +
    16H^2B^2\log\frac4\delta.
    \notag
\end{align}
In particular, if we choose $\lambda=B^2$ and $\beta=32B^2\log(2|\cF|/\delta)$, we have
\begin{align*}
    \operatorname{Regret}(T)
    \leq
    O\bigg(
        B^2H^2
            \dim_T(\cF;B^2)
            \log\frac{2|\cF|}{\delta}
    \bigg).
\end{align*}
\end{theorem}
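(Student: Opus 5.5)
The plan is to follow the tabular proof of Theorem~\ref{thm:ar-rkl-regret} almost line by line, with the count-based radius $\beta_t$ replaced by the Eluder bonus $b_{t-1}$. Split the failure probability in two: use $\delta/2$ for Lemma~\ref{lem:fa-rkl-optimism}, which is why $\beta\ge 32B^2\log(2|\cF|/\delta)$, and $\delta/2$ for a martingale step. On the optimism event we have $0\le \hat f_{t-1}(z)-f^*(z)\le \min\{2b_{t-1}(z),2B\}$ for every $t$ and $z$.

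First, the sequence-level identities carry over unchanged. The policy \eqref{eq:fa-rkl-policy} telescopes to $\hat\pi_t(y|x)=\exp(\sum_h \hat f_{t-1}(z_h))/\hat V_{t-1,1}(x)$, and $V_1(x)=\sum_y\exp(\sum_h f^*(z_h))$. This gives the identity \eqref{eq:rkl-proof-kl-identity}:
\[
\KL(\hat\pi_t\|\pir^*)=\EE_{\hat\pi_t}[V]+\log\EE_{\hat\pi_t}e^{-V},
\qquad
V=\sum_h\bigl(\hat f_{t-1}-f^*\bigr)(z_h).
\]
Since $V\ge0$ by optimism, the same bound $e^{-v}\le 1-v+v^2/2$ together with Cauchy--Schwarz yields
\[
\KL\le \frac H2\,\EE_{y\sim\hat\pi_t}\sum_h\bigl(\hat f_{t-1}-f^*\bigr)^2(z_h).
\]

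Second, define $X_t:=\frac{H}{2m}\sum_{j,h}(\hat f_{t-1}-f^*)^2(z_{t,j,h})$. Its conditional mean, averaged over $x\sim\bar\rho$, upper bounds the round-$t$ regret, exactly as in \eqref{eq:rkl-proof-empirical-mean}. It also satisfies $0\le X_t\le \frac{H}{2m}\cdot mH\cdot 4B^2=2H^2B^2$. Lemma~\ref{lem:foster} with failure probability $\delta/2$ then gives
\[
\sum_t\EE[X_t|\cF_t]\le 2\sum_tX_t+16H^2B^2\log(4/\delta),
\]
which accounts for the additive term in the statement.

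Third, bound the realized squared errors by the Eluder dimension. Pointwise,
\[
(\hat f_{t-1}-f^*)^2(z)\le\min\{4b_{t-1}^2(z),4B^2\}\le 4\max\{4\beta+\lambda,B^2\}\min\{1,D^2_\cF(z;Z_{<t})\},
\]
because $b_{t-1}^2=(4\beta+\lambda)D^2_\cF$. Under the stated choice of $\beta$ we have $4\beta+\lambda\ge B^2$, so the maximum is just $4\beta+\lambda$. Hence
\[
2\sum_tX_t\le 2\cdot\frac{H}{2}\cdot H\sum_t\frac1{mH}\sum_{j,h}4(4\beta+\lambda)\min\{1,D^2\}\le 4H^2(4\beta+\lambda)\dim_T(\cF;\lambda),
\]
by Definition~\ref{def:batched-eluder}, applied to the realized sequence of batches. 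Taking a union bound over the two events gives the first claim. The second claim follows by substituting $\lambda=B^2$ and $\beta=32B^2\log(2|\cF|/\delta)$.

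The main obstacle is matching the constants and normalizations of the $\frac1{mH}$-weighted Eluder sum against the $\frac{H}{2m}$ factor in $X_t$, and making sure the $2B$ cap in the optimistic score yields the $\min\{1,\cdot\}$ form. Lemma~\ref{lem:fa-rkl-optimism} has to be used as stated, with $\delta/2$, and $\beta\gtrsim B^2$ handles the cap. If the constant in the cap conversion turns out to be off, I would absorb it by noting $4B^2\le 4(4\beta+\lambda)$.
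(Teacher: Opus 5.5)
Your argument matches the paper's proof in all essentials: the $\delta/2$ split, the KL identity with the $e^{-v}\le 1-v+v^2/2$ step, the application of Lemma~\ref{lem:foster}, the Eluder-dimension summation, and the constants. However, one step fails as written, namely the range bound you feed into Lemma~\ref{lem:foster}.

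You assert $0\le X_t\le 2H^2B^2$ using $|\hat f_{t-1}(z)-f^*(z)|\le 2B$. That bound holds only on the optimism event of Lemma~\ref{lem:fa-rkl-optimism}. Lemma~\ref{lem:foster} needs $X_t\le R$ almost surely. You cannot invoke it ``on the event,'' because that event is defined through the entire trajectory.

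Off the event the bound genuinely fails. The score $\hat f_{t-1}=\min\{\bar f_{t-1}+b_{t-1},\log\piref+B\}$ is capped only from above. Meanwhile $\bar f_{t-1}$ is an arbitrary element of $\cF$, which is not assumed to lie within $B$ of $\log\piref$. So $\hat f_{t-1}-f^*$ can be as negative as the range of $\cF$ allows. Taking that range as $R$ would replace $16H^2B^2\log(4/\delta)$ with a class-dependent constant.

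This is exactly where the tabular proof does not transfer line by line. There, $\bar l_{t-1}$ is an average of observed teacher log-probabilities, so $|\hat l_{t-1}-\log g_h|\le 4B$ holds deterministically. Here no such a priori confinement exists.

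The fix is to apply Lemma~\ref{lem:foster} to a surrogate that is always bounded and dominates the squared error on the good event. The paper uses $\psi_t(z):=\min\{4b_{t-1}(z)^2,4B^2\}$, which is determined by the history before round $t$. It then sets $X_t:=\frac{1}{mH}\sum_{j,h}\psi_t(z_{t,j,h})\in[0,4B^2]$. On the optimism event the regret is at most $\frac{H^2}{2}\sum_t\EE[X_t\mid\cF_t]$. Lemma~\ref{lem:foster} with $R=4B^2$ and failure probability $\delta/2$ then gives the additive term $\frac{H^2}{2}\cdot 32B^2\log(4/\delta)=16H^2B^2\log(4/\delta)$. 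Finally, $\psi_t\le 4(4\beta+\lambda)\min\{1,D_{\cF}^2(z;Z_{<t})\}$ finishes as in your third step.

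Two equivalent repairs also work. You could clip, using $\min\{(\hat f_{t-1}-f^*)^2(z),4B^2\}$ in place of the raw squared error. Or you could multiply your $X_t$ by the indicator that optimism holds at round $t$, which is measurable with respect to the pre-round-$t$ history. With any of these changes, the rest of your proof goes through with the same constants.
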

\begin{proof}
Let $\cF_t$ denote the history before round $t$.
In particular, $\bar f_{t-1}$, $b_{t-1}$, and
$\hat\pi_t$ are $\cF_t$-measurable.
Applying Lemma~\ref{lem:fa-rkl-optimism}, we have with probability at least $1-\delta/2$, the following inequality holds  simultaneously for every $t\in[T]$ and $z\in\cZ$
\begin{align}
    0
    \leq
    \hat f_{t-1}(z)-f^*(z)
    \leq
    \min\{2b_{t-1}(z),2B\},
    \label{eq:fa-rkl-proof-optimism}
\end{align}
when $\beta \ge
    32B^2\log({2|\cF|}/{\delta})$. Let $\cE$ be this high-probability event. For every $t$ and $z$, define
\begin{align}
    \psi_t(z)
    &:=
    \min\{4b_{t-1}(z)^2,4B^2\}.
    \label{eq:fa-rkl-proof-error-proxy}
\end{align}
Then we have $0\leq\psi_t(z)\leq4B^2$. On $\cE$, \eqref{eq:fa-rkl-proof-optimism} further gives
\begin{align}
    \big[
        \hat f_{t-1}(z)-f^*(z)
    \big]^2
    \leq
    \psi_t(z).
    \label{eq:fa-rkl-proof-error-domination}
\end{align}
Fix a context $x$ and a response $y$. Write
\begin{align*}
    v_t(x,y)
    :=
    \sum_{h=1}^H
    \Big[
        \hat f_{t-1}(h,x,y_{<h},a_h)
        -
        f^*(h,x,y_{<h},a_h)
    \Big].
\end{align*}
On $\cE$, every summand is nonnegative, so
$v_t(x,y)\geq0$. Recall that for $y=(a_1,\ldots,a_H)$, the backward propagation gives
\begin{align*}
    \hat\pi_t(y|x)
    &=
    \prod_{h=1}^H
    \frac{
        e^{\hat f_{t-1}(h,x,y_{<h},a_h)}
        \hat V_{t-1,h+1}(x,y_{\leq h})
    }{
        \hat V_{t-1,h}(x,y_{<h})
    }\\
    &=
    \frac{
        \exp\big(
            \sum_{h=1}^H
            \hat f_{t-1}(h,x,y_{<h},a_h)
        \big)
    }{
        \hat V_{t-1,1}(x)
    },
\end{align*}
where we used $\hat V_{t-1,H+1}(x,y)=1$.
Similarly, the reverse target satisfies
\begin{align*}
    \pir^*(y|x)
    =
    \frac{
        \exp\big(
            \sum_{h=1}^H
            f^*(h,x,y_{<h},a_h)
        \big)
    }{
        V_1(x)
    },
\end{align*}
where $V_1(x) = \sum_y \exp\big(\sum_{h=1}^H f^*(h,x,y_{<h},a_h)\big)$. Taking the logarithm of their ratio therefore yields
\begin{align}
\notag
    \log\frac{\hat\pi_t(y|x)}{\pir^*(y|x)}
    &=
    \sum_{h=1}^H
    \big[
        \hat f_{t-1}(h,x,y_{<h},a_h)
        -
        f^*(h,x,y_{<h},a_h)
    \big]\\\notag
    &\qquad+
    \log V_1(x)-\log\hat V_{t-1,1}(x)\\\label{eq:fa-rkl-log}
    &=
    v_t(x,y)
    +
    \log\frac{V_1(x)}{\hat V_{t-1,1}(x)}.
\end{align}
To express the ratio of normalizing constants as an
expectation, expand the expectation over complete responses:
\begin{align}
\notag&\EE_{y\sim\hat\pi_t(\cdot|x)}
    \big[e^{-v_t(x,y)}\big]\\\notag
    &=
    \sum_{y\in\cY}
    \hat\pi_t(y|x)e^{-v_t(x,y)}\\\notag
    &=
    \sum_{y\in\cY}
    \frac{
        \exp\big(
            \sum_{h=1}^H
            \hat f_{t-1}(h,x,y_{<h},a_h)
        \big)
    }{
        \hat V_{t-1,1}(x)
    }
    \exp\bigg(
        \sum_{h=1}^H
        \big[
            f^*(h,x,y_{<h},a_h)
            -
            \hat f_{t-1}(h,x,y_{<h},a_h)
        \big]
    \bigg)\\\notag
    &=
    \frac1{\hat V_{t-1,1}(x)}
    \sum_{y\in\cY}
    \exp\bigg(
        \sum_{h=1}^H f^*(h,x,y_{<h},a_h)
    \bigg)\\\label{eq:fa-rkl-normal-const}
    &=
    \frac{V_1(x)}{\hat V_{t-1,1}(x)}.
\end{align}
The third equality follows by cancellation of the
$\hat f_{t-1}$ terms in the exponent. The last equality uses the definition of $V_1(x)$.

Taking expectation over $\hat\pi_t(\cdot|x)$ in \eqref{eq:fa-rkl-log} and considering \eqref{eq:fa-rkl-normal-const}, we have
\begin{align}
    \KL\bigl(
        \hat\pi_t(\cdot|x)
        \,\big\|\,
        \pir^*(\cdot|x)
    \bigr)
    &=
    \EE_{\hat\pi_t}[v_t(x,y)]
    +
    \log\EE_{\hat\pi_t}[e^{-v_t(x,y)}]
    \notag\\
    &\leq
    \EE_{\hat\pi_t}[v_t(x,y)]
    +
    \EE_{\hat\pi_t}[e^{-v_t(x,y)}]-1
    \notag\\
    &\leq
    \frac12
    \EE_{\hat\pi_t}[v_t(x,y)^2].
    \label{eq:fa-rkl-proof-kl-square}
\end{align}
Here the first inequality uses $\log u\leq u-1$ for
$u>0$, and the second uses
$e^{-v}\leq1-v+v^2/2$ for $v\geq0$.
All expectations in this display are conditional on $x$.

By the Cauchy--Schwarz inequality and
\eqref{eq:fa-rkl-proof-error-domination},
\begin{align*}
    v_t(x,y)^2
    &\leq
    H\sum_{h=1}^H
    \big[
        \hat f_{t-1}(h,x,y_{<h},a_h)
        -
        f^*(h,x,y_{<h},a_h)
    \big]^2\\
    &\leq
    H\sum_{h=1}^H
    \psi_t(h,x,y_{<h},a_h).
\end{align*}
Consequently, on $\cE$,
\begin{align}
    \operatorname{Regret}(T)
    &\leq
    \frac H2
    \sum_{t=1}^T
    \EE_{x\sim\bar\rho,\,
         y\sim\hat\pi_t(\cdot|x)}
    \bigg[
        \sum_{h=1}^H
        \psi_t(h,x,y_{<h},a_h)
    \bigg].
    \label{eq:fa-rkl-proof-predictable}
\end{align}
Define 
\begin{align*}
    X_t
    :=
    \frac1{mH}
    \sum_{j=1}^m\sum_{h=1}^H
    \psi_t(z_{t,j,h}).
\end{align*}
Then $X_t$ is $\cF_{t+1}$-measurable and satisfies
$0\leq X_t\leq4B^2$ almost surely.
Under the on-policy protocol, each rollout has conditional
marginal distribution
\begin{align*}
    \PP\bigl(
        x_{t,j}=x,\,
        y_{t,j}=y
        \mid\cF_t
    \bigr)
    &=
    \frac1I\sum_{i\in\cI}
    \rho_i(x)\hat\pi_t(y|x)\\
    &=
    \bar\rho(x)\hat\pi_t(y|x).
\end{align*}
Since $\psi_t$ is determined before round $t$, this implies
\begin{align}
    \EE[X_t\mid\cF_t]
    =
    \frac1H
    \EE_{x\sim\bar\rho,\,
         y\sim\hat\pi_t(\cdot|x)}
    \bigg[
        \sum_{h=1}^H
        \psi_t(h,x,y_{<h},a_h)
    \bigg].
    \label{eq:fa-rkl-proof-proxy-mean}
\end{align}
This identity uses only the marginal law of each rollout,
not independence within the batch.
Combining \eqref{eq:fa-rkl-proof-predictable} and
\eqref{eq:fa-rkl-proof-proxy-mean}, we obtain, on $\cE$,
\begin{align}
    \operatorname{Regret}(T)
    \leq
    \frac{H^2}{2}
    \sum_{t=1}^T\EE[X_t\mid\cF_t].
    \label{eq:fa-rkl-proof-predictable-proxy}
\end{align}
Apply Lemma~\ref{lem:foster} to $\{X_t\}_{t=1}^T$,
with range bound $4B^2$ and failure probability $\delta/2$.
With probability at least $1-\delta/2$,
\begin{align}
    \sum_{t=1}^T\EE[X_t\mid\cF_t]
    \leq
    2\sum_{t=1}^T X_t
    +
    32B^2\log\frac4\delta.
    \label{eq:fa-rkl-proof-concentration}
\end{align}
Taking a union bound, we conclude that, with
probability at least $1-\delta$,
\begin{align}
    \operatorname{Regret}(T)
    \leq
    H^2\sum_{t=1}^T X_t
    +
    16H^2B^2\log\frac4\delta.
    \label{eq:fa-rkl-proof-empirical}
\end{align}
Finally, by the definition of the bonus function, we have
\begin{align*}
    \psi_t(z)
    =
    4\min\big\{
        (4\beta+\lambda)
        D_{\cF}^2(z;Z_{<t}),
        B^2
    \big\}.
\end{align*}
Our choice of $\beta$ ensures $4\beta+\lambda\geq B^2$.
Therefore,
\begin{align}
    \psi_t(z)
    \leq
    4(4\beta+\lambda)
    \min\big\{1,D_{\cF}^2(z;Z_{<t})\big\}.
    \label{eq:fa-rkl-proof-proxy-dimension}
\end{align}
Indeed, when $D_{\cF}^2(z;Z_{<t})\leq1$, we use the
first term in the minimum defining $\psi_t$.
Otherwise, we use $\psi_t(z)\leq4B^2\leq4(4\beta+\lambda)$.

Summing \eqref{eq:fa-rkl-proof-proxy-dimension}
over the observed queries gives
\begin{align*}
    \sum_{t=1}^T X_t
    &=
    \sum_{t=1}^T
    \frac1{mH}\sum_{j=1}^m\sum_{h=1}^H
    \psi_t(z_{t,j,h})\\
    &\leq
    4(4\beta+\lambda)
    \sum_{t=1}^T
    \frac1{mH}\sum_{j=1}^m\sum_{h=1}^H
    \min\big\{
        1,
        D_{\cF}^2(z_{t,j,h};Z_{<t})
    \big\}\\
    &\leq
    4(4\beta+\lambda)
    \dim_T(\cF;\lambda).
\end{align*}
The last inequality follows from the definition of the generalized Eluder dimension, since the observed batches form a valid sequence of autoregressive trajectories. Substituting this inequality into~\eqref{eq:fa-rkl-proof-empirical} completes the proof of Theorem \ref{thm:fa-rkl-regret}.
\end{proof}
\subsection{Proof of Lemma \ref{lem:fa-rkl-optimism}}
\begin{proof}
Let $\cF_t$ denote the history before round $t$.
Fix $f\in\cF$, and define
\begin{align*}
    \Delta_{t,j,h}(f)
    &:=
    f(z_{t,j,h})-f^*(z_{t,j,h}),\\
    \xi_{t,j,h}
    &:=
    l_{t,j,h}-f^*(z_{t,j,h}).
\end{align*}
The conditional-mean identity gives
\begin{align}
    \EE[\xi_{t,j,h}\mid\cF_t,z_{t,j,h}]
    =0.
    \label{eq:fa-rkl-lemma-centered-noise}
\end{align}
Moreover, conditioned on
$z_{t,j,h}=z=(h,x,u,a)$,
Assumption~\ref{assump:token-bound} implies
\begin{align*}
    \xi_{t,j,h}
    \in
    \big[
        \log\piref(a|x,u)-B-f^*(z),\,
        \log\piref(a|x,u)+B-f^*(z)
    \big].
\end{align*}
We use the abbreviation $\Delta:=\Delta_{t,j,h}(f),    \xi:=\xi_{t,j,h}$. Then, applying Assumption \ref{assump:token-bound} to $f^*$ and observing that it is an average over all the teachers, we have
\begin{align*}
    \EE[\xi\mid\cF_t,z_{t,j,h}]=0,
    \qquad
    |\xi|\leq2B.
\end{align*}
Set $\eta=1/(32B^2)$. 
We claim that
\begin{align}
    \EE\big[
        e^{\eta(2\Delta\xi-\Delta^2/2)}
        \mid\cF_t,z_{t,j,h}
    \big]
    \leq1.
    \label{eq:fa-rkl-lemma-single-query}
\end{align}
In fact, if $|\Delta|\geq8B$, then
\begin{align*}
    2\Delta\xi-\frac{\Delta^2}{2}
    \leq
    4B|\Delta|-\frac{\Delta^2}{2}
    \leq0.
\end{align*}
Thus, \eqref{eq:fa-rkl-lemma-single-query} holds in this case. Otherwise, if $|\Delta|<8B$, then
\begin{align*}
    |2\eta\Delta\xi|
    \leq
    2\eta\cdot8B\cdot2B
    =1.
\end{align*}
Using $e^u\leq1+u+u^2$ for $|u|\leq1$,
we obtain
\begin{align*}
    \EE[e^{2\eta\Delta\xi}\mid\cF_t,z_{t,j,h}]
    &\leq
    1+
    2\eta\Delta
    \EE[\xi\mid\cF_t,z_{t,j,h}]
    +
    4\eta^2\Delta^2
    \EE[\xi^2\mid\cF_t,z_{t,j,h}]\\
    &\leq
    1+16\eta^2B^2\Delta^2\\
    &\leq
    \exp(16\eta^2B^2\Delta^2)\\
    &=
    \exp(\eta\Delta^2/2).
\end{align*}
The second inequality uses the zero conditional mean
and $\xi^2\leq4B^2$.
The third uses $1+v\leq e^v$, and the last equality
follows from our choice of $\eta$.
Multiplying both sides by $e^{-\eta\Delta^2/2}$
proves \eqref{eq:fa-rkl-lemma-single-query}.

To handle the dependence within a round, apply Jensen's
inequality to the exponential function:
\begin{align*}
    &\exp\bigg(
        \frac{\eta}{mH}
        \sum_{j=1}^m\sum_{h=1}^H
        \bigg[
            2\Delta_{t,j,h}(f)\xi_{t,j,h}
            -
            \frac12\Delta_{t,j,h}(f)^2
        \bigg]
    \bigg)\\
    &\qquad\leq
    \frac1{mH}
    \sum_{j=1}^m\sum_{h=1}^H
    \exp\bigg(
        \eta\bigg[
            2\Delta_{t,j,h}(f)\xi_{t,j,h}
            -
            \frac12\Delta_{t,j,h}(f)^2
        \bigg]
    \bigg).
\end{align*}
Taking conditional expectations given $\cF_t$ and
applying \eqref{eq:fa-rkl-lemma-single-query}
to each summand gives
\begin{align}
    &\EE\bigg[
        \exp\bigg(
            \frac{\eta}{mH}
            \sum_{j=1}^m\sum_{h=1}^H
            \bigg[
                2\Delta_{t,j,h}(f)\xi_{t,j,h}
                -
                \frac12\Delta_{t,j,h}(f)^2
            \bigg]
        \bigg)
        \,\bigg|\,
        \cF_t
    \bigg]
    \leq1.
    \label{eq:fa-rkl-lemma-round-exponential}
\end{align}
We do not condition on the entire batch or assume
independence among its observations.

Define $M_0(f)=1$ and
\begin{align*}
    M_t(f)
    :=
    \exp\bigg(
        \eta
        \sum_{s=1}^t
        \frac1{mH}
        \sum_{j=1}^m\sum_{h=1}^H
        \bigg[
            2\Delta_{s,j,h}(f)\xi_{s,j,h}
            -
            \frac12\Delta_{s,j,h}(f)^2
        \bigg]
    \bigg).
\end{align*}
Equation~\eqref{eq:fa-rkl-lemma-round-exponential}
implies
\begin{align*}
    \EE[M_t(f)\mid\cF_t]
    \leq M_{t-1}(f).
\end{align*}
Thus, $\{M_t(f)\}_{t=0}^T$ is a nonnegative supermartingale. By Ville's inequality (Lemma \ref{lem:Ville}),
\begin{align*}
    \PP\bigg(
        \max_{0\leq t\leq T}M_t(f)
        >
        \frac{|\cF|}{\delta}
    \bigg)
    \leq
    \frac{\delta}{|\cF|}.
\end{align*}
Taking a union bound over $f\in\cF$, we obtain an event
$\cE$ with $\PP(\cE)\geq1-\delta$ on which,
simultaneously for every $f\in\cF$ and $t\in\{0,\ldots,T\}$,
\begin{align}
    \sum_{s=1}^t
    \frac1{mH}
    \sum_{j=1}^m\sum_{h=1}^H
    \bigg[
        2\Delta_{s,j,h}(f)\xi_{s,j,h}
        -
        \frac12\Delta_{s,j,h}(f)^2
    \bigg]
    &\leq
    \frac1\eta\log\frac{|\cF|}{\delta}
    \notag\\\notag
    &= 32B^2 \log\frac{|\cF|}{\delta}\\
    &\leq\beta.
    \label{eq:fa-rkl-lemma-uniform-noise}
\end{align}
Expanding the difference of squared losses gives
\begin{align*}
    &(f(z_{s,j,h})-l_{s,j,h})^2
    -
    (f^*(z_{s,j,h})-l_{s,j,h})^2=
    \Delta_{s,j,h}(f)^2
    -
    2\Delta_{s,j,h}(f)\xi_{s,j,h}.
\end{align*}
Therefore, on $\cE$,
\begin{align}
    L_t(f)-L_t(f^*)
    &=
    \frac12
    \sum_{s=1}^t\frac1{mH}
    \sum_{j=1}^m\sum_{h=1}^H
    \Delta_{s,j,h}(f)^2
    \notag\\
    &\quad-
    \sum_{s=1}^t\frac1{mH}
    \sum_{j=1}^m\sum_{h=1}^H
    \bigg[
        2\Delta_{s,j,h}(f)\xi_{s,j,h}
        -
        \frac12\Delta_{s,j,h}(f)^2
    \bigg]
    \notag\\
    &\geq
    \frac12
    \sum_{s=1}^t\frac1{mH}
    \sum_{j=1}^m\sum_{h=1}^H
    \Delta_{s,j,h}(f)^2
    -
    \beta,
    \label{eq:fa-rkl-lemma-regression-bound}
\end{align}
where the last inequality holds due to \eqref{eq:fa-rkl-lemma-uniform-noise}. It holds for any $f \in \cF$, so it also applies to $\bar f_{t-1}$.
Moreover, by \eqref{eq:fa-rkl-erm} and the realizability assumption, we have
\begin{align*}
    L_{t-1}(\bar f_{t-1})
    \leq
    L_{t-1}(f^*).
\end{align*}
Applying \eqref{eq:fa-rkl-lemma-regression-bound}
at time $t-1$ consequently gives
\begin{align}
    \sum_{s<t}\frac1{mH}
    \sum_{j=1}^m\sum_{h=1}^H
    \big[
        \bar f_{t-1}(z_{s,j,h})
        -
        f^*(z_{s,j,h})
    \big]^2
    \leq2\beta.
    \label{eq:fa-rkl-lemma-erm-error}
\end{align}
Since both $\bar f_{t-1}$ and $f^*$ belong to $\cF$,
the definition of $D_{\cF}^2$ implies, for every $z\in\cZ$,
\begin{align*}
    &\big|\bar f_{t-1}(z)-f^*(z)\big|^2\\
    &\quad\leq
    D_{\cF}^2(z;Z_{<t})
    \bigg(
        \lambda
        +
        \sum_{s<t}\frac1{mH}
        \sum_{j=1}^m\sum_{h=1}^H
        \big[
            \bar f_{t-1}(z_{s,j,h})
            -
            f^*(z_{s,j,h})
        \big]^2
    \bigg)\\
    &\quad\leq
    (\lambda+2\beta)D_{\cF}^2(z;Z_{<t})\\
    &\quad\leq
    b_{t-1}(z)^2.
\end{align*}
Taking square roots proves the pointwise confidence bound.

Finally, for $z=(h,x,u,a)$, Assumption~\ref{assump:token-bound}
and the definition of $f^*$ give
\begin{align*}
    \log\piref(a|x,u)-B
    \leq f^*(z)
    \leq \log\piref(a|x,u)+B.
\end{align*}
On $\cE$, the confidence bound also gives
\begin{align*}
    \bar f_{t-1}(z)+b_{t-1}(z)
    \geq f^*(z).
\end{align*}
Therefore, $\hat f_{t-1}(z) \ge f^*(z)$, which proves optimism. Moreover,
\begin{align*}
    \hat f_{t-1}(z)-f^*(z)
    &=
    \min\big\{
        \bar f_{t-1}(z)-f^*(z)+b_{t-1}(z),\,
        \log\piref(a|x,u)+B-f^*(z)
    \big\}\\
    &\leq
    \min\{2b_{t-1}(z),2B\}.
\end{align*}
This completes the proof of Lemma \ref{lem:fa-rkl-optimism}.
\end{proof}
\section{Auxiliary Lemmas}
\label{sec:auxiliary-lemmas}
\subsection{Uniform Bernstein Concentration in \citet{bakhtiari2026eluder}}
First, we list several key results from \citet{bakhtiari2026eluder} that are used in our analysis. To start with, we present several definitions.
\begin{definition}[CGF-like, Definition 4 \citealt{bakhtiari2026eluder}]
A twice differentiable function $\psi:[0,\lambda_{\max})\to\RR_+$ is called CGF-like if it is convex, non-negative, and satisfies $\psi(0)=\psi'(0)=0$.
\end{definition}
\begin{definition}[sub-$\psi$ process, Definition 5 \citealt{bakhtiari2026eluder}]
    Let $\{\cF_t\}_{t\in [T]}$ be a filtration, $\psi: [0,\lambda_{\max}) \to \RR_+$ be a CGF-like function. Let $\{S_t\}_{t\in[T]}$ be a real-valued stochastic process adapted to $\{\cF_t\}$, $\{V_t\}_{t\in[T]}$ be another process adapted to $\{\cF_t\}$ taking values in $\RR_+$. We say that $\{(S_t,V_t)\}_{t\in[T]}$ is a sub-$\psi$ process if for all $t \in [T]$ and $\lambda \in [0,\lambda_{\max})$, there exists an $\cF$-adapted supermartingale $\{L_t(\lambda)\}_{t\in[T]}$ such that 
    \begin{align*}
        M_t(\lambda) := \exp\Big(\lambda S_t - \sum_{s=1}^t \psi(\lambda)V_s\Big) \le L_t(\lambda), \quad \text{a.s.}
    \end{align*}
\end{definition}
\begin{definition}[Definition 6 \citealt{bakhtiari2026eluder}]
    A random process $\{(S_t,V_t)\}_{t \in [T]}$ is sub-gamma with parameter $v > 0$ if it is sub-$\psi$ with the CGF-like function $\psi(\lambda) = \frac{\lambda^2 }{2(1-v\lambda)}$ for some $v > 0$.
    
\end{definition}
\begin{lemma}[Proposition~11 \citealt{bakhtiari2026eluder}]\label{lem:bakhtiari11}
    Let $\{\cF_t\}_{t\in [T]}$ be a filtration and let $(X_t)_{t \in [T]}$ be a real-valued stochastic process adapted to $\cF$. Assume that there exists a constant $b > 0$ such that for all $t \in [T]$,
    \begin{align*}
        X_t \le \EE[X_t | \cF_{t-1}] + b \ a.s.
    \end{align*}
    Then, for 
    \begin{align*}
        S_t = \sum_{s=1}^t (X_s - \EE[X_s | \cF_{s-1}]), \quad V_t = \sum_{s=1}^t \Var(X_s | \cF_{s-1}),
    \end{align*}
    the process $\{(S_t,V_t)\}_{t \in [T]}$ is sub-gamma with parameter $b/3$.
\end{lemma}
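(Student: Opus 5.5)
The plan is the classical one-sided Bernstein argument: show that each increment has a conditional moment generating function bounded by the sub-gamma function, then chain these bounds into a supermartingale. Write $D_t:=X_t-\EE[X_t\mid\cF_{t-1}]$ and $\sigma_t^2:=\Var(X_t\mid\cF_{t-1})$. Then $\EE[D_t\mid\cF_{t-1}]=0$, $\EE[D_t^2\mid\cF_{t-1}]=\sigma_t^2$, and $D_t\le b$ almost surely. Set $\psi(\lambda)=\lambda^2/(2(1-b\lambda/3))$ on $[0,3/b)$. This $\psi$ is convex and nonnegative with $\psi(0)=\psi'(0)=0$, so it is CGF-like.

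The goal is the per-step bound
\begin{align*}
\EE\big[e^{\lambda D_t}\mid\cF_{t-1}\big]\le \exp\big(\psi(\lambda)\sigma_t^2\big),\qquad \lambda\in[0,3/b).
\end{align*}
Given this, $M_t(\lambda)=\exp(\lambda S_t-\psi(\lambda)V_t)$ satisfies $\EE[M_t(\lambda)\mid\cF_{t-1}]\le M_{t-1}(\lambda)$, because $\sigma_t^2$ is $\cF_{t-1}$-measurable. Hence $M_t(\lambda)$ is itself a supermartingale, and one can take $L_t(\lambda)=M_t(\lambda)$ in the definition of a sub-$\psi$ process.

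The per-step bound will be proved in three moves.
\begin{enumerate}
\item Introduce $\phi(u):=(e^u-1-u)/u^2$ with $\phi(0)=1/2$. This function is nondecreasing on all of $\RR$. Since $\lambda D_t\le\lambda b$, it gives the pointwise inequality $e^{\lambda D_t}\le 1+\lambda D_t+\lambda^2D_t^2\,\phi(\lambda b)$.
\item Take conditional expectations and use the zero mean to obtain $\EE[e^{\lambda D_t}\mid\cF_{t-1}]\le 1+\lambda^2\sigma_t^2\phi(\lambda b)$.
\item Bound $\phi(x)\le 1/(2(1-x/3))$ for $x\in[0,3)$. This follows from the series $e^x-1-x=\sum_{k\ge2}x^k/k!$ together with $k!\ge 2\cdot 3^{k-2}$, which yields $\sum_{k\ge2}x^k/k!\le \tfrac{x^2}{2}\sum_{j\ge0}(x/3)^j$. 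Finishing with $1+y\le e^y$ gives exactly $\exp(\psi(\lambda)\sigma_t^2)$.
\end{enumerate}

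The main obstacle is step 1, because $D_t$ is only bounded above and may be very negative. The argument therefore hinges on the monotonicity of $\phi$ on the whole real line. This can be checked either through the integral representation $\phi(u)=\int_0^1(1-s)e^{su}\,\mathrm ds$, whose integrand is increasing in $u$ for each $s$, or by differentiating $\phi$ directly. Once this is in place, the remaining steps are routine.
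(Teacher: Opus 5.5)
Your argument is correct and complete. The paper gives no proof of this statement: it imports it as an auxiliary lemma from \citet{bakhtiari2026eluder}, so there is no in-paper argument to compare against.

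What you give is the standard one-sided Bernstein bound on the conditional moment generating function,
\[
\EE[e^{\lambda D_t}\mid\cF_{t-1}]\le\exp(\psi(\lambda)\sigma_t^2)
\quad\text{for } D_t\le b,\ \lambda\in[0,3/b),
\]
chained into the exponential supermartingale $\exp(\lambda S_t-\psi(\lambda)V_t)$. All three of your steps check out:
\begin{itemize}
\item The Taylor-remainder representation $\phi(u)=\int_0^1(1-s)e^{su}\,\mathrm ds$ gives monotonicity of $\phi$ on all of $\RR$. This is exactly what handles the unbounded lower tail of $D_t$.
\item Taking conditional expectations and using the zero conditional mean is routine.
\item The bound $k!\ge 2\cdot 3^{k-2}$ produces the factor $1/(2(1-\lambda b/3))$, and $1+y\le e^y$ finishes.
\end{itemize}
Predictability of $\sigma_t^2$ then makes the one-step bound telescope into the supermartingale property. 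Structurally, this parallels the paper's own proof of Lemma~\ref{lem:central-to-mgf}, with the roles of the two tails swapped. That proof also controls $e^{u}-1-u$ pointwise on a one-sided range, then takes conditional expectations and uses $\log(1+s)\le s$.

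One bookkeeping point: the paper's transcription of the sub-$\psi$ definition subtracts $\sum_{s=1}^t\psi(\lambda)V_s$ rather than $\psi(\lambda)V_t$. Since $V_s\ge0$ and $\psi\ge0$, that literal $M_t(\lambda)$ is dominated pointwise by your $\exp(\lambda S_t-\psi(\lambda)V_t)$. So taking $L_t(\lambda)$ to be your supermartingale satisfies the definition under either reading.
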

\begin{lemma}
[Theorem~10 \citealt{bakhtiari2026eluder}]\label{lem:bakhtiari10}
    For any sub-gamma process $\{(S_t,V_t)\}_{t \in [T]}$ with parameter $v > 0$, and any $\rho > 0$, $\delta \in (0,1)$, with probability at least $1-\delta$, we have for all $t \in [T]$,
    \begin{align*}
        S_t \le 4\sqrt{V_t\log(H_t/\delta)} + 11 (v+\rho) \log(H_t/\delta),
    \end{align*}
    where $H_t = \log(1+V_t/\rho^2) + e$.
\end{lemma}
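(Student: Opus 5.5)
The plan is the standard \emph{stitching} (peeling) argument: a fixed-$\lambda$ exponential supermartingale bound, made uniform in time by Ville's inequality (Lemma~\ref{lem:Ville}), and then uniform over a geometric grid of values of the variance process $V_t$ by a weighted union bound. Throughout, $\psi(\lambda)=\lambda^2/[2(1-v\lambda)]$ on $[0,1/v)$. I assume $L_0(\lambda)$ can be taken with $\EE[L_0(\lambda)]\le 1$, i.e.\ $M_0(\lambda)=1$, as is implicit in the definition of a sub-$\psi$ process.

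\textbf{Step 1 (fixed $\lambda$).} For fixed $\lambda\in[0,1/v)$, $M_t(\lambda)\le L_t(\lambda)$ and $L_t(\lambda)$ is a nonnegative supermartingale. Ville's inequality then gives, for any $\delta'\in(0,1)$, with probability at least $1-\delta'$,
\[
\lambda S_t\le \psi(\lambda)V_t+\log(1/\delta')\qquad\text{simultaneously for all } t\in[T].
\]
Writing $\ell:=\log(1/\delta')$ and using the usual Bernstein choice $\lambda=\sqrt{2\ell/\bar V}\big/\bigl(1+v\sqrt{2\ell/\bar V}\bigr)$, which lies in $[0,1/v)$, one gets $S_t\le c\bigl(\sqrt{\bar V\ell}+v\ell\bigr)$ whenever $V_t\le e\bar V$. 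Here $c$ is an absolute constant, and the factor $e$ slack only enlarges it.

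\textbf{Step 2 (peeling over $V_t$).} I would partition $[0,\infty)$ into the slab $V<\rho^2$ (index $k=0$) and the slabs $[\rho^2e^{k-1},\rho^2e^{k})$ for $k\ge1$. For slab $k$ I apply Step~1 with $\bar V_k=\rho^2e^{k}$ (using $\bar V_0=\rho^2$) and failure probability $\delta_k=\delta/[(k+1)(k+2)]$. Since $\sum_k\delta_k=\delta$, a union bound makes all slab inequalities hold simultaneously with probability at least $1-\delta$. On slab $k\ge1$, $\bar V_k\le eV_t$, which gives $\sqrt{\bar V_k\ell_k}\lesssim\sqrt{V_t\ell_k}$. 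On slab $0$, $\sqrt{\rho^2\ell_0}\le\rho\ell_0+\rho$, and this is absorbed into the $\rho\log(H_t/\delta)$ term because $\log(H_t/\delta)\ge1$. This is why the linear term carries $v+\rho$ rather than $v$ alone.

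\textbf{Step 3 (the iterated-log factor).} For $V_t$ in slab $k$ we have $k\le 1+\log(1+V_t/\rho^2)\le H_t$, where $H_t=\log(1+V_t/\rho^2)+e$. Hence
\[
\ell_k=\log(1/\delta)+\log\bigl[(k+1)(k+2)\bigr]\le\log(1/\delta)+2\log(H_t+1)\le C\log(H_t/\delta),
\]
using $H_t\ge e$. Combining the three steps yields $S_t\le c'\sqrt{V_t\log(H_t/\delta)}+c''(v+\rho)\log(H_t/\delta)$ for every $t$.

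The main obstacle is not the structure but the constants: recovering exactly $4$ and $11$ requires choosing the grid ratio, the weights $\delta_k$, and the exact form of $\lambda_k$ carefully. My first attempt would use ratio $e$, weights $\delta/[(k+1)(k+2)]$, and the $\lambda$ above, then track the slack. If $4$ is not reached, I would refine the grid ratio (for example to $2$) and absorb the extra $\log$ into the $11(v+\rho)$ term.
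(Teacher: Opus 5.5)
The paper contains no proof of this lemma. It is imported from \citet{bakhtiari2026eluder} and used as a black box in the proof of Lemma~\ref{lem:ar-token-confidence}. Your stitching argument is a correct, self-contained proof, and it is the standard route to iterated-logarithm bounds of this shape. Its ingredients are Ville's inequality at each fixed, deterministic $\lambda_k$, a weighted union bound over a geometric grid in $V_t$ with $\sum_k\delta_k=\delta$, and control of the slab index by $H_t$.

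You do not need to refine the grid to reach the stated constants; your first choices already give them. On slab $k\ge1$, write $r:=\bar V_k/V_t\in(1,e]$. Your $\lambda_k$ gives
\[
    S_t\le\sqrt{\ell_kV_t/2}\,\bigl(r^{-1/2}+r^{1/2}\bigr)+v\ell_k\le\sqrt2\cosh(1/2)\sqrt{\ell_kV_t}+v\ell_k .
\]
Moreover, $V_t\ge\rho^2e^{k-1}$ gives $H_t\ge k-1+e$. Hence $(k+1)(k+2)\le(H_t+2-e)(H_t+3-e)<H_t^2$, and so $\ell_k\le\log(1/\delta)+2\log H_t\le2\log(H_t/\delta)$. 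This yields $S_t\le2\cosh(1/2)\sqrt{V_t\log(H_t/\delta)}+2v\log(H_t/\delta)$, with $2\cosh(1/2)\approx2.26<4$.

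On slab $0$ you get $S_t\le\sqrt{2\ell_0}\,\rho+v\ell_0\le(\sqrt2\rho+v)\log(H_t/\delta)$. This uses $\ell_0=\log(2/\delta)\le\log(H_t/\delta)$ and $\log(H_t/\delta)\ge1$. There is a small slip in Step~3: $k\le H_t$ alone gives only $(k+1)(k+2)\le(H_t+1)(H_t+2)$, not $(H_t+1)^2$. This affects only the constants.

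Both conventions you adopt are genuinely necessary, because the definition of a sub-$\psi$ process as transcribed in the paper is defective.
\begin{itemize}
\item Without an initial condition such as $\EE[L_0(\lambda)]\le1$, the lemma is false. Take $S_t\equiv C$, $V_t\equiv0$ and $L_t(\lambda)\equiv e^{\lambda C}$; these satisfy the printed definition for every constant $C$, so $C$ can exceed the claimed bound.
\item The printed exponent $\sum_{s=1}^t\psi(\lambda)V_s$ must be read as $\psi(\lambda)V_t$, as you do, since $V_t$ is already cumulative in Lemma~\ref{lem:bakhtiari11}. Otherwise your Step~1 controls $S_t$ only in terms of $\sum_{s\le t}V_s$.
\end{itemize}
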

\subsection{Other Auxiliary Lemmas}
\begin{lemma}[Lemma B.3, \citealt{foster2024behavior}]\label{lem:foster}
    Let $\{\bX_t\}_{t \in [T]}$ be a sequence of non-negative random variables adapted to a filtration $\{\cF_t\}_{t \in [T]}$. Assume that $0 \le \bX_t \le R$ almost surely for all $t \in [T]$ and some constant $R > 0$. Then, for any $\delta \in (0,1)$ and all $T' \le T$, with probability at least $1-\delta$, we have
    \begin{align*}
        \sum_{t=1}^{T'} \EE[\bX_t | \cF_{t-1}] &\le 2\sum_{t=1}^{T'} \bX_t + 8R\log(2/\delta),\\
        \sum_{t=1}^{T'} \bX_t &\le \frac{3}{2}\sum_{t=1}^{T'} \EE[\bX_t | \cF_{t-1}] + 4R\log(2/\delta).
    \end{align*}
    
\end{lemma}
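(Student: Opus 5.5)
This is Lemma B.3 of \citet{foster2024behavior}, a Freedman-type bound for bounded nonnegative adapted sequences. I would prove it directly with an exponential supermartingale and Ville's inequality (Lemma~\ref{lem:Ville}).

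\textbf{Normalization.} First rescale to $Y_t:=\bX_t/R\in[0,1]$, so it suffices to prove both bounds with $R=1$ and then multiply through by $R$.

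\textbf{First inequality.} Write $\mu_t:=\EE[Y_t\mid\cF_{t-1}]$ and use the pointwise bound $e^{-y}\le 1-(1-e^{-1})y$ for $y\in[0,1]$, which holds by convexity of $y\mapsto e^{-y}$. Taking conditional expectations gives
\begin{align*}
    \EE[e^{-Y_t}\mid\cF_{t-1}]\le 1-(1-e^{-1})\mu_t\le \exp\bigl(-(1-e^{-1})\mu_t\bigr).
\end{align*}
Hence $\exp\bigl(\sum_{s\le t}[(1-e^{-1})\mu_s-Y_s]\bigr)$ is a nonnegative supermartingale with initial value $1$. By Ville's inequality, with probability at least $1-\delta/2$, simultaneously for all $T'\le T$,
\begin{align*}
    (1-e^{-1})\sum_{t\le T'}\mu_t\le \sum_{t\le T'}Y_t+\log(2/\delta).
\end{align*}
Since $1/(1-e^{-1})<2$, this yields the first inequality, with the constant $8$ being generous.

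\textbf{Second inequality.} Use the same approach with the pointwise bound $e^{\lambda y}\le 1+(e^{\lambda}-1)y$ on $[0,1]$. This gives
\begin{align*}
    \EE[e^{\lambda Y_t}\mid\cF_{t-1}]\le\exp\bigl((e^{\lambda}-1)\mu_t\bigr).
\end{align*}
Take $\lambda$ with $(e^\lambda-1)/\lambda\le 3/2$; for example $\lambda=1/2$ gives $e^{1/2}-1\approx0.649\le 0.75$. The resulting supermartingale, together with Ville's inequality, gives with probability at least $1-\delta/2$
\begin{align*}
    \sum_{t\le T'}Y_t\le\tfrac32\sum_{t\le T'}\mu_t+2\log(2/\delta)\le \tfrac32\sum_{t\le T'}\mu_t+4\log(2/\delta).
\end{align*}
A union bound over the two events finishes the proof.

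\textbf{Main subtlety.} The only delicate point is indexing. In the statement the conditional expectation is taken with respect to $\cF_{t-1}$, with $\bX_t$ being $\cF_t$-measurable. In the paper's applications the roles are shifted: $X_t$ is $\cF_{t+1}$-measurable and conditioned on $\cF_t$. So I would state the supermartingale argument for a generic predictable conditioning $\sigma$-algebra, so that it covers both conventions. Beyond this, no step is a real obstacle; the argument is the standard one, and the paper may simply cite the lemma.
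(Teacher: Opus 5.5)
Your proof is correct. The paper gives no proof of Lemma~\ref{lem:foster}; it imports the lemma from \citet{foster2024behavior}. The standard proof of this bound in that line of work applies a Freedman-type inequality to the increments $\bX_t-\EE[\bX_t\mid\cF_{t-1}]$. It then uses the variance bound $\EE[\bX_t^2\mid\cF_{t-1}]\le R\,\EE[\bX_t\mid\cF_{t-1}]$ and chooses the free parameter $\eta\asymp 1/R$ to absorb the variance term.

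Your route differs in that it skips the variance step entirely. The chord bounds $e^{-y}\le 1-(1-e^{-1})y$ and $e^{\lambda y}\le 1+(e^{\lambda}-1)y$ on $[0,1]$ use nonnegativity and boundedness directly, giving a one-sided exponential supermartingale in each direction. This buys three things:
\begin{enumerate}
\item It is more elementary.
\item It gives sharper constants. In the first bound the factor is $e/(e-1)\approx 1.58$ and the additive term is about $1.58R\log(2/\delta)$, against $2$ and $8R\log(2/\delta)$. In the second bound the factor is about $1.30$ and the additive term is $2R\log(2/\delta)$, against $3/2$ and $4R\log(2/\delta)$.
\item Because it goes through Ville's inequality (Lemma~\ref{lem:Ville}), it gives both bounds simultaneously for all $T'\le T$ on a single event. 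This time-uniform form is exactly what the paper uses in the proof of Lemma~\ref{lem:ar-token-confidence}, where the bound is needed simultaneously for all $t\in[T]$.
\end{enumerate}
The Freedman route is more general, since it handles signed increments with a variance proxy. For $[0,R]$-valued sequences, though, your argument is the cleaner one. Your remark about the shifted filtration indexing in the paper's applications ($\bX_t$ being $\cF_{t+1}$-measurable and conditioned on $\cF_t$) is correct, and the shift is harmless.
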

\begin{lemma}
\label{lem:KL}
    Let $\mu_f(\cdot) \propto \exp(f(\cdot))$, $\mu_g (\cdot)\propto \exp(g(\cdot))$ be two probability distributions. Then,
    \begin{align*}
        \KL(\mu_f\|\mu_g) = \EE_{\mu_f} [f-g] + \log \big[\EE_{\mu_f}[\exp(g-f)]\big].
    \end{align*}
\end{lemma}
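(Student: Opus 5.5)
The statement is Lemma~\ref{lem:KL}: for $\mu_f\propto e^{f}$ and $\mu_g\propto e^{g}$ on a common finite set,
\[
\KL(\mu_f\|\mu_g)=\EE_{\mu_f}[f-g]+\log\EE_{\mu_f}[e^{g-f}].
\]
The plan is a direct computation with the normalizing constants. Write
\[
\mu_f=e^{f}/Z_f,\qquad Z_f:=\sum_z e^{f(z)},
\]
and define $\mu_g=e^{g}/Z_g$ and $Z_g$ likewise. Both constants are finite and positive because the set is finite and $f,g$ are real-valued.

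\textbf{Step 1: the log-ratio.} Pointwise,
\[
\log\frac{\mu_f(z)}{\mu_g(z)}=f(z)-g(z)+\log\frac{Z_g}{Z_f}.
\]
Taking the expectation under $\mu_f$ gives
\[
\KL(\mu_f\|\mu_g)=\EE_{\mu_f}[f-g]+\log(Z_g/Z_f).
\]
This step is valid because $\mu_f$ has full support and the last term is a constant.

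\textbf{Step 2: the normalizer ratio as an expectation.} Expanding,
\[
\EE_{\mu_f}[e^{g-f}]=\sum_z \frac{e^{f(z)}}{Z_f}\,e^{g(z)-f(z)}=\frac{Z_g}{Z_f}.
\]
Substituting this into Step~1 yields the claimed identity.

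This is the same cancellation already used in \eqref{eq:rkl-proof-normalizer-ratio} and \eqref{eq:fa-rkl-normal-const}, where $\hat\pi_t$ and $\pir^*$ are Gibbs distributions over complete responses.

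There is no real obstacle here. The only point needing care is measure-theoretic, and it disappears in the finite setting. For a general space one would need $Z_f,Z_g<\infty$ and $\EE_{\mu_f}|f-g|<\infty$. When the expectation is infinite, the identity should be read in the extended-real sense, as in the paper's conventions.
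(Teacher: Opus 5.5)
Your proof is correct and follows essentially the same route as the paper. The paper also writes both distributions with the normalizers $Z_f, Z_g$, reduces the KL to $\EE_{\mu_f}[f-g]+\log(Z_g/Z_f)$, and then identifies $Z_g/Z_f=\EE_{\mu_f}[\exp(g-f)]$ by the same cancellation.
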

\begin{proof}
    Let $Z_f = \sum_x \exp(f(x))$, $Z_g = \sum_x \exp(g(x))$ be the normalization constants. Then,
    \begin{align*}
        \mu_f(x) = \frac{\exp(f(x))}{Z_f}, \ \mu_g(x) = \frac{\exp(g(x))}{Z_g}.
    \end{align*}
    The KL-divergence is equal to 
    \begin{align*}
        \KL(\mu_f\|\mu_g) &= \sum_x \mu_f(x) \log \frac{\mu_f(x)}{\mu_g(x)}\\
        &= \sum_x \mu_f(x) \big[f(x)-g(x) + \log(Z_g/Z_f)\big]\\
        &= \EE_{\mu_f}[f-g] + \log(Z_g/Z_f).
    \end{align*}
    Moreover,
    \begin{align*}
        Z_g/Z_f &= \frac{\sum_x \exp(g(x))}{\sum_x \exp(f(x))}\\
        &= \sum_x \frac{\exp(f(x))}{\sum_y \exp(f(y))} \cdot \exp(g(x)-f(x))\\
        &= \EE_{\mu_f}[\exp(g-f)].
    \end{align*}
    Thus, we complete the proof.
\end{proof}
\begin{lemma}[Stirling's inequality]
\label{lemma:stirling}
Let $\Gamma(\cdot)$ be the gamma function. For any $s>0$, the following inequality holds
\begin{align}
    \Big(s-\frac12\Big)\log s-s+\frac12\log(2\pi)
    &\leq
    \log\Gamma(s)
    \notag\\
    &\leq
    \Big(s-\frac12\Big)\log s-s+\frac12\log(2\pi)
    +\frac{1}{12s}.
\notag
\end{align}
\end{lemma}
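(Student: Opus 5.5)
Since Stirling's inequality (Lemma~\ref{lemma:stirling}) is the last result stated, I will prove it directly as a classical fact about $\log\Gamma$. It is two-sided, and both sides follow from Binet's first formula
\begin{align*}
    \log\Gamma(s)=\Big(s-\frac12\Big)\log s-s+\frac12\log(2\pi)+\mu(s),
    \qquad
    \mu(s):=\int_0^\infty\Big(\frac{1}{e^t-1}-\frac1t+\frac12\Big)\frac{e^{-st}}{t}\,\mathrm dt ,
\end{align*}
valid for all $s>0$. Given this representation, the claim reduces to showing $0\le\mu(s)\le 1/(12s)$.

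The plan has three steps. First, I establish Binet's formula. Define $\mu(s)$ as the difference between $\log\Gamma(s)$ and the Stirling main term. Using $\log\Gamma(s+1)=\log\Gamma(s)+\log s$, this gives $\mu(s)-\mu(s+1)=(s+\tfrac12)\log(1+1/s)-1$. Gauss's multiplication formula (or Stirling's asymptotic, with constant $\tfrac12\log 2\pi$ fixed via Wallis's product) shows $\mu(s)\to0$ as $s\to\infty$, so $\mu(s)=\sum_{k\ge0}[\mu(s+k)-\mu(s+k+1)]$. Second, I identify this series with the integral. The Frullani-type representations $\log(1+1/s)=\int_0^\infty (e^{-st}-e^{-(s+1)t})t^{-1}\,\mathrm dt$ and $1=\int_0^\infty e^{-st}\cdot(\ldots)$ allow the terms to be summed as a geometric series in $e^{-t}$. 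This reproduces the integrand $\big(\frac{1}{e^t-1}-\frac1t+\frac12\big)\frac{e^{-st}}{t}$.

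Third, I obtain the bounds from elementary properties of the kernel $\phi(t):=\frac{1}{e^t-1}-\frac1t+\frac12$. It satisfies $0\le\phi(t)\le t/12$ for $t>0$. The left inequality holds because $\phi(t)=\frac{1}{t}\big(\frac t2\coth\frac t2-1\big)$ and $x\coth x\ge1$. The right inequality follows from $x\coth x\le 1+x^2/3$, which comes from comparing the power series $x\coth x=1+\sum_{k\ge1}\frac{2^{2k}B_{2k}}{(2k)!}x^{2k}$ with its alternating structure, or more simply from the partial fraction expansion $x\coth x=1+\sum_{n\ge1}\frac{2x^2}{x^2+n^2\pi^2}\le 1+\sum_n \frac{2x^2}{n^2\pi^2}=1+x^2/3$. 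Then
\[
0\le\mu(s)\le\int_0^\infty \frac{t}{12}\cdot\frac{e^{-st}}{t}\,\mathrm dt=\frac{1}{12s},
\]
which gives exactly the two inequalities in Lemma~\ref{lemma:stirling}.

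The main obstacle is the first step, specifically pinning down the constant $\tfrac12\log(2\pi)$ (equivalently, $\mu(s)\to0$). Rather than re-derive it, I would quote Wallis's product or the standard Stirling limit $\Gamma(s)\sim\sqrt{2\pi}\,s^{s-1/2}e^{-s}$. Alternatively, I would cite Binet's formula outright as a textbook result, since all uses in the paper, such as Lemma~\ref{lem:statewise-kt}, only need the stated bounds.
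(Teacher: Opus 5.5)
The paper gives no proof of this lemma. It is listed among the auxiliary lemmas and quoted as a classical fact, used only with $s>0$ in the proof of Lemma~\ref{lem:statewise-kt}. Your proposal is a correct, self-contained proof by a standard route. Binet's first formula together with $0\le\phi(t)\le t/12$ gives exactly $0\le\mu(s)\le 1/(12s)$. The kernel bounds follow from $1\le x\coth x\le 1+x^2/3$ via the partial-fraction expansion, which is the right choice: unlike the Bernoulli power series, whose radius of convergence is $\pi$, it is valid for all $x>0$.

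The loosest part is Step~2. The factor $s+\tfrac12$ in front of $\log(1+1/s)$ must be absorbed by an integration by parts in $t$; that is what your ``$1=\int_0^\infty e^{-st}\cdot(\ldots)$'' has to encode. The exchange of $\sum_k$ and $\int_0^\infty$ then needs Tonelli, which applies because the $k$-th integrand equals $\phi(t)(1-e^{-t})e^{-(s+k)t}/t\ge0$.

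You can skip the integral representation entirely. With $x:=1/(2s+1)\in(0,1)$, the difference you already computed in Step~1 is
\[
\mu(s)-\mu(s+1)=\frac{1}{2x}\log\frac{1+x}{1-x}-1=\sum_{k\ge1}\frac{x^{2k}}{2k+1}\in\Big[0,\ \frac{x^2}{3(1-x^2)}\Big]=\Big[0,\ \frac1{12}\Big(\frac1s-\frac1{s+1}\Big)\Big].
\]
Telescoping over $s,s+1,s+2,\ldots$ then gives both inequalities at once, with no Frullani integrals and no interchange argument.

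Both routes rest on the same nontrivial input, namely $\mu(s+K)\to0$ as $K\to\infty$. The recursion alone fixes $\mu$ only up to a $1$-periodic function. Gauss's multiplication formula evaluates the constant $\tfrac12\log(2\pi)$ once such a limit is known to exist; it does not by itself show that the limit exists. Quoting the real-variable Stirling limit $\Gamma(s)\sim\sqrt{2\pi}\,s^{s-1/2}e^{-s}$, as you propose, closes this cleanly. This is not circular, since the lemma is a strictly stronger quantitative statement.
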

\begin{lemma}[Ville's inequality]
\label{lem:Ville}
Let $\{X_t\}_{t=0}^T$ be a non-negative supermartingale. Then, for any $v > 0$, 
\begin{align*}
    \PP\Big[\max_{0\le t \le T} X_t \ge v\Big] \le \frac{\EE[X_0]}{v}.
\end{align*}
\end{lemma}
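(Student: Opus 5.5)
The plan is a stopping-time argument combined with optional stopping for bounded stopping times. Everything can be done by hand because the horizon is finite. Let $\{\mathcal G_t\}_{t=0}^T$ be the filtration with respect to which $\{X_t\}$ is a supermartingale; in the paper's applications this is the shifted filtration $\{\cF_{t+1}\}$. Define the stopping time
\[
    \tau:=\min\{t\in\{0,\ldots,T\}: X_t\geq v\},
\]
with the convention $\tau:=T$ if no such $t$ exists. Then $\{\tau\le t\}$ is determined by $X_0,\ldots,X_t$, so it is $\mathcal G_t$-measurable, and $\tau\le T$ always.

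\textbf{Step 1 (stopped process).} I will show that $t\mapsto\EE[X_{\tau\wedge t}]$ is nonincreasing on $\{0,\ldots,T\}$. The increment of the stopped process is
\[
    X_{\tau\wedge(t+1)}-X_{\tau\wedge t}=\ind(\tau>t)\,(X_{t+1}-X_t).
\]
The event $\{\tau>t\}$ is $\mathcal G_t$-measurable, so it can be pulled out of the conditional expectation:
\[
    \EE\bigl[\ind(\tau>t)(X_{t+1}-X_t)\mid\mathcal G_t\bigr]=\ind(\tau>t)\bigl(\EE[X_{t+1}\mid\mathcal G_t]-X_t\bigr)\le 0,
\]
where the inequality is the supermartingale property. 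Integrability is not an issue: $X_t\geq0$, and the supermartingale property gives $\EE[X_t]\le\EE[X_0]$. If $\EE[X_0]=\infty$ the claim is trivial anyway. Taking expectations and iterating from $t=0$ to $t=T-1$ gives $\EE[X_\tau]=\EE[X_{\tau\wedge T}]\le\EE[X_0]$.

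\textbf{Step 2 (Markov-type bound).} On the event $E:=\{\max_{0\le t\le T}X_t\ge v\}$, the definition of $\tau$ gives $X_\tau\ge v$. Off $E$, we still have $X_\tau\ge0$ by nonnegativity. Hence $X_\tau\ge v\,\ind(E)$ pointwise. Taking expectations and using Step 1 yields
\[
    v\,\PP(E)\le\EE[X_\tau]\le\EE[X_0],
\]
which is the claim after dividing by $v>0$.

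The only point needing care is the measurability of $\{\tau>t\}$ with respect to the filtration indexing the supermartingale. In the paper's uses, $W_t$ and $M_t(f)$ are supermartingales with respect to $\{\cF_{t+1}\}$, so $\mathcal G_t$ must be read as $\cF_{t+1}$ there. Once this indexing is fixed, no step presents a genuine obstacle, since finiteness of $T$ removes any need for uniform integrability.
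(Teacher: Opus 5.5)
Your argument is correct. You stop at the first time $X_t\ge v$ (truncated at $T$), show $\EE[X_\tau]\le\EE[X_0]$ from the nonpositive conditional increments $\ind(\tau>t)(X_{t+1}-X_t)$ of the stopped process, and then use $X_\tau\ge v\,\ind(E)$; this is the standard maximal-inequality proof for nonnegative supermartingales. The paper gives no proof of its own and only cites Durrett, where this same stopping-time argument with optional stopping for bounded times is used, so your proposal follows essentially the same route.
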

\begin{proof}
    Please refer to \citet{durrett2019probability}.
\end{proof}
% \begin{lemma}\label{lemma:stirling}
% Let $\Gamma(x)$ denote the Gamma function. Then $\log\Gamma(x)$ satisfies
% \begin{align*}
% S(x)<\log\Gamma(x)<S(x)+\frac{1}{12x},
% \end{align*}
% where
% \begin{align*}
% S(x)=(x-1/2)\log x-x+\frac12\log(2\pi).
% \end{align*}
% \end{lemma}
\bibliography{reference}

@article{bakhtiari2026eluder,
  title={Eluder dimension: localise it!},
  author={Bakhtiari, Alireza and Ayoub, Alex and Robertson, Samuel and Janz, David and Szepesv{\'a}ri, Csaba},
  journal={arXiv preprint arXiv:2601.09825},
  year={2026}
}

@article{foster2024behavior,
  title={Is behavior cloning all you need? understanding horizon in imitation learning},
  author={Foster, Dylan J and Block, Adam and Misra, Dipendra},
  journal={Advances in Neural Information Processing Systems},
  volume={37},
  pages={120602--120666},
  year={2024}
}

@article{shenfeld2026self,
  title={Self-distillation enables continual learning},
  author={Shenfeld, Idan and Damani, Mehul and H{\"u}botter, Jonas and Agrawal, Pulkit},
  journal={arXiv preprint arXiv:2601.19897},
  year={2026}
}

@article{ma2026one,
  title={One Student, Many Teachers: Multi-Task On-Policy Distillation via Soft-Prompt Privileged Context},
  author={Ma, Yingzi and Zhu, Zichen and Jiang, Ming and Xiao, Chaowei},
  journal={arXiv preprint arXiv:2607.18293},
  year={2026}
}

@article{zhong2026vla,
  title={Vla-opd: Bridging offline sft and online rl for vision-language-action models via on-policy distillation},
  author={Zhong, Zhide and Yan, Haodong and Li, Junfeng and He, Junjie and Zhang, Tianran and Li, Haoang},
  journal={arXiv preprint arXiv:2603.26666},
  year={2026}
}

@article{hubotter2026reinforcement,
  title={Reinforcement learning via self-distillation},
  author={H{\"u}botter, Jonas and L{\"u}beck, Frederike and Behric, Lejs and Baumann, Anton and Bagatella, Marco and Marta, Daniel and Hakimi, Ido and Shenfeld, Idan and Buening, Thomas Kleine and Guestrin, Carlos and others},
  journal={arXiv preprint arXiv:2601.20802},
  year={2026}
}

@article{liu2026self,
  title={Self-distilled policy gradient},
  author={Liu, Yifeng and Zhang, Shiyuan and Zhang, Yifan and Gu, Quanquan},
  journal={arXiv preprint arXiv:2606.04036},
  year={2026}
}

@article{ma2026mopd,
  title={Mopd: Multi-teacher on-policy distillation for capability integration in llm post-training},
  author={Ma, Wenhan and Wei, Jianyu and Zhao, Liang and Zhang, Hailin and Xiao, Bangjun and Li, Lei and Yang, Qibin and Gao, Bofei and Wang, Yudong and Li, Rang and others},
  journal={arXiv preprint arXiv:2606.30406},
  year={2026}
}

@article{wang2026demystifying,
  title={Demystifying On-Policy Distillation: Roles, Pathologies, and Regulations},
  author={Wang, Rui and Wang, Hongru and Chen, Yi and Xue, Boyang and Fang, Tianqing and Yu, Wenhao and Wong, Kam-Fai},
  journal={arXiv preprint arXiv:2607.13399},
  year={2026}
}

@article{li2026rethinking,
  title={Rethinking on-policy distillation of large language models: Phenomenology, mechanism, and recipe},
  author={Li, Yaxuan and Zuo, Yuxin and He, Bingxiang and Zhang, Jinqian and Xiao, Chaojun and Qian, Cheng and Yu, Tianyu and Gao, Huan-ang and Yang, Wenkai and Liu, Zhiyuan and others},
  journal={arXiv preprint arXiv:2604.13016},
  year={2026}
}

@article{ziheng2026less,
  title={Less is more: Early stopping rollout for on-policy distillation},
  author={Ziheng, Zhou and Li, Jiaqi and Tang, Huacong and Wu, Ying Nian and Terzopoulos, Demetri},
  journal={arXiv preprint arXiv:2605.27028},
  year={2026}
}

@article{krichevsky1981performance,
  title={The performance of universal encoding},
  author={Krichevsky, Raphail and Trofimov, Victor},
  journal={IEEE Transactions on Information Theory},
  volume={27},
  number={2},
  pages={199--207},
  year={1981},
  publisher={IEEE}
}

@book{durrett2019probability,
  title={Probability: theory and examples},
  author={Durrett, Rick},
  volume={49},
  year={2019},
  publisher={Cambridge university press}
}

@inproceedings{agarwal2024policy,
  title={On-policy distillation of language models: Learning from self-generated mistakes},
  author={Agarwal, Rishabh and Vieillard, Nino and Zhou, Yongchao and Stanczyk, Piotr and Ramos Garea, Sabela and Geist, Matthieu and Bachem, Olivier},
  booktitle={International Conference on Learning Representations},
  volume={2024},
  pages={21246--21263},
  year={2024}
}

@article{lu2025onpolicydistillation,
  author = {Kevin Lu and Thinking Machines Lab},
  title = {On-Policy Distillation},
  journal = {Thinking Machines Lab: Connectionism},
  year = {2025},
  note = {https://thinkingmachines.ai/blog/on-policy-distillation},
  doi = {10.64434/tml.20251026},
}

@inproceedings{shenfeld2026rl,
  title={Rl's razor: Why online reinforcement learning forgets less},
  author={Shenfeld, Idan and Pari, Jyothish and Agrawal, Pulkit},
  booktitle={International Conference on Learning Representations},
  volume={2026},
  pages={59839--59864},
  year={2026}
}

@article{chen2025retaining,
  title={Retaining by doing: The role of on-policy data in mitigating forgetting},
  author={Chen, Howard and Razin, Noam and Narasimhan, Karthik and Chen, Danqi},
  journal={arXiv preprint arXiv:2510.18874},
  year={2025}
}

@article{jin2025rl,
  title={Rl fine-tuning heals ood forgetting in sft},
  author={Jin, Hangzhan and Luan, Sitao and Ni, Tianwei and Lyu, Sicheng and Rabusseau, Guillaume and Rabbany, Reihaneh and Precup, Doina and Hamdaqa, Mohammad},
  journal={arXiv preprint arXiv:2509.12235},
  year={2025}
}

@article{yang2025qwen3,
  title={Qwen3 technical report},
  author={Yang, An and Li, Anfeng and Yang, Baosong and Zhang, Beichen and Hui, Binyuan and Zheng, Bo and Yu, Bowen and Gao, Chang and Huang, Chengen and Lv, Chenxu and others},
  journal={arXiv preprint arXiv:2505.09388},
  year={2025}
}

@article{xiao2026mimo,
  title={Mimo-v2-flash technical report},
  author={Xiao, Bangjun and Xia, Bingquan and Yang, Bo and Gao, Bofei and Shen, Bowen and Zhang, Chen and He, Chenhong and Lou, Chiheng and Luo, Fuli and Wang, Gang and others},
  journal={arXiv preprint arXiv:2601.02780},
  year={2026}
}

@article{zeng2026glm,
  title={Glm-5: from vibe coding to agentic engineering},
  author={Zeng, Aohan and Lv, Xin and Hou, Zhenyu and Du, Zhengxiao and Zheng, Qinkai and Chen, Bin and Yin, Da and Ge, Chendi and Huang, Chenghua and Xie, Chengxing and others},
  journal={arXiv preprint arXiv:2602.15763},
  year={2026}
}

@article{sriraman2026behavior,
  title={Behavior Cloning is Not All You Need: The Optimality of On-Policy Distillation for Noisy Expert Feedback},
  author={Sriraman, Ved and Liu, Peihan and Hsu, Daniel and Block, Adam},
  journal={arXiv preprint arXiv:2606.30923},
  year={2026}
}

@article{viano2026interaction,
  title={When Does On-Policy Interaction Help? Representational Tradeoffs in Value-Based Imitation Learning},
  author={Viano, Luca and Moulin, Antoine and Huang, Audrey and Cevher, Volkan and Amortila, Philip and Foster, Dylan J.},
  journal={arXiv preprint arXiv:2607.29617},
  year={2026}
}

@inproceedings{gu2024minillm,
  title={Minillm: Knowledge distillation of large language models},
  author={Gu, Yuxian and Dong, Li and Wei, Furu and Huang, Minlie},
  booktitle={International Conference on Learning Representations},
  volume={2024},
  pages={32694--32717},
  year={2024}
}

@article{ko2024distillm,
  title={Distillm: Towards streamlined distillation for large language models},
  author={Ko, Jongwoo and Kim, Sungnyun and Chen, Tianyi and Yun, Se-Young},
  journal={arXiv preprint arXiv:2402.03898},
  year={2024}
}

@article{russo2013eluder,
  title={Eluder dimension and the sample complexity of optimistic exploration},
  author={Russo, Daniel and Van Roy, Benjamin},
  journal={Advances in Neural Information Processing Systems},
  volume={26},
  year={2013}
}

@article{zhu2026hybrid,
  title={Hybrid policy distillation for llms},
  author={Zhu, Wenhong and Xie, Ruobing and Wang, Rui and Liu, Pengfei},
  journal={arXiv preprint arXiv:2604.20244},
  year={2026}
}

@article{jin2026entropy,
  title={Entropy-aware on-policy distillation of language models},
  author={Jin, Woogyeol and Min, Taywon and Yang, Yongjin and Wei, Dennis and Zhou, Yi and Kadhe, Swanand Ravindra and Baracaldo, Nathalie and Lee, Kimin},
  journal={arXiv preprint arXiv:2603.07079},
  year={2026}
}

@inproceedings{jang2026stable,
  title={Stable on-policy distillation through adaptive target reformulation},
  author={Jang, Ijun and Yeom, Jewon and Yeo, Juan and Lim, Hyunggyu and Kim, Taesup},
  booktitle={Findings of the Association for Computational Linguistics: ACL 2026},
  pages={42217--42227},
  year={2026}
}

@article{xie2026trust,
  title={Trust Region Policy Distillation},
  author={Xie, Zhengpeng and Zhang, Li Lyna and Xie, Zeke and Yang, Mao},
  journal={arXiv preprint arXiv:2607.04751},
  year={2026}
}

@article{xing2026trust,
  title={Trust Region On-Policy Distillation},
  author={Xing, Xingrun and Wang, Haoqing and Gao, Boyan and Li, Ziheng and Tang, Yehui},
  journal={arXiv preprint arXiv:2606.01249},
  year={2026}
}

@article{luo2026demystifying,
  title={Demystifying opd: Length inflation and stabilization strategies for large language models},
  author={Luo, Feng and Chuang, Yu-Neng and Wang, Guanchu and Xu, Zicheng and Han, Xiaotian and Zhang, Tianyi and Braverman, Vladimir},
  journal={arXiv preprint arXiv:2604.08527},
  year={2026}
}

@article{oh2026kl,
  title={KL for a KL: On-Policy Distillation with Control Variate Baseline},
  author={Oh, Minjae and Song, Sangjun and Choi, Gyubin and Choi, Yunho and Jo, Yohan},
  journal={arXiv preprint arXiv:2605.07865},
  year={2026}
}

@article{zhao2026poweropd,
  title={PowerOPD: Stabilizing On-Policy Distillation with Bounded Power Transformation},
  author={Zhao, Anhao and Tong, Junlong and Fan, Yingqi and Nie, Ping and Li, Wenjie and Shen, Xiaoyu},
  journal={arXiv preprint arXiv:2606.17199},
  year={2026}
}

@article{zheng2026scope,
  title={Scope: Signal-calibrated on-policy distillation enhancement with dual-path adaptive weighting},
  author={Zheng, Binbin and Ma, Xing and Liang, Yiheng and Ruan, Jingqing and Fu, Xiaoliang and Lin, Kepeng and Zhu, Benchang and Zeng, Ke and Cai, Xunliang},
  journal={arXiv preprint arXiv:2604.10688},
  year={2026}
}

@article{hou2026uni,
  title={Uni-opd: Unifying on-policy distillation with a dual-perspective recipe},
  author={Hou, Wenjin and Peng, Shangpin and Wang, Weinong and Ruan, Zheng and Zhang, Yue and Zhou, Zhenglin and Gao, Mingqi and Chen, Yifei and Wang, Kaiqi and Yang, Hongming and others},
  journal={arXiv preprint arXiv:2605.03677},
  year={2026}
}

@article{xin2026escaping,
  title={Escaping the KL Agreement Trap in On-Policy Distillation},
  author={Xin, Haoran and Zhao, Anhao and Sun, Ying and Li, Jin and Shen, Xiaoyu and Xiong, Hui},
  journal={arXiv preprint arXiv:2606.09471},
  year={2026}
}

@article{zhao2026self,
  title={Self-distilled reasoner: On-policy self-distillation for large language models},
  author={Zhao, Siyan and Xie, Zhihui and Liu, Mengchen and Huang, Jing and Pang, Guan and Chen, Feiyu and Grover, Aditya},
  journal={arXiv preprint arXiv:2601.18734},
  year={2026}
}

@article{ye2026policy,
  title={On-policy context distillation for language models},
  author={Ye, Tianzhu and Dong, Li and Wu, Xun and Huang, Shaohan and Wei, Furu},
  journal={arXiv preprint arXiv:2602.12275},
  year={2026}
}

@inproceedings{xu2025speculative,
  title={Speculative knowledge distillation: Bridging the teacher-student gap through interleaved sampling},
  author={Xu, Wenda and Han, Rujun and Wang, Zifeng and Le, Long and Madeka, Dhruv and Li, Lei and Wang, William and Agarwal, Rishabh and Lee, Chen-Yu and Pfister, Tomas},
  booktitle={International Conference on Learning Representations},
  volume={2025},
  pages={64616--64646},
  year={2025}
}

@inproceedings{zhang2026fast,
  title={Fast and effective on-policy distillation from reasoning prefixes},
  author={Zhang, Dongxu and Yang, Zhichao and Janghorbani, Sepehr and Han, Jun and Ressler II, Andrew and Qian, Qian and Lyng, Gregory D and Batra, Sanjit Singh and Tillman, Robert E},
  booktitle={Findings of the Association for Computational Linguistics: ACL 2026},
  pages={25553--25569},
  year={2026}
}

@article{fu2026revisiting,
  title={Revisiting on-policy distillation: Empirical failure modes and simple fixes},
  author={Fu, Yuqian and Huang, Haohuan and Jiang, Kaiwen and Liu, Jiacai and Jiang, Zhuo and Zhu, Yuanheng and Zhao, Dongbin},
  journal={arXiv preprint arXiv:2603.25562},
  year={2026}
}

@article{ko2025distillm,
  title={Distillm-2: A contrastive approach boosts the distillation of llms},
  author={Ko, Jongwoo and Chen, Tianyi and Kim, Sungnyun and Ding, Tianyu and Liang, Luming and Zharkov, Ilya and Yun, Se-Young},
  journal={arXiv preprint arXiv:2503.07067},
  year={2025}
}

@article{song2026survey,
  title={A survey of on-policy distillation for large language models},
  author={Song, Mingyang and Zheng, Mao},
  journal={arXiv preprint arXiv:2604.00626},
  year={2026}
}

@article{syed2010reduction,
  title={A reduction from apprenticeship learning to classification},
  author={Syed, Umar and Schapire, Robert},
  journal={Advances in neural information processing systems},
  volume={23},
  year={2010}
}

@inproceedings{ross2010efficient,
  title={Efficient reductions for imitation learning},
  author={Ross, St{\'e}phane and Bagnell, Drew},
  booktitle={Proceedings of the thirteenth international conference on artificial intelligence and statistics},
  pages={661--668},
  year={2010},
  organization={JMLR Workshop and Conference Proceedings}
}

@inproceedings{ross2011reduction,
  title={A reduction of imitation learning and structured prediction to no-regret online learning},
  author={Ross, St{\'e}phane and Gordon, Geoffrey and Bagnell, Drew},
  booktitle={Proceedings of the fourteenth international conference on artificial intelligence and statistics},
  pages={627--635},
  year={2011},
  organization={JMLR Workshop and Conference Proceedings}
}

@article{rajaraman2020toward,
  title={Toward the fundamental limits of imitation learning},
  author={Rajaraman, Nived and Yang, Lin and Jiao, Jiantao and Ramchandran, Kannan},
  journal={Advances in Neural Information Processing Systems},
  volume={33},
  pages={2914--2924},
  year={2020}
}

@article{rajaraman2021value,
  title={On the value of interaction and function approximation in imitation learning},
  author={Rajaraman, Nived and Han, Yanjun and Yang, Lin and Liu, Jingbo and Jiao, Jiantao and Ramchandran, Kannan},
  journal={Advances in Neural Information Processing Systems},
  volume={34},
  pages={1325--1336},
  year={2021}
}

@article{zhao2026sharp,
  title={Sharp analysis for kl-regularized contextual bandits and rlhf},
  author={Zhao, Heyang and Ye, Chenlu and Gu, Quanquan and Zhang, Tong},
  journal={Advances in Neural Information Processing Systems},
  volume={38},
  pages={107964--108002},
  year={2026}
}

@article{zhao2025logarithmic,
  title={Logarithmic regret for online KL-regularized reinforcement learning},
  author={Zhao, Heyang and Ye, Chenlu and Xiong, Wei and Gu, Quanquan and Zhang, Tong},
  journal={arXiv preprint arXiv:2502.07460},
  year={2025}
}

@article{ji2026near,
  title={Near-optimal regret for kl-regularized multi-armed bandits},
  author={Ji, Kaixuan and Zhao, Qingyue and Zhao, Heyang and Di, Qiwei and Gu, Quanquan},
  journal={arXiv preprint arXiv:2603.02155},
  year={2026}
}

@inproceedings{zhao2026towards,
  title={Towards a sharp analysis of offline policy learning for $ f $-divergence-regularized contextual bandits},
  author={Zhao, Qingyue and Ji, Kaixuan and Zhao, Heyang and Zhang, Tong and Gu, Quanquan},
  booktitle={International Conference on Learning Representations},
  volume={2026},
  pages={117176--117210},
  year={2026}
}

@article{hong2026online,
  title={Online KL-Regularized Reinforcement Learning with Function Approximation under Misspecification},
  author={Hong, Haoyang and Wang, Zichen and Gu, Quanquan and Wang, Huazheng},
  journal={arXiv preprint arXiv:2606.06053},
  year={2026}
}

@inproceedings{ayoub2020model,
  title={Model-based reinforcement learning with value-targeted regression},
  author={Ayoub, Alex and Jia, Zeyu and Szepesvari, Csaba and Wang, Mengdi and Yang, Lin},
  booktitle={International Conference on Machine Learning},
  pages={463--474},
  year={2020},
  organization={PMLR}
}

@article{wang2020reinforcement,
  title={Reinforcement learning with general value function approximation: Provably efficient approach via bounded eluder dimension},
  author={Wang, Ruosong and Salakhutdinov, Russ R and Yang, Lin},
  journal={Advances in Neural Information Processing Systems},
  volume={33},
  pages={6123--6135},
  year={2020}
}

@article{ji2026optimal,
  title={On the Optimal Sample Complexity of Offline Multi-Armed Bandits with KL Regularization},
  author={Ji, Kaixuan and Di, Qiwei and Zhao, Heyang and Zhao, Qingyue and Gu, Quanquan},
  journal={arXiv preprint arXiv:2605.02141},
  year={2026}
}

@article{zhao2026fast,
  title={Fast Rates for Offline Contextual Bandits with Forward-KL Regularization under Single-Policy Concentrability},
  author={Zhao, Qingyue and Ji, Kaixuan and Zhao, Heyang and Gu, Quanquan},
  journal={arXiv preprint arXiv:2605.09214},
  year={2026}
}

@article{aminian2026kl,
  title={Kl-regularized rlhf with multiple reference models: Exact solutions and sample complexity},
  author={Aminian, Gholamali and Asadi, Amir R and Shenfeld, Idan and Mroueh, Youssef},
  journal={Advances in Neural Information Processing Systems},
  volume={38},
  pages={101117--101151},
  year={2026}
}

@article{wu2026greedy,
  title={Greedy sampling is provably efficient for RLHF},
  author={Wu, Di and Shi, Chengshuai and Yang, Jing and Shen, Cong},
  journal={Advances in Neural Information Processing Systems},
  volume={38},
  pages={108198--108232},
  year={2026}
}

@article{nayak2025achieving,
  title={Achieving logarithmic regret in kl-regularized zero-sum markov games},
  author={Nayak, Anupam and Yang, Tong and Yagan, Osman and Joshi, Gauri and Chi, Yuejie},
  journal={arXiv preprint arXiv:2510.13060},
  year={2025}
}

@article{wu2025offline,
  title={Offline and online kl-regularized rlhf under differential privacy},
  author={Wu, Yulian and Thareja, Rushil and Vepakomma, Praneeth and Orabona, Francesco},
  journal={arXiv preprint arXiv:2510.13512},
  year={2025}
}

@inproceedings{agarwal2023vo,
  title={VO $ Q $ L: Towards Optimal Regret in Model-free RL with Nonlinear Function Approximation},
  author={Agarwal, Alekh and Jin, Yujia and Zhang, Tong},
  booktitle={The Thirty Sixth Annual Conference on Learning Theory},
  pages={987--1063},
  year={2023},
  organization={PMLR}
}

@article{zhao2024nearly,
  title={A nearly optimal and low-switching algorithm for reinforcement learning with general function approximation},
  author={Zhao, Heyang and He, Jiafan and Gu, Quanquan},
  journal={Advances in Neural Information Processing Systems},
  volume={37},
  pages={94684--94735},
  year={2024}
}

@inproceedings{di2024pessimistic,
  title={Pessimistic nonlinear least-squares value iteration for offline reinforcement learning},
  author={Di, Qiwei and Zhao, Heyang and He, Jiafan and Gu, Quanquan},
  booktitle={International Conference on Learning Representations},
  volume={2024},
  pages={4377--4410},
  year={2024}
}
\bibliographystyle{ims}
\end{document}